\documentclass[11pt, en]{elegantpaper}

\addbibresource[location=local]{reference.bib}

\everymath{\displaystyle}

\newcommand{\ROM}[1]{\uppercase\expandafter{\romannumeral #1}}

\newcommand{\secref}[1]{Section~\ref{#1}}

\newcommand{\tabref}[1]{Table~\ref{#1}}

\newcommand{\defref}[1]{Definition~\ref{#1}}
\newcommand{\aspref}[1]{Assumption~\ref{#1}}
\newcommand{\thmref}[1]{Theorem~\ref{#1}}
\newcommand{\colref}[1]{Corollary~\ref{#1}}
\newcommand{\lemref}[1]{Lemma~\ref{#1}}
\newcommand{\proref}[1]{Proposition~\ref{#1}}

\newenvironment{proofof}[1]{
    \begin{proof}[\normalfont\textbf{Proof of #1}]
        }{
    \end{proof}
}

\DeclareCiteCommand{\mycite}
{\usebibmacro{prenote}}
{\ifciteindex
    {\indexnames{labelname}}
    {}%
    \printnames[first-last,firstinits=true,
        terseinits=true]{labelname}%
    \addcomma\space
    \printfield{prefixnumber}%
    \printfield{labelnumber}}
{\multicitedelim}
{\usebibmacro{postnote}}
\newcommand{\dd}{\mathrm{d}}
\newcommand{\ee}{\mathrm{e}}

\newcommand{\bbE}{\mathbb{E}}
\newcommand{\bbN}{\mathbb{N}}
\newcommand{\bbR}{\mathbb{R}}

\newcommand{\calB}{\mathcal{B}}
\newcommand{\calE}{\mathcal{E}}
\newcommand{\calH}{\mathcal{H}}
\newcommand{\calN}{\mathcal{N}}
\newcommand{\calX}{\mathcal{X}}
\newcommand{\calY}{\mathcal{Y}}

\newcommand{\bsz}{\boldsymbol{z}}

\newcommand{\LE}{\left}
\newcommand{\RI}{\right}

\newcommand{\eqspace}{
    \hphantom{{}={}\!}
}  

\newcommand{\norm}[1]{\LE\| #1 \RI\|}
\newcommand{\inner}[1]{\LE\langle #1 \RI\rangle}

\newcommand{\family}[1]{\LE\{ #1 \RI\}}
\newcommand{\ind}[1]{\mathbf{1}_{\family{#1}}}

\newcommand{\E}[1]{\operatorname{\bbE} \LE[ #1 \RI]}

\newcommand{\sE}[2]{\operatorname{\bbE}_{#2} \LE[ #1 \RI]}

\newcommand{\Tr}[1]{\operatorname{Tr} \LE( #1 \RI)}

\let\oldforall\forall
\renewcommand{\forall}{\mathrel{\oldforall}}

\let\oldexists\exists
\renewcommand{\exists}{\mathrel{\oldexists}}

\newcommand{\frho}{f_{\rho}}

\newcommand{\flam}{f_{\lambda}}
\newcommand{\hflam}{\widehat{f}_{\bsz, \lambda}}

\newcommand{\urho}{u_{\rho}}

\newcommand{\rhoS}{\rho^{\textnormal{S}}}
\newcommand{\rhoT}{\rho^{\textnormal{T}}}

\newcommand{\rhoSX}{\rhoS_{\calX}}
\newcommand{\rhoTX}{\rhoT_{\calX}}

\newcommand{\rhoSx}{\rhoSX(x)}
\newcommand{\rhoTx}{\rhoTX(x)}

\newcommand{\Kx}{K_{x}}
\newcommand{\Kxa}{K_{x}^{\ast}}

\newcommand{\LK}{L_{K}}
\newcommand{\LKlam}{L_{K, \lambda}}
\newcommand{\hLK}{\widehat{L}_{K}}
\newcommand{\hLKlam}{\widehat{L}_{K, \lambda}}

\newcommand{\bsy}{\boldsymbol{y}}

\newcommand{\philam}{g_{\lambda}}

\newcommand{\HS}{\mathrm{HS}}

\newcommand{\wD}{w^{\dagger}}

\newcommand{\LKD}{\LK^{\dagger}}
\newcommand{\hLKD}{\hLK^{\mspace{1mu} \dagger}}
\newcommand{\LKlamD}{\LKlam^{\dagger}}
\newcommand{\hLKlamD}{\hLKlam^{\dagger}}

\newcommand{\hflamD}{\hflam^{\mspace{5mu} \dagger}}

\newcommand{\ranH}[1]{[\calH]^{#1}}

\newcommand{\hSKa}{\widehat{S}_{K}^{\ast}}
\newcommand{\hSKDa}{\widehat{S}_{K}^{\mspace{2mu} \dagger})^{\ast}}

\makeatletter
\newcommand{\biggg}{\bBigg@\thr@@}
\newcommand{\Biggg}{\bBigg@{3.5}}
\makeatother

\newcommand{\neweqref}[2]{\textup{(\hyperref[#1]{\textbf{#2}})}}

\begin{document}

\title{Spectral Algorithms in Misspecified Regression: \\ Convergence under Covariate Shift\( ^{\dagger} \)\footnotetext{\dag~The work is partially supported by National Natural Science Foundation of China [Project No. 12271473, No. U21A20426]. The corresponding author is Zheng-Chu Guo. Email addresses: \href{mailto:12335032@zju.edu.cn}{12335032@zju.edu.cn} (R. R. Liu), \href{mailto:guozc@zju.edu.cn}{guozc@zju.edu.cn} (Z. C. Guo).}}
\author[]{Ren-Rui Liu}
\author[]{Zheng-Chu Guo}
\affil[]{School of Mathematical Sciences, Zhejiang University, Hangzhou 310058, China}
\date{}

\maketitle

\begin{abstract}


    This paper investigates the convergence properties of spectral algorithms—a class of regularization methods originating from inverse problems—under covariate shift. In this setting, the marginal distributions of inputs differ between source and target domains, while the conditional distribution of outputs given inputs remains unchanged. To address this distributional mismatch, we incorporate importance weights, defined as the ratio of target to source densities, into the learning framework. This leads to a weighted spectral algorithm within a nonparametric regression setting in a reproducing kernel Hilbert space (RKHS). More importantly, in contrast to prior work that largely focuses on the well-specified setting, we provide a comprehensive theoretical analysis of the more challenging misspecified case, in which the target function does not belong to the RKHS. Under the assumption of uniformly bounded density ratios, we establish minimax-optimal convergence rates when the target function lies within the RKHS. For scenarios involving unbounded importance weights, we introduce a novel truncation technique that attains near-optimal convergence rates under mild regularity conditions, and we further extend these results to the misspecified regime. By addressing the intertwined challenges of covariate shift and model misspecification, this work extends classical kernel learning theory to more practical scenarios, providing a systematic framework for understanding their interaction.
\end{abstract}

\keywords{Learning Theory, Kernel Methods, Spectral Algorithm, Model Misspecification, Covariate Shift, Inverse Problems}

\section{Introduction}

Supervised learning, a cornerstone of modern machine learning, aims to develop predictive models from labeled training data drawn from a source distribution. Classical statistical learning theory establishes that, under the idealized assumption of identical source and target distributions, basic empirical risk minimization can learn a function that generalizes well to unseen target instances \cite{Vapnik1998StatisticalLT}. However, practical applications frequently violate such distributional stationarity. Temporal variations, sampling biases, or environmental changes often create discrepancies between source and target distributions, a challenge collectively termed \emph{distribution shift} or \emph{dataset shift} \cite{Candela2008DatasetSM}. This phenomenon represents a critical obstacle to robust machine learning deployment.

This work focuses specifically on \emph{covariate shift}, a prevalent form of distribution shift characterized by differing marginal distributions while maintaining identical conditional distributions. Such shifts arise in many practical scenarios. For example, medical datasets collected across different hospitals may exhibit varying demographic compositions (covariate distributions), while diagnostic criteria for individual patients (conditional distributions) remain consistent \cite{Finlayson2021ClinicianDS}. Various techniques have been developed to mitigate the effects of covariate shifts. Among these, importance weighting \cite{Shimodaira2000ImprovingPI}, which adjusts the source data based on the density ratio, appears particularly effective, and there has been a number of work investigating its theoretical properties \cite{Gizewski2022RegularizationUD, Ma2023OptimallyTC, Feng2023TowardsUA, Gogolashvili2023WhenIW, Fan2025SpectralAU}.

We formalize our problem within a regression framework utilizing the square loss. Given a training sample \( \bsz = \family{ (x_{i}, y_{i}) }_{i=1}^{n} \) drawn from a source distribution \( \rhoS(x, y) \), where \( x_{i} \in \calX \), \( y_{i} \in \calY \), the input space \( \calX \) is a separable and compact metric space, and the output space \( \calY \) is a subset of \( \bbR \), our goal is to find a predictor \( f \) that minimizes the expected risk on the target distribution \( \rhoT(x, y) \):
\[
    \calE(f) = \sE{( \, y - f(x) \, )^{2}}{(x, y) \sim \rhoT}.
\]

Under covariate shift, a setting where marginal distributions \( \rhoSx \) and \( \rhoTx \) differ but conditional distribution \( \rho(y \mid x) \) remains identical, the source and target distributions factorize as:
\[
    \rhoS(x, y) = \rho(y \mid x) \rhoSx, \quad \rhoT(x, y) = \rho(y \mid x) \rhoTx
\]
with \( \rhoSX \ne \rhoTX \). In this setting, the optimal predictor over all measurable functions is the regression function
\[
    f_{\rho}(x) = \int y \, \dd \rho(y \mid x).
\]
Nevertheless, learning over the entire space of measurable functions is infeasible in practice, making the specification of a suitable hypothesis space fundamental. Here, we work within reproducing kernel Hilbert spaces (RKHS). Specifically, let \( K \colon \calX \times \calX \to \bbR \) be a Mercer kernel, which is a continuous, symmetric, and positive semi-definite function. This kernel induces an RKHS \( \calH \) with the reproducing property
\[
    f(x) = \inner{ f, K(\cdot, x) }_{\calH},
    \quad \forall f \in \calH.
\]
Additionally, we assume uniform boundedness \( \sup_{x \in \calX} K(x, x) \leq \kappa^{2} \), where \( \kappa \ge 1 \). The regression problem is categorized based on the relationship between \( \frho \) and \( \calH \): it is \emph{well-specified} when \( \frho \in \calH \), and \emph{misspecified} otherwise. In the latter case, misspecification typically implies reduced regularity of \( \frho \), which introduces difficulties in the learning problem.

To approximate \( \frho \) within the RKHS \( \calH \), we minimize the expected risk \( \sE{ ( \, y - f(x) \, )^{2} }{(x, y) \sim \rhoT} \) over \( f \in \calH \). As established in prior work (e.g., Proposition~2 of \cite{Vito2005RiskBR}), this minimization is equivalent to solving the operator equation:
\[
    \LK \, f =\LK \, \frho,
    \quad f \in \calH,
\]
where \( L_K \) is the integral operator defined as
\[
    \LK \colon L^{2}(\calX, \rhoTX) \to L^{2}(\calX, \rhoTX),
    \quad f \mapsto \int_{\calX} f(x) K(\cdot, x) \, \dd \rhoTx.
\]
Note that \( \calH \) continuously embeds into \( L^{2}(\calX, \rhoTX) \), hence \( \LK \) acts on \( f \in \calH \). Given only a finite sample \( \bsz = \family{ (x_{i}, y_{i}) }_{i=1}^{n} \) drawn from \( \rhoS \), the empirical version of this equation is expressed as
\begin{equation}
    \label{eq:empirical}
    \frac{1}{n} \sum_{i=1}^{n} w(x_{i}) f(x_{i}) \, K(\cdot, x_{i})
    = \frac{1}{n} \sum_{i=1}^{n} w(x_{i}) y_{i} \, K(\cdot, x_{i}),
\end{equation}
where we use the Radon-Nikodym derivative (commonly referred to as the density ratio) \( w \) to weight the sample:
\[
    w(x) = \frac{\dd \rhoTX}{\dd \rhoSX}(x).
\]
This strategy is known as \emph{importance weighting} \cite{Shimodaira2000ImprovingPI}, which aligns the source and target distributions in expectation. For notational simplicity, we define the empirical integral operator:
\[
    \hLK \colon \calH \to \calH,
    \quad f \mapsto \frac{1}{n} \sum_{i=1}^{n} w(x_{i}) f(x_{i}) \, K(\cdot, x_{i}),
\]
and the adjoint of the sampling operator:
\[
    \hSKa \colon \bbR^{n} \to \calH,
    \quad \bsy \to \frac{1}{n} \sum_{i=1}^{n} w(x_{i}) y_{i} \, K(\cdot, x_{i}),
    \quad \bsy = (y_{1}, \dots, y_{n})^{\top}.
\]
The empirical equation \eqref{eq:empirical} then simplifies to:
\[
    \hLK \, f = \hSKa \, \bsy.
\]

Since \( \hLK \) is generally non-invertible, regularization is required to solve \eqref{eq:empirical}. To address this, we employ \emph{spectral algorithms}—a class of regularization techniques designed to produce a stable inverse operator. Originally developed for ill-posed linear inverse problems (see, e.g., \cite{Engl2015RegularizationIP}), these algorithms bridge learning theory and inverse problems, rendering them highly effective for regression tasks \cite{Vito2005LearningFE}. Regularization is achieved through filter functions that amplify significant eigencomponents while suppressing less influential ones:
\begin{definition}[Filter functions]
    \label{def:filter}
    A family of functions \( \philam \colon [0, \kappa^2] \to [0, \infty) \), parameterized by \( \lambda > 0 \), constitutes filter functions if:
    \begin{itemize}
        \item[(1)] There exists \( E \geq 0 \) such that for all \( \theta \in [0, 1] \):
              \begin{equation}
                  \label{eq:philam}
                  \sup_{t \in [0, \kappa^2]} t^\theta \philam(t) \leq E \lambda^{\theta - 1}.
              \end{equation}
        \item[(2)] There exist \( \tau \geq 1 \) and \( F \geq 0 \) such that for all \( \theta \in [0, \tau] \):
              \begin{equation}
                  \label{eq:psilam}
                  \sup_{t \in [0, \kappa^2]} t^\theta |1 - t \philam(t)| \leq F \lambda^\theta.
              \end{equation}
    \end{itemize}
\end{definition}
Intuitively, condition \eqref{eq:philam} ensures that the regularized inverse defined by \( \philam(t) \) remains bounded, thereby guaranteeing numerical stability. Condition \eqref{eq:psilam} controls the approximation error by requiring the residual term \( |1 - t \philam(t)| \) to vanish at a prescribed rate as \( \lambda \to 0 \). The parameter \( \tau \), known as the qualification of the regularization method, quantifies the maximum degree of source smoothness that the spectral algorithm can effectively exploit. Specifically, it characterizes the class of target functions for which optimal convergence rates are attainable (see \aspref{asp:source}). Through the filter function framework, spectral algorithms encompass a broad family of regularization methods. Common examples include:
\begin{itemize}
    \item Kernel ridge regression:
          \( \philam^{\text{krr}}(t) = (t + \lambda)^{-1} \), with \( \tau = 1 \) and \( E = F = 1 \); the regularization parameter is \( \lambda \).

          \smallskip

    \item Early-stopped gradient descent:
          \( \philam^{\text{gf}}(t) = t^{-1}(1 - \ee^{-t/\lambda}) \), with arbitrary \( \tau \ge 1 \) and \( E = 1, F = (\tau / \ee)^\tau \); the stopping time is \( 1/\lambda \).

          \smallskip

    \item Spectral cutoff:
          \( \philam^{\text{cut}}(t) = t^{-1} \ind{t \geq \lambda} \), with arbitrary \( \tau \ge 1 \) and \( E = F = 1 \); the cutoff threshold is \( \lambda \).
\end{itemize}
Given a filter function \( \philam \), the weighted spectral algorithm then takes the form:
\begin{equation}
    \label{eq:spectral alg}
    \hflam = \philam(\hLK) \, \hSKa \, \bsy.
\end{equation}
Recently, Gizewski et al. \cite{Gizewski2022RegularizationUD} studied spectral algorithms \eqref{eq:spectral alg} under covariate shift, assuming that the density ratio is uniformly bounded. Ma et al. \cite{Ma2023OptimallyTC} and Feng et al. \cite{Feng2023TowardsUA} investigated kernel ridge regression---a special case of spectral algorithms \eqref{eq:spectral alg}---under the condition that the density ratio is either bounded or unbounded but has finite second moment. Gogolashvili et al. \cite{Gogolashvili2023WhenIW} also studied kernel ridge regression, but employed a more general moment condition on the density ratio, as detailed in \aspref{asp:ratio}. Fan et al. \cite{Fan2025SpectralAU} adopted the moment condition from \cite{Gogolashvili2023WhenIW} and analyzed spectral algorithms \eqref{eq:spectral alg} within covariate shift. All these works are confined to well-specified settings, where \( \frho \in \calH \).
\begin{remark}
    Without covariate shift and importance weighting, the empirical integral operator \( \hLK \) has an operator norm uniformly bounded by \( \kappa^{2} \) under any sampling, ensuring that it is compact, self-adjoint, and positive. Thus, filter functions apply to \( \hLK \) as intended. Under covariate shift, however, the density ratio \( w \) may be unbounded, potentially causing the eigenvalues of \( \hLK \) to exceed the filter's domain \( [0, \kappa^{2}] \), thereby causing the estimator \( \hflam \) ill-defined. Nevertheless, with appropriate moment conditions on \( w \) (\aspref{asp:ratio}), concentration inequalities demonstrate that as \( n \to \infty \), \( \norm{ \hLK } \) can be bounded arbitrarily close to \( \norm{ \LK } \) with high probability (see \lemref{lem:hLK - LK}). Therefore, we may proceed with our analysis under sampling scenarios where \( \philam(\hLK) \) remains well-defined, assuming without loss of generality that \( \norm{ \hLK } \le \kappa^{2} \). We can still establish probabilistic bounds while maintaining mathematical rigor.
\end{remark}

This paper analyzes the approximation ability of \( \hflam \) to \( \frho \) under covariate shift, with a particular emphasis on the case of misspecification of \( \frho \). Our work advances the machine learning theory by making two principal contributions:
\begin{enumerate}
    \item This paper presents a unified theoretical framework for analyzing the convergence of spectral algorithms under covariate shift. This framework establishes explicit connections between the degree of model misspecification (quantified via a source condition parameter) and the severity of the distribution shift (characterized by moment conditions on the density ratio). Specifically, when the density ratio is uniformly bounded, our analysis achieves minimax optimal convergence rates (\colref{col:p=infty}). Moreover, when the underlying kernel possesses favorable embedding properties, we demonstrate that near-optimal convergence rates remain attainable even for scenarios involving unbounded density ratios (\colref{col:alpha_0=1/beta}).

    \item We introduce a truncation scheme specifically designed to handle unbounded density ratios. This scheme enables spectral algorithms to achieve near-optimal convergence rates when the regression problem is well-specified (\colref{col:truncated}). Notably, fast convergence rates for misspecified scenarios are also provided.
\end{enumerate}

The remainder of this paper is organized as follows: \secref{sec:theorem} first states necessary assumptions, then presents our main theorems and corollaries. \secref{sec:discussion} provides a literature review and a comparative analysis with existing works; \secref{sec:proof} proves the main theorems, with auxiliary lemmas deferred to \hyperref[sec:appendix]{Appendix}.

\section{Main Results}
\label{sec:theorem}

In this section, we establish the convergence rates of the weighted spectral algorithm estimator \( \hflam \) to the regression function \( \frho \). We begin by presenting key assumptions for our convergence analysis. The first assumption, adopted from \cite{Gogolashvili2023WhenIW}, characterizes the severity of covariate shift through moment conditions on the density ratio \( w \).
\begin{assumption}[Moment of density ratio]
    \label{asp:ratio}
    Let \( w = \dd \rhoTX / \dd \rhoSX \) denote the density ratio. There exist constants \( p \in [1, \infty] \), \( L > 0 \), and \( \sigma > 0 \) such that the following moment condition holds:
    \[
        \LE( \int_{\calX} w^{p(m-1)}(x) \, \dd \rhoTx \RI)^{1/p}
        \le \frac{1}{2} m! L^{m-2} \sigma^{2},
        \quad \forall m \ge 2.
    \]
    When \( p = \infty \), the left-hand side is defined as \( \norm{ w^{m-1} }_{\infty} \).
\end{assumption}

In \aspref{asp:ratio}, we use \( \norm{ \cdot }_{\infty} \) as shorthand for \( \norm{ \cdot }_{L^{\infty}(\calX, \rhoTX)} \). This assumption quantifies distributional discrepancy by controlling the growth of density ratio moments. When \( w(x) \) is uniformly bounded on \( \calX \), the assumption holds with \( p = \infty \) and \( L = \sigma^{2} = \norm{ w }_{\infty} \). For unbounded density ratios, validity may still hold for finite \( p \in [1, \infty) \), with smaller \( p \) accommodating heavier tails. The extremal case \( p = 1 \) requires finite moments of all orders for the density ratio.

Intuitively, \aspref{asp:ratio} ensures the source distribution does not exhibit excessive deviation from the target distribution, and parameter \( p \) quantifies permissible tail behavior. For instance, if
\[
    2 \rhoTX \LE( \family{ x: w(x) \ge t } \RI) \le \sigma^{2} \exp \LE( -\frac{t^{p}}{L} \RI),
\]
then Assumption \ref{asp:ratio} is satisfied (see Proposition~12 in \cite{Gogolashvili2023WhenIW}).

Before stating the remaining assumptions, we recall some necessary background on kernel theory. Mercer's theorem (see, e.g., Theorem~4.10 in \cite{Cucker2007LearningTA}) states that a Mercer kernel \( K \) admits the following decomposition:
\begin{equation}
    \label{eq:decomp K}
    K(x, x') = \sum_{j \in N} t_{j} \, e_{j}(x) e_{j}(x'),
    \quad N \in \bbN \cup \family{ \infty },
\end{equation}
where \( \family{ t_{j} }_{j \in N} \) and \( \family{ e_{j} }_{j \in N} \) are the eigenvalues and eigenfunctions of \( \LK \). The sequence of eigenvalues \( \family{ t_{j} }_{j \in N} \) is non-negative and non-increasing, and the eigenfunctions \( \family{ e_{j} }_{j \in N} \) forms an orthonormal system in \( L^{2}(\calX, \rhoTX) \). Furthermore, \( \family{ t_{j}^{1/2} \, e_{j} }_{j \in N} \) constitutes an orthonormal basis for the reproducing kernel Hilbert space \( \calH \), and the embedding \( \calH \hookrightarrow L^{2}(\calX, \rhoTX) \) is continuous. For \( \gamma \in (0, 1) \), the scaled system \( \family{ t_{j}^{\gamma/2} \, e_{j} }_{j \in N} \) spans an intermediate space between \( \calH \) and \( L^{2}(\calX, \rhoTX) \), known as an interpolation space \cite{Steinwart2012MercerTG}:
\begin{definition}[Interpolation spaces]
    Let \( \family{ t_{j} }_{j \in N} \) and \( \family{ e_{j} }_{j \in N} \) be as in \eqref{eq:decomp K}. For \( \gamma \in [0, 1] \), the interpolation space \( \ranH{\gamma} \) is defined as:
    \[
        \ranH{\gamma}
        = \operatorname{span} \family{ t_{j}^{\gamma/2} \, e_{j} }_{j \in N}
        = \family{ \sum_{j \in N} f_{j} \, (t_{j}^{\gamma/2} \, e_{j}): f_{j} \in \bbR, \sum_{j \in N} f_{j}^{2} < \infty }.
    \]
    The inner product on \( \ranH{\gamma} \) is given by
    \[
        \inner{ \sum_{j \in N} f_{j} \, (t_{j}^{\gamma/2} \, e_{j}), \sum_{j \in N} g_{j} \, (t_{j}^{\gamma/2} \, e_{j}) }_{\ranH{\gamma}}
        = \sum_{j \in N} f_{j} \, g_{j}.
    \]
\end{definition}

This framework satisfies \( \ranH{1} = \calH \) and \( \ranH{0} \subseteq L^{2}(\calX, \rhoTX) \), with continuous embeddings \( \ranH{\gamma_{2}} \hookrightarrow \ranH{\gamma_{1}} \) for any \( 0 \le \gamma_{1} < \gamma_{2} \le 1 \). These spaces unify our convergence analysis: \( \gamma = 1 \) corresponds to the well-specified case (i.e., \( \frho \in \calH \)), while smaller values of \( \gamma \) accommodate misspecification (i.e., \( \frho \in L^{2}(\calX, \rhoTX) \setminus \calH \)).

Using the decomposition \eqref{eq:decomp K}, the integral operator \( \LK \) admits the following eigen decomposition:
\[
    \LK \colon L^{2}(\calX, \rhoTX) \to L^{2}(\calX, \rhoTX),
    \quad f \mapsto \sum_{j \in N} t_{j} \, \inner{ f, e_{j} }_{\rhoTX} \, e_{j}.
\]
Here, \( \inner{ \cdot, \cdot }_{\rhoTX} \) and \( \norm{ \cdot }_{\rhoTX} \) denote the inner product and norm in \( L^{2}(\calX, \rhoTX) \), respectively. This leads to an equivalent characterization of \( \ranH{\gamma} \) via the operator \( \LK^{\gamma/2} \):
\begin{definition}[Interpolation spaces (equivalent definition)]
    \label{def:interpolation}
    Let \( \LK \) be the integral operator associated with \( K \). For \( \gamma \in [0, 1] \),
    \[
        \ranH{\gamma}
        = \operatorname{ran} \LK^{\gamma/2}
        = \family{ \LK^{\gamma/2} \, f: f \in L^{2}(\calX, \rhoTX) }.
    \]
    The inner product is given by
    \[
        \inner{ \LK^{\gamma/2} \, f, \LK^{\gamma/2} \, g }_{\ranH{\gamma}}
        = \inner{ f, g }_{\rhoTX}.
    \]
\end{definition}

Now, we can characterize the regularity of \( \frho \):
\begin{assumption}[Source condition]
    \label{asp:source}
    Let \( \tau \) be the qualification parameter in \defref{def:filter}. There exists \( r \in (0, \tau] \) such that
    \[
        \frho \in \ranH{2r} \cap L^{\infty}(\calX, \rhoTX),
    \]
    with \( \norm{ \frho }_{\infty} \le G \) and \( \urho \in L^{2}(\calX, \rhoTX) \) satisfying
    \[
        \frho = \LK^{r} \, \urho.
    \]
\end{assumption}

Regarding \aspref{asp:source}, the case \( r \ge 1/2 \) (well-specified) implies \( \frho \in \calH \), while \( r < 1/2 \) (misspecified) requires specialized treatment. The representation \( \frho = \LK^{r} \, \urho \) follows standard practice in the literature (see, e.g., \cite{Cucker2007LearningTA, Caponnetto2007OptimalRR}). The boundedness condition \( \norm{ \frho }_{\infty} \le G \) is commonly employed in misspecification analyses \cite{Guo2018GradientDR, Lin2020OptimalRS}, which additionally guarantees \( |y| \le G \) holds \( \rhoT \)-a.e. While recent work \cite{Zhang2024OptimalityMS} relaxes this via \( L^{p} \)-embedding techniques, their approach does not extend to covariate shift due to potential discrepancies between \( L^{p}(\calX, \rhoTX) \) and \( L^{p}(\calX, \rhoSX) \).

Our next assumption concerns the decay rate of the eigenvalues \( \family{ t_{j} }_{j \in N} \) of \( \LK \), which fundamentally determines the capacity of the induced RKHS \( \calH \). Rapid eigenvalue decay induces a small RKHS \( \calH \), typically enabling faster learning when \( \frho \in \calH \). Conversely, slow decay corresponds to a larger \( \calH \), which may hinder the learning process but increases the chance that \( \frho \) resides within \( \calH \).
\begin{assumption}[Eigenvalue decay rate]
    \label{asp:eigenvalue}
    The eigenvalues \( \family{ t_{j} }_{j \in N} \) of \( \LK \) exhibit a polynomial decay rate of order \( \beta > 1 \). Specifically, there exist positive constants \( c \) and \( C \) such that
    \[
        c \, j^{-\beta} \le t_{j} \le C \, j^{-\beta},
        \quad \forall j \in N.
    \]
\end{assumption}
In \aspref{asp:eigenvalue}, the upper bound quantifies the capacity of \( \calH \) and determines the convergence rates in our main results, while the lower bound ensures that these rates are optimal in the minimax sense. The requirement \( \beta > 1 \) arises from the trace-class property of \( \LK \):
\[
    \sum_{j \in N} t_{j} = \Tr{\LK} \le \kappa^{2} < \infty.
\]
\aspref{asp:eigenvalue} equivalently translates to bounds on the effective dimension \cite{Caponnetto2007OptimalRR}:
\[
    \calN(\lambda) = \Tr{ (\LK + \lambda)^{-1} \LK }.
\]
Specifically, assuming \( t_{j} \asymp j^{-\beta} \) as in \aspref{asp:eigenvalue}, where \( \asymp \) denotes equivalence up to multiplicative constants, then by \lemref{lem:effective dimension}, we obtain
\begin{equation}
    \label{eq:constant effect dim}
    c_{\calN} \lambda^{-1/\beta}
    \le \calN(\lambda)
    \le C_{\calN} \lambda^{-1/\beta}.
\end{equation}

Finally, we examine embedding properties of the interpolation spaces \( \ranH{\gamma} \). The kernel boundedness \( \sup_{x \in \calX} K(x, x) \le \kappa^{2} \) implies that all functions in \( \calH \) satisfy
\[
    \sup_{x \in \calX} |f(x)|
    = \sup_{x \in \calX} \inner{ f, K(\cdot, x) }_{\calH}
    \le \kappa \, \norm{f}_{\calH},
\]
guaranteeing the continuous embedding \( \calH = \ranH{1} \hookrightarrow L^{\infty}(\calX, \rhoTX) \). However, as \( \gamma \) decreases from 1 to 0, \( \ranH{\gamma} \) expands toward \( L^{2}(\calX, \rhoTX) \), and this embedding property weakens \cite{Fischer2020SobolevNL}. Our final assumption identifies the critical transition point:
\begin{assumption}[Embedding index]
    \label{asp:embedding}
    Let \( \beta \) be eigenvalue decay rate defined in \aspref{asp:eigenvalue}. The embedding index of \( \calH \) is \( \alpha_{0} \in [1/\beta, 1) \), defined as
    \[
        \alpha_{0} = \inf_{\alpha \in [1/\beta, 1]} \family{ \alpha: \norm{ \ranH{\alpha} \hookrightarrow L^{\infty}(\calX, \rhoTX) } < \infty }.
    \]
\end{assumption}
Existing research demonstrates that the embedding index concept typically enables more accurate convergence rate analysis when \( \frho \notin \calH \) \cite{Li2022OptimalRR, Zhang2024OptimalityMS}. While \( \alpha_{0} \le 1 \) is immediate (note that we assume \( \alpha_{0} < 1 \)), it can be proved that \( \alpha_{0} \ge 1/\beta \) (see Lemma~10 in \cite{Fischer2020SobolevNL}). Examples where \( \alpha_{0} = 1/\beta \) include kernels with uniformly bounded eigenfunctions, Sobolev kernels on bounded domains with smooth boundaries, and shift-invariant periodic kernels under uniform distributions \cite{Zhang2024OptimalityMS}.

We now present our main results. The following theorem characterizes convergence rates under our general settings:
\begin{theorem}
    \label{thm:importance weighting}
    Under \aspref{asp:ratio} with \( p \in [1, \infty] \), \aspref{asp:source} with \( r \in (0, \tau] \), \aspref{asp:eigenvalue} with \( \beta > 1 \), and \aspref{asp:embedding} with \( \alpha_{0} \in [1/\beta, 1) \), let \( \lambda = n^{-s} \), where
    \[
        s =
        \begin{cases}
            \LE( 2r + \frac{1}{\beta} + \frac{\alpha_{0} + \epsilon - 1/\beta}{p} \RI)^{-1},                    & 2r > \alpha_{0}; \smallskip \\
            \LE( \alpha_{0} + \epsilon + \frac{1}{\beta} + \frac{\alpha_{0} + \epsilon - 1/\beta}{p} \RI)^{-1}, & 2r \le \alpha_{0}.
        \end{cases}
    \]
    When \( 2r > \alpha_{0} \), we take \( \epsilon \in (0, 2r - \alpha_{0}) \); otherwise any \( \epsilon > 0 \) is allowed. Then, for any \( \delta \in (0, 1) \) and
    \begin{equation}
        \label{eq:n}
        \begin{aligned}
            n \ge \max \biggg\{ & \LE( 16 L M_{\alpha_{0} + \epsilon/2}^{2} \log \frac{6}{\delta} \RI)^{\frac{1}{1 - s (\alpha_{0} + \epsilon/2)}},                                                                                                                       \\
                                & \LE( 16 \sigma M_{\alpha_{0} + \epsilon/2}^{1 + \frac{1}{p}} C_{\calN} \log \frac{6}{\delta} \RI)^{\frac{2}{1 - s \LE( \alpha_{0} + \frac{\epsilon}{2} + \frac{1}{\beta} + \frac{\alpha_{0} + \epsilon/2 - 1/\beta}{p} \RI)}} \biggg\},
        \end{aligned}
    \end{equation}
    with \( M_{\alpha_{0} + \epsilon/2} = \norm{ \ranH{\alpha_{0} + \epsilon/2} \hookrightarrow L^{\infty}(\calX, \rhoTX) } \) denoting the embedding norm, the following convergence bound holds with probability exceeding \( 1 - \delta \):
    \[
        \norm{ \hflam - \frho }_{\ranH{\gamma}}
        = O \LE( n^{-s \LE( r - \frac{\gamma}{2} \RI)} \log \frac{6}{\delta} \RI),
        \quad 0 \le \gamma \le \min \family{ 2r, 1 }.
    \]
\end{theorem}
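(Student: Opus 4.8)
The plan is to run the classical bias--variance decomposition for spectral algorithms, carried out entirely in the interpolation norm $\norm{\cdot}_{\ranH{\gamma}}$ and with every probabilistic step routed through \aspref{asp:ratio}. Introduce the population regularized solution $\flam = \philam(\LK)\LK\frho$ and write
\[
  \hflam - \frho = \bigl(\flam - \frho\bigr) + \philam(\hLK)\bigl(\hSKa\bsy - \hLK\frho\bigr) + \bigl(\philam(\hLK)\hLK - \philam(\LK)\LK\bigr)\frho ,
\]
where the first term is the purely deterministic approximation error, $\hSKa\bsy - \hLK\frho = \frac1n\sum_{i} w(x_i)\bigl(y_i - \frho(x_i)\bigr)K(\cdot,x_i)$ is a centered $\calH$-valued average (the sample noise), and the third term --- which I split further as $\philam(\hLK)(\hLK - \LK)\frho + (\philam(\hLK) - \philam(\LK))\LK\frho$ --- measures the fluctuation of the empirical operator. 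Apart from the first, each summand lies in $\calH$ (using that $\hSKa\bsy,\hLK\frho\in\calH$, that $\philam(\hLK)$ maps $\calH$ to $\calH$, and that $\philam(\LK)\LK\frho=\philam(\LK)\LK^{r+1}\urho\in\calH$), so its $\ranH{\gamma}$-norm equals $\norm{\LK^{(1-\gamma)/2}(\cdot)}_{\calH}$ by Mercer's decomposition \eqref{eq:decomp K}, turning the interpolation-norm estimate into an operator computation on $\calH$ with the \emph{bounded} operator $\LK^{(1-\gamma)/2}$; the first term is a pure function of $\LK$ applied to $\LK^{r}\urho$, handled separately below.

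For the deterministic term, $\flam - \frho = -\bigl(I - \LK\philam(\LK)\bigr)\LK^{r}\urho$, so the qualification bound \eqref{eq:psilam} with exponent $r - \gamma/2 \le \tau$ (legitimate since $r\le\tau$ by \aspref{asp:source}) gives at once $\norm{\flam - \frho}_{\ranH{\gamma}} \le F\lambda^{r-\gamma/2}\norm{\urho}_{\rhoTX}$ --- precisely the target rate once $\lambda = n^{-s}$ is inserted. For each of the two stochastic summands I insert $\LKlam = \LK + \lambda$ and $\hLKlam = \hLK + \lambda$ and reduce to three ingredients: (i) the filter estimates \eqref{eq:philam}--\eqref{eq:psilam} applied to $\hLK$, giving bounds of the form $\norm{\hLKlam^{a}\philam(\hLK)} \le C\lambda^{a-1}$ for $a \in [0,1]$ together with the analogous residual bounds; (ii) the well-conditioning inequalities $\norm{\LKlam^{b}\hLKlam^{-b}}, \norm{\hLKlam^{b}\LKlam^{-b}} \le c_{b}$ for $b \in [0,1]$, consequences of the operator concentration $\norm{\LKlam^{-1/2}(\LK - \hLK)\LKlam^{-1/2}} \le \tfrac12$ of \lemref{lem:hLK - LK}; and (iii) Bernstein-type concentration for the rescaled averages $\LKlam^{-1/2}(\hSKa\bsy - \hLK\frho)$ and $\LKlam^{-1/2}(\hLK - \LK)\frho$ in $\calH$-norm. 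Parts (ii)--(iii) are where \aspref{asp:ratio} enters: the relevant weighted second moments are $w$-integrals $\int_{\calX} w^{p'}(\cdots)\dd\rhoTx$, and Hölder's inequality together with the moment condition converts them into $\calN(\lambda) \asymp \lambda^{-1/\beta}$ (via \aspref{asp:eigenvalue} and \eqref{eq:constant effect dim}) times a tail penalty $\lambda^{-(\alpha_0+\epsilon-1/\beta)/p}$; here the pointwise factor is handled by $\norm{\frho}_{\infty} \le G$ (essential once $r<1/2$, when $\frho \notin \calH$) and by $\sup_{x}\norm{\LKlam^{-1/2}K(\cdot,x)}_{\calH} = \norm{\LKlam^{-1/2}}_{\calH \to L^{\infty}(\calX,\rhoTX)} \le C\,M_{\alpha_0+\epsilon/2}\,\lambda^{-(\alpha_0+\epsilon/2)/2}$ supplied by \aspref{asp:embedding}, the slack $\epsilon>0$ standing in for the non-attained infimum $\alpha_0$.

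Collecting exponents, each stochastic summand is $O\bigl(\lambda^{-\gamma/2}\,\lambda^{-\frac12(1/\beta + (\alpha_0+\epsilon-1/\beta)/p)}\,n^{-1/2}\,\log\tfrac6\delta\bigr)$ up to lower-order remainders; matching this with $\lambda^{r-\gamma/2}$ forces $\lambda^{\,2r + 1/\beta + (\alpha_0+\epsilon-1/\beta)/p} \asymp n^{-1}$, which is the exponent $s$ of the first regime $2r > \alpha_0$. The distinction of the second regime $2r \le \alpha_0$ is that the \emph{operator} concentration in (ii) costs an extra factor $\sup_{x}\norm{\LKlam^{-1/2}K(\cdot,x)}_{\calH}^{2}$ in its variance, so that keeping $\norm{\LKlam^{-1/2}(\LK-\hLK)\LKlam^{-1/2}} \le \tfrac12$ valid requires $\lambda \gtrsim n^{-1/(\alpha_0+\epsilon+1/\beta+(\alpha_0+\epsilon-1/\beta)/p)}$; when $2r \le \alpha_0$ the balance-optimal $\lambda$ is smaller than this admissible minimum, so one is forced to take the minimum, which is exactly $n^{-s}$ with the second-regime $s$, and the rate $n^{-s(r-\gamma/2)}$ degrades accordingly. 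The lower bound on $n$ in \eqref{eq:n} is the quantitative form of these admissibility requirements --- its two lines matching, respectively, the ``bounded'' and ``sub-exponential'' parts of the operator Bernstein bound --- and it simultaneously absorbs the lower-order remainders; a union bound over the finitely many concentration events, each charged against the stated probability budget, then yields the conclusion with probability $1-\delta$.

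The main obstacle is step (iii) under unbounded density ratios combined with misspecification: one must establish sharp Bernstein bounds for the $w$-weighted Hilbert-space averages while simultaneously balancing the roles of $p$, the effective dimension $\calN(\lambda)$, and the embedding index $\alpha_0$, and --- since $\hLK$ lives naturally only on $\calH$ whereas for $r < 1/2$ the quantity $\frho$ (hence the target error) lies in the strictly larger space $\ranH{2r}$ --- one must arrange every operator manipulation so that no bare negative power $\LK^{-c}$ with $c>0$ ever acts on a non-$\calH$ element. Getting the exponent bookkeeping exactly right, through the identity $\norm{\cdot}_{\ranH{\gamma}} = \norm{\LK^{(1-\gamma)/2}(\cdot)}_{\calH}$ on $\calH$ and careful fractional-power comparisons between $\hLKlam$ and $\LKlam$, is the technical heart of the argument; with those estimates in hand, the approximation term and the choice of $\lambda$ are routine.
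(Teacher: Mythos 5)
Your overall scaffolding (the bias bound via the qualification condition \eqref{eq:psilam}, Bernstein combined with H\"older against the moments of \( w \), the embedding bound \( \sup_x \norm{\LKlam^{-1/2}K(\cdot,x)}_{\calH} \lesssim M_{\alpha_0+\epsilon/2}\lambda^{-(\alpha_0+\epsilon/2)/2} \), the effective dimension \( \calN(\lambda)\asymp\lambda^{-1/\beta} \), and the two regimes for \( s \) arising from rate-balancing versus the admissibility of the operator concentration) matches the paper's. But your error decomposition contains a term that none of your listed ingredients can bound. You write \( \hflam-\frho \) as the approximation error plus \( \philam(\hLK)(\hSKa\bsy-\hLK\frho) \) plus \( (\philam(\hLK)\hLK-\philam(\LK)\LK)\frho \), and split the last term into \( \philam(\hLK)(\hLK-\LK)\frho+(\philam(\hLK)-\philam(\LK))\LK\frho \). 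The piece \( (\philam(\hLK)-\philam(\LK))\LK\frho \) is fatal for a general spectral algorithm: \defref{def:filter} only constrains \( \philam(t) \) and \( 1-t\philam(t) \) evaluated at a \emph{single} operator, and gives no control of the difference \( \philam(\hLK)-\philam(\LK) \) in terms of \( \hLK-\LK \). For kernel ridge regression a resolvent identity rescues this, but for spectral cutoff (differences of spectral projections near the threshold \( \lambda \)) or gradient flow no such identity exists, and your three ingredients---filter bounds on \( \hLK \), well-conditioning of \( \LKlam^{b}\hLKlam^{-b} \), and Bernstein for the two rescaled averages---never touch this term.

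The paper avoids forming a difference of filters altogether: it compares \( \hflam \) with \( \flam \) and uses the identity \( \hflam-\flam=\philam(\hLK)(\hSKa\bsy-\hLK\flam)-(I-\hLK\philam(\hLK))\flam \), so the operator fluctuation is absorbed into (a) an empirical average \( \hSKa\bsy-\hLK\flam \) whose mean \( \LK(\frho-\flam) \) is small by the approximation bound, and (b) a residual \( (I-\hLK\philam(\hLK)) \) applied to the \( \calH \)-element \( \flam=\philam(\LK)\LK^{r+1}\urho \), handled purely by \eqref{eq:psilam} together with interleaved fractional powers of \( \hLKlam \) and \( \LKlam \) (Propositions \ref{pro:part 1} and \ref{pro:hLKlam^1/2 psilam(hLKlam) flam}). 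You would need to adopt this (or an equivalent) rearrangement for your argument to go through. A secondary omission: for \( r>1 \) the residual term additionally requires the perturbation bound on \( \LK^{r-1/2}-\hLK^{r-1/2} \) (\lemref{lem:A^s - B^s} plus concentration of \( \norm{\hLK-\LK} \) from \lemref{lem:hLK - LK}), which is precisely where the extra constraint \neweqref{eq:R2}{R2} on \( s \) enters; your plan never surfaces this case.
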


The convergence rate in \thmref{thm:importance weighting} is jointly determined by the degree of covariate shift \( p \), the kernel and data distribution properties \( \alpha_{0}, \beta \), and the regularity \( r \) of the regression function. Although we omit the constant independent of \( n \) or \( \delta \), this constant can be obtained by carefully examining the error bounds derived in our proof (see \secref{sec:proof 1}).

Notably, when the density ratio \( w \) is uniformly bounded (indicating mild covariate shift), we obtain fast convergence rates:
\begin{corollary}
    \label{col:p=infty}
    Suppose that \aspref{asp:ratio} holds with \( p = \infty \), implying that the density ratio is uniformly bounded. Under \aspref{asp:source} with \( r \in (0, \tau] \), \aspref{asp:eigenvalue} with \( \beta > 1 \), \aspref{asp:embedding} with \( \alpha_{0} \in [1/\beta, 1) \),
    when \( n \) is sufficiently large satisfying \eqref{eq:n}, setting \( \gamma = 0 \) yields the following simplified convergence rate in \thmref{thm:importance weighting}:
    \begin{itemize}
        \item For \( 2r > \alpha_{0} \),
              \[
                  \norm{ \hflam - \frho }_{\rhoTX} = O \LE( n^{-\frac{r}{2r + 1/\beta}} \log \frac{6}{\delta} \RI);
              \]

        \item For \( 2r \le \alpha_{0} \),
              \[
                  \norm{ \hflam - \frho }_{\rhoTX} = O \LE( n^{-\frac{r}{\alpha_{0} + \epsilon + 1/\beta}} \log \frac{6}{\delta} \RI).
              \]
    \end{itemize}
\end{corollary}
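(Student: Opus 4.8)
The plan is to derive Corollary~\ref{col:p=infty} directly as the \( p = \infty \) specialization of \thmref{thm:importance weighting}, evaluated at \( \gamma = 0 \). First I would observe that, with \( p = \infty \), the moment condition in \aspref{asp:ratio} reads \( \norm{ w^{m-1} }_{\infty} \le \frac{1}{2} m! L^{m-2} \sigma^{2} \) for all \( m \ge 2 \), which holds if and only if \( w \) is essentially bounded (taking \( L = \sigma^{2} = \norm{w}_{\infty} \) suffices, as noted right after \aspref{asp:ratio}). Hence the hypothesis of the corollary---uniform boundedness of the density ratio---is exactly the \( p = \infty \) instance of the theorem, and \aspref{asp:source}, \aspref{asp:eigenvalue}, \aspref{asp:embedding} carry over verbatim. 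I would also record that \( \gamma = 0 \) lies in the admissible range \( 0 \le \gamma \le \min\family{2r,1} \) since \( r > 0 \), and that on \( \ranH{0} \) the interpolation norm agrees with the \( L^{2}(\calX,\rhoTX) \) norm by \defref{def:interpolation}, so that \( \norm{\hflam - \frho}_{\ranH{0}} = \norm{\hflam - \frho}_{\rhoTX} \) (the difference lies in \( \ranH{0} \) because \( \hflam \in \calH \) and \( \frho \in \operatorname{ran}\LK^{r} \subseteq \ranH{0} \)).

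Next I would substitute \( p = \infty \) into the exponent \( s \) of \thmref{thm:importance weighting}. Because the term \( (\alpha_{0} + \epsilon - 1/\beta)/p \) vanishes as \( p \to \infty \), the two branches collapse to \( s = (2r + 1/\beta)^{-1} \) when \( 2r > \alpha_{0} \) and \( s = (\alpha_{0} + \epsilon + 1/\beta)^{-1} \) when \( 2r \le \alpha_{0} \), with the same constraints on \( \epsilon \) (namely \( \epsilon \in (0, 2r - \alpha_{0}) \) in the first case and \( \epsilon > 0 \) arbitrary in the second). The sample-size threshold \eqref{eq:n} simplifies analogously, since the \( 1/p \)-exponents appearing there and the factor \( M_{\alpha_{0}+\epsilon/2}^{1+1/p} \) reduce accordingly; this is precisely the condition imposed in the statement of the corollary, so no new requirement on \( n \) is introduced.

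Finally I would invoke the conclusion of \thmref{thm:importance weighting} with this \( s \) and with \( \gamma = 0 \): for any \( \delta \in (0,1) \) and any \( n \) meeting \eqref{eq:n}, with probability at least \( 1 - \delta \),
\[
    \norm{\hflam - \frho}_{\rhoTX}
    = \norm{\hflam - \frho}_{\ranH{0}}
    = O\LE( n^{-s\LE(r - \gamma/2\RI)} \log\frac{6}{\delta} \RI)
    = O\LE( n^{-sr} \log\frac{6}{\delta} \RI).
\]
Substituting the two expressions for \( s \) gives the exponents \( \frac{r}{2r+1/\beta} \) and \( \frac{r}{\alpha_{0}+\epsilon+1/\beta} \), which are exactly the two claimed bounds. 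Since the corollary is a pure specialization of the theorem, there is no substantive obstacle; the only points requiring a line of care are the equivalence of the \( p = \infty \) reading of \aspref{asp:ratio} (and of the threshold \eqref{eq:n}) with the uniform-boundedness hypothesis, and the identification of \( \norm{\cdot}_{\ranH{0}} \) with \( \norm{\cdot}_{\rhoTX} \) on the relevant functions---both immediate from the definitions above.
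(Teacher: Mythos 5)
Your proposal is correct and follows exactly the route the paper intends: the corollary is obtained by substituting \( p = \infty \) into the exponent \( s \) of \thmref{thm:importance weighting} (so the \( (\alpha_0+\epsilon-1/\beta)/p \) term vanishes) and evaluating the bound at \( \gamma = 0 \), with the identification \( \norm{\cdot}_{\ranH{0}} = \norm{\cdot}_{\rhoTX} \). The auxiliary checks you record (equivalence of the \( p=\infty \) moment condition with uniform boundedness, admissibility of \( \gamma=0 \)) match the paper's remarks following \aspref{asp:ratio}.
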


Zhang et al.~\cite{Zhang2024OptimalityMS} established that for \( r > 0 \), the minimax lower bound in \( L^{2} \)-norm without covariate shift is \( O \LE( n^{-\frac{r}{2r + 1/\beta}} \RI) \). Thus, \colref{col:p=infty} achieves minimax optimality when \( 2r > \alpha_{0} \). Moreover, even with unbounded \( w \), fast convergence rates are attainable when the RKHS possesses favorable embedding properties:
\begin{corollary}
    \label{col:alpha_0=1/beta}
    Suppose that \aspref{asp:eigenvalue} holds with \( \beta > 1 \) and \aspref{asp:embedding} holds with \( \alpha_{0} = 1/\beta \). Under \aspref{asp:ratio} with \( p \in [1, \infty] \) and \aspref{asp:source} with \( r \in (0, \tau] \), setting \( \gamma = 0 \) yields the following simplified convergence rate in \thmref{thm:importance weighting}:
    \begin{itemize}
        \item For \( 2r > 1/\beta \),
              \[
                  \norm{ \hflam - \frho }_{\rhoTX} = O \LE( n^{-\frac{r}{2r + 1/\beta + \epsilon}} \log \frac{6}{\delta} \RI);
              \]

        \item For \( 2r \le 1/\beta \),
              \[
                  \norm{ \hflam - \frho }_{\rhoTX} = O \LE( n^{-\frac{r}{2/\beta + \epsilon}} \log \frac{6}{\delta} \RI),
              \]
    \end{itemize}
    when \( n \) is sufficiently large satisfying \eqref{eq:n}.
\end{corollary}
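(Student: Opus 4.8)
The plan is to obtain \colref{col:alpha_0=1/beta} as a direct specialization of \thmref{thm:importance weighting}, taking the embedding index at its minimal admissible value $\alpha_{0} = 1/\beta$ and evaluating the conclusion of the theorem at $\gamma = 0$. First I would check that the hypotheses of the theorem are satisfied: \aspref{asp:ratio} with some $p \in [1, \infty]$, \aspref{asp:source} with $r \in (0, \tau]$, \aspref{asp:eigenvalue} with $\beta > 1$, and \aspref{asp:embedding} with $\alpha_{0} = 1/\beta$, which lies in $[1/\beta, 1)$ because $\beta > 1$. Since $\gamma = 0 \le \min\{2r, 1\}$, the theorem applies verbatim, the admissible range of $\epsilon$ is inherited (namely $\epsilon \in (0, 2r - 1/\beta)$ when $2r > 1/\beta$ and $\epsilon > 0$ arbitrary otherwise), and the sample-size requirement is exactly \eqref{eq:n} with $M_{\alpha_{0} + \epsilon/2} = M_{1/\beta + \epsilon/2}$.

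The key algebraic step is to substitute $\alpha_{0} = 1/\beta$ into the exponent $s$ of \thmref{thm:importance weighting}. The crucial simplification is $\alpha_{0} + \epsilon - 1/\beta = \epsilon$, so that in the regime $2r > 1/\beta$ one obtains $s = \bigl(2r + 1/\beta + \epsilon/p\bigr)^{-1}$, while in the regime $2r \le 1/\beta$ one obtains $s = \bigl(1/\beta + \epsilon + 1/\beta + \epsilon/p\bigr)^{-1} = \bigl(2/\beta + \epsilon + \epsilon/p\bigr)^{-1}$. Plugging $\gamma = 0$ into the conclusion of the theorem gives $\norm{\hflam - \frho}_{\rhoTX} = O\bigl(n^{-sr}\log\frac{6}{\delta}\bigr)$, that is, the exponent is $r/(2r + 1/\beta + \epsilon/p)$ in the first case and $r/(2/\beta + \epsilon + \epsilon/p)$ in the second.

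Finally I would reconcile these exponents with the stated rates using $p \ge 1$. In the regime $2r > 1/\beta$, from $\epsilon/p \le \epsilon$ we get $2r + 1/\beta + \epsilon/p \le 2r + 1/\beta + \epsilon$, so the exponent produced by the theorem is no smaller than $r/(2r + 1/\beta + \epsilon)$ and, since $n \ge 1$, we have $n^{-r/(2r + 1/\beta + \epsilon/p)} \le n^{-r/(2r + 1/\beta + \epsilon)}$; the stated bound follows immediately. In the regime $2r \le 1/\beta$ the denominator $2/\beta + \epsilon + \epsilon/p$ exceeds $2/\beta + \epsilon$, so a direct comparison is unavailable; instead, given the target $\epsilon > 0$ of the corollary, I would apply the theorem with its free parameter set to $\epsilon' := \frac{p}{p+1}\epsilon > 0$ (admissible since in this regime any positive value is allowed, with the $n$-threshold \eqref{eq:n} read off for that same parameter), for which $\epsilon' + \epsilon'/p = \epsilon$, so that the denominator collapses to exactly $2/\beta + \epsilon$ and the claimed rate is reproduced. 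I do not anticipate any genuine obstacle here: the whole argument is a specialization of the already-established \thmref{thm:importance weighting}, and the only point demanding a little care is this relabeling of $\epsilon$ in the $2r \le 1/\beta$ case, together with noting that the suppressed $O(\cdot)$ constant—depending on $\epsilon$, $p$, $\beta$, $r$, the filter constants $E$, $F$, $\tau$, and the constants in the assumptions—does not depend on $n$ or $\delta$.
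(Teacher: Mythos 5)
Your proposal is correct and follows the same route the paper implicitly takes: the corollary is a direct substitution of $\alpha_{0} = 1/\beta$ into the exponent $s$ of \thmref{thm:importance weighting} evaluated at $\gamma = 0$. Your explicit handling of the residual $\epsilon/p$ term—direct domination in the $2r > 1/\beta$ case and the relabeling $\epsilon' = \frac{p}{p+1}\epsilon$ in the $2r \le 1/\beta$ case—is in fact more careful than the paper, which silently absorbs $\epsilon/p$ into $\epsilon$.
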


\colref{col:alpha_0=1/beta} demonstrates that when the embedding index achieves its optimal value \( \alpha_{0} = 1/\beta \), the spectral algorithm attains near-optimal rates for \( 2r > 1/\beta \), regardless of covariate shift severity (\( \forall p \in [1, \infty] \)).

As shown in \thmref{thm:importance weighting}, when \aspref{asp:ratio} holds with \( p \in [1, \infty) \) (i.e., \( w \) is unbounded) and \( \alpha_{0} > 1/\beta \), standard importance weighting strategy typically yields suboptimal rates. To address this, we employ truncated density ratios to enhance convergence \cite{Ma2023OptimallyTC, Feng2023TowardsUA, Gogolashvili2023WhenIW}. Specifically, for \( D > 0 \), define the truncated density ratio \( \wD(x) = \min \family{ w(x), D } \), which gives rise to the truncated empirical integral operator:
\[
    \hLKD \colon \calH \to \calH,
    \quad f \mapsto \frac{1}{n} \sum_{i=1}^{n} \wD(x_{i}) f(x_{i}) \, K(\cdot, x_{i}).
\]
The resulting estimator is then constructed as:
\[
    \hflamD = \philam(\hLKD)
    \, (\hSKDa \, \bsy,
\]
where the operator \( (\hSKDa \colon \bbR^{n} \to \calH \) is defined by \( (\hSKDa \, \bsy = \frac{1}{n} \sum_{i=1}^{n} \wD(x_{i}) y_{i} \, K(\cdot, x_{i}) \).
\begin{theorem}
    \label{thm:truncated weighting}
    Under \aspref{asp:ratio} with \( p \in [1, \infty) \), \aspref{asp:source} with \( r \in (0, \tau] \), and \aspref{asp:eigenvalue} with \( \beta > 1 \), consider \( m \ge 2 \) and define the truncated density ratio \( \wD(x) = \min \family{ w(x), D } \) with
    \[
        D = n^{\nu},
        \quad \nu = \frac{1}{p(m - 1) + 1}.
    \]
    Set \( \lambda = n^{-s} \), where
    \[
        s =
        \begin{cases}
            \frac{1 - \nu}{2r + 1/\beta},           & 2r > 1;
            \smallskip                                          \\
            \frac{1 - \nu}{1 + \epsilon + 1/\beta}, & 2r \le 1,
        \end{cases}
    \]
    for an arbitrarily small constant \( \epsilon > 0 \). Then, we obtain the following result: for any \( \delta \in (0, 1) \) and
    \begin{equation}
        \label{eq:(T) n}
        \begin{aligned}
            n \ge \max \biggg\{ & \LE( 2 \kappa C_{\calN}^{1/2} \LE( \frac{1}{2} m! L^{m-2} \sigma^{2} \RI)^{p/2} \RI)^{\frac{2}{p (m-1) \cdot \nu - \LE( 1 + \frac{1}{\beta} \RI) s}}, \\
                                & \LE( 32 \kappa^{2} \log \frac{6}{\delta} \RI)^{\frac{1}{1 - \nu - s}},
            \LE( 16 \sqrt{2} \kappa C_{\calN}^{1/2} \log \frac{6}{\delta} \RI)^{\frac{2}{1 - \nu - \LE( 1 + \frac{1}{\beta} \RI) s}} \biggg\},
        \end{aligned}
    \end{equation}
    the following convergence bound holds with probability at least \( 1 - \delta \):
    \[
        \norm{ \hflamD - \frho }_{\ranH{\gamma}}
        = O \LE( n^{-s \LE( r - \frac{\gamma}{2} \RI)} \log \frac{6}{\delta} \RI),
        \quad 0 \le \gamma \le \min \family{ 2r, 1 }.
    \]
\end{theorem}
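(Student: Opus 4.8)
My plan is to run the usual approximation–estimation decomposition for spectral algorithms, but with the truncated empirical operator $\hLKD$ in place of $\hLK$ and with its true mean $\LKD\colon f\mapsto\int_{\calX}\wD(x)f(x)K(\cdot,x)\,\dd\rhoSx$ in place of $\LK$; the new feature relative to \thmref{thm:importance weighting} is that truncation creates a \emph{deterministic} bias $\LK-\LKD$ that must be shown negligible. Since $\frho\in\ranH{2r}$ need not lie in $\calH$ when $r<1/2$, I will measure all errors in $\ranH{\gamma}$ through the isometry $\norm{h}_{\ranH{\gamma}}=\norm{\LK^{(1-\gamma)/2}h}_{\calH}$ for $h\in\calH$, and use that $\frho$ enters the estimator only via the $\calH$-valued quantities $\hLKD\frho$ and $\LKD\frho$. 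Introducing the reference function $\flamD=\philam(\LKD)\LKD\frho\in\calH$, I would split
\[
\hflamD-\frho=\philam(\hLKD)\bigl((\hSKDa\,\bsy-\LKD\frho\bigr)+\bigl[\philam(\hLKD)-\philam(\LKD)\bigr]\LKD\frho+\bigl(\flamD-\frho\bigr),
\]
a stochastic term (conditionally mean-zero summands, since $\E{y\mid x}=\frho(x)$), an operator-perturbation term, and a deterministic bias.

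Next I would fix the high-probability event. The decisive gain from truncation is that the summands $\wD(x_i)y_iK(\cdot,x_i)$ and $\wD(x_i)\,K(\cdot,x_i)\otimes K(\cdot,x_i)$ are bounded by $O(D\kappa^2)$, so Bernstein's inequality for $\calH$-valued and for self-adjoint-operator-valued sums yields, with probability at least $1-\delta$: (a) $\norm{\LKlamD^{-1/2}(\hLKD-\LKD)\LKlamD^{-1/2}}\le1/2$ — the truncated analogue of \lemref{lem:hLK - LK} — whence $\norm{\LKlamD^{1/2}\hLKlamD^{-1/2}}\le\sqrt2$ and its reciprocal; and (b) $\norm{\LKlamD^{-1/2}\bigl((\hSKDa\,\bsy-\LKD\frho\bigr)}\lesssim\bigl(\tfrac{D}{n\sqrt\lambda}+\sqrt{\tfrac{D\,\calN(\lambda)}{n}}\bigr)\log\tfrac6\delta$, the variance proxy being $\le D\,\calN(\lambda)$ because $\norm{\LKlamD^{-1/2}K(\cdot,x)}_{\calH}^2\le\kappa^2/\lambda$ and the $\wD$-reweighted second-moment operator is $\LKD\preceq\LK$. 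The three lower bounds on $n$ in \eqref{eq:(T) n} are exactly what validate (a) and (b). Separately, using $w-\wD=(w-D)_+\le w\,\ind{w>D}$ and Markov's inequality against the $p(m-1)$-th moment of \aspref{asp:ratio}, I get $\norm{\LK-\LKD}\le\kappa^2\,\rhoTX(\family{w>D})\lesssim D^{-p(m-1)}$; with $D=n^{\nu}$ and $\nu=1/(p(m-1)+1)$ this is $O(n^{-(1-\nu)})$, so for large $n$ one has $\LK\preceq2\LKlamD$ (hence $\norm{\LK^{a}\LKlamD^{-a}}\le\sqrt2$ for $a\in[0,1/2]$), $\norm{\LK^r-\LKD^r}\lesssim n^{-\min\{r,1\}(1-\nu)}$, and $\frho\in\operatorname{ran}\LKlamD^{r}$ with norm $\lesssim\norm{\urho}_{\rhoTX}$.

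On this event I would bound the three pieces. For the deterministic bias, $\flamD-\frho=-(I-\philam(\LKD)\LKD)\frho$; writing $\frho=\LKD^r\urho+(\LK^r-\LKD^r)\urho$ and applying \eqref{eq:psilam} with the admissible exponent $r-\gamma/2\in[0,\tau]$ gives $O(\lambda^{r-\gamma/2})$, the remainder being lower order by the truncation-bias estimate. For the stochastic term, $\norm{\cdot}_{\ranH{\gamma}}=\norm{\LK^{(1-\gamma)/2}\,\cdot\,}_{\calH}$; inserting the factors $\LKlamD^{\pm(1-\gamma)/2}$, $\hLKlamD^{\pm(1-\gamma)/2}$, $\hLKlamD^{\pm1/2}$, controlling them by (a) and $\LK\preceq2\LKlamD$, using the filter estimate $\norm{\hLKlamD^{(1-\gamma)/2}\philam(\hLKD)\hLKlamD^{1/2}}\lesssim\lambda^{-\gamma/2}$ from \eqref{eq:philam}, and then (b), yields $\lesssim\lambda^{-\gamma/2}\bigl(\tfrac{D}{n\sqrt\lambda}+\sqrt{\tfrac{D\,\calN(\lambda)}{n}}\bigr)\log\tfrac6\delta$. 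For the operator-perturbation term, I use the truncation-bias estimate to factor $\LKD\frho=\LKlamD^{r+1/2}u$ with $u\in\calH$, $\norm{u}_{\calH}\lesssim\norm{\urho}_{\rhoTX}$, then a filter-perturbation lemma of the form $\norm{\LKlamD^{(1-\gamma)/2}\bigl[\philam(\hLKD)-\philam(\LKD)\bigr]\LKlamD^{r+1/2}}\lesssim\lambda^{r-\gamma/2}\norm{\LKlamD^{-1/2}(\hLKD-\LKD)\LKlamD^{-1/2}}$ together with (a) to get $O(\lambda^{r-\gamma/2})$. Substituting $\lambda=n^{-s}$, $D=n^{\nu}$, and $\calN(\lambda)\lesssim\lambda^{-1/\beta}$ from \eqref{eq:constant effect dim} and checking exponents then shows every piece is $O(n^{-s(r-\gamma/2)}\log\tfrac6\delta)$ for $0\le\gamma\le\min\family{2r,1}$: for $2r>1$ the choice $s=(1-\nu)/(2r+1/\beta)$ balances the variance part of the stochastic term against the bias exactly, while for $2r\le1$ the slack $\epsilon$ in $s=(1-\nu)/(1+\epsilon+1/\beta)$ is precisely what keeps $s<(1-\nu)/(1+1/\beta)$, the condition under which (a) still holds once truncation forces the crude bound $\norm{\LKlamD^{-1/2}K(\cdot,x)}^2\le\kappa^2/\lambda$ — which is why the embedding index $\alpha_0$ of \thmref{thm:importance weighting} is here replaced by $1$. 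Taking the maximum of the resulting sample-size thresholds, namely \eqref{eq:(T) n}, finishes the argument.

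I expect the main obstacle to be the two-sided constraint coupling $D$ to $\lambda$: the truncation level must be large enough that $\norm{\LK-\LKD}\sim D^{-p(m-1)}$ is dominated by the regularization scale $\lambda^{r}$, yet small enough that the stochastic variance proxy $D\,\calN(\lambda)/n$ stays below $\lambda^{2r}$; reconciling these two demands is what pins down $\nu=1/(p(m-1)+1)$ together with the stated values of $s$. A secondary difficulty — the reason for introducing $\flamD$ and the factorization $\LKD\frho=\LKlamD^{r+1/2}u$ rather than manipulating $\frho$ directly — is that in the misspecified regime $r<1/2$ one must never apply an operator defined on $\calH$ to $\frho\notin\calH$, so the source smoothness has to be routed through $\LKD\frho\in\calH$ while keeping the $\ranH{\gamma}$-norm bookkeeping consistent.
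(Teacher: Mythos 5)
Your overall architecture (Bernstein bounds for the truncated operators, the Markov-inequality control of the truncation bias $\norm{\LK-\LKD}$, the effective-dimension bookkeeping, and the exponent balancing that pins down $\nu$ and $s$) matches the paper's in spirit, and your decomposition is algebraically equivalent to the paper's. But there is a genuine gap in how you handle the operator-perturbation term $[\philam(\hLKD)-\philam(\LKD)]\LKD\frho$. The ``filter-perturbation lemma'' you invoke,
\[
\norm{\LKlamD^{(1-\gamma)/2}\bigl[\philam(\hLKD)-\philam(\LKD)\bigr]\LKlamD^{r+1/2}}\lesssim\lambda^{r-\gamma/2}\,\norm{\LKlamD^{-1/2}(\hLKD-\LKD)\LKlamD^{-1/2}},
\]
is not a consequence of \defref{def:filter} and is in fact false for admissible filters: take the spectral cutoff $\philam(t)=t^{-1}\ind{t\ge\lambda}$ and two commuting operators whose common eigenvector has eigenvalues $\lambda-\eta$ and $\lambda+\eta$ respectively; then the left-hand side is of order $\lambda^{r-\gamma/2}$ while the right-hand side is of order $\lambda^{r-\gamma/2}\cdot\eta/\lambda$, which can be made arbitrarily smaller. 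Conditions \eqref{eq:philam}--\eqref{eq:psilam} control $\philam$ and the residual $1-t\philam(t)$ pointwise in $t$, but say nothing about the difference of the filter evaluated at two distinct operators. This is precisely why the paper never forms $\philam(\hLKD)-\philam(\LKD)$: it regroups your stochastic and perturbation terms into $\philam(\hLKD)\bigl((\hSKDa\,\bsy-\hLKD\flam\bigr)$ plus $\bigl(I-\hLKD\,\philam(\hLKD)\bigr)\flam$, so that all filter evaluations occur at the single (empirical) operator $\hLKD$ and \eqref{eq:psilam} applies directly (the paper's terms $J_{3}^{\dagger}$ and $J_{4}^{\dagger}$). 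Repairing your argument essentially forces you back to that decomposition.

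Two secondary points. First, the regrouped residual term requires the case analysis in $r$: for $r>1$ the exponent $r-1/2$ exceeds the Cordes range, so one must compare $\LK^{r-1/2}$ with $(\hLKD)^{r-1/2}$ via \lemref{lem:A^s - B^s} and the concentration bound \lemref{lem:(T) LK - hLKD}; your sketch (including the factorization $\LKD\frho=\LKlamD^{r+1/2}u$ with $\norm{u}_{\calH}\lesssim\norm{\urho}_{\rhoTX}$, which also needs $r\le 1$ for a direct Cordes argument) does not cover this regime. Second, your choice of reference function $\flamD=\philam(\LKD)\LKD\frho$ forces you to measure a function of $\LKD$ in the $\ranH{\gamma}$-norm, which is defined through powers of the non-commuting operator $\LK$; you do have the ingredient $\norm{\LK^{a}\LKlamD^{-a}}\le\sqrt{2}$ to bridge this, but the paper sidesteps the issue entirely by keeping $\flam=\philam(\LK)\LK\frho$, so the approximation error reduces verbatim to \proref{pro:approximation error}.
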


In \thmref{thm:truncated weighting}, we again omit the constant independent of \( n \) or \( \delta \), which can be derived by examining \secref{sec:proof 2}. The following corollary shows that as \( m \) increases, the parameter \( \nu \) converges to 0, allowing the convergence rate to approach arbitrarily close to the minimax optimal rate \( O \LE( n^{-\frac{r - \gamma/2}{2r + 1/\beta}} \RI) \) when \( 2r > 1 \):
\begin{corollary}
    \label{col:truncated}
    Suppose \aspref{asp:ratio} holds with \( p \in [1, \infty) \) and \aspref{asp:eigenvalue} holds with \( \beta > 1 \). Let \( \epsilon > 0 \) be a fixed small constant.
    \begin{itemize}
        \item When \aspref{asp:source} holds with \( 2r > 1 \), select \( m \) sufficiently large such that
              \[
                  \nu = \frac{1}{p(m - 1) + 1} \le \frac{2r + 1/\beta}{r} \epsilon.
              \]
              Then, with the truncation level \( D = n^{\nu} \) and sufficiently large \( n \) satisfying \eqref{eq:(T) n}, evaluating at \( \gamma = 0 \) and \( \gamma = 1 \) yields the simplified convergence rates in \thmref{thm:truncated weighting}:
              \[
                  \LE\{
                  \begin{aligned}
                      \norm{ \hflamD - \frho }_{\rhoTX} & = O \LE( n^{-\frac{r}{2r + 1/\beta} + \epsilon} \log \frac{6}{\delta} \RI),       \\
                      \norm{ \hflamD - \frho }_{\calH}  & = O \LE( n^{-\frac{r - 1/2}{2r + 1/\beta} + \epsilon} \log \frac{6}{\delta} \RI).
                  \end{aligned}
                  \RI.
              \]

        \item When \aspref{asp:source} holds with \( 2r \le 1 \), choose \( m \) sufficiently large to satisfy
              \[
                  \nu = \frac{1}{p(m - 1) + 1} \le \frac{\epsilon^{2}}{r} + \LE( \frac{1 + 1/\beta}{r} - \frac{1}{1 + 1/\beta} \RI) \epsilon,
              \]
              then, assuming \eqref{eq:(T) n} holds, evaluating at \( \gamma = 0 \) yields
              \[
                  \norm{ \hflamD - \frho }_{\rhoTX} = O \LE( n^{-\frac{r}{1 + 1/\beta} + \epsilon} \log \frac{6}{\delta} \RI).
              \]
    \end{itemize}
\end{corollary}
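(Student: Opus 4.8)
The plan is to obtain \colref{col:truncated} as a direct specialization of \thmref{thm:truncated weighting}: the hypotheses \aspref{asp:ratio}, \aspref{asp:source}, \aspref{asp:eigenvalue} and the sample-size requirement \eqref{eq:(T) n} are exactly those of the theorem, so nothing probabilistic needs to be re-established; one only substitutes the prescribed choices of \( m \) (equivalently \( \nu = (p(m-1)+1)^{-1} \)) and \( \lambda = n^{-s} \), and then compares the resulting exponent \( s(r - \gamma/2) \) against the target exponent. I would begin by recording that \( \nu \) decreases monotonically to \( 0 \) as \( m \to \infty \), so whenever the claimed upper bound on \( \nu \) is strictly positive a valid choice of \( m \) exists; verifying this positivity in each branch is part of the argument below. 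The \( \log(6/\delta) \) factor and the constant hidden in \( O(\cdot) \) are unaffected by these manipulations and are simply carried along.

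For the well-specified branch \( 2r > 1 \), \thmref{thm:truncated weighting} gives \( s = (1-\nu)/(2r + 1/\beta) \), hence for any admissible \( \gamma \in [0,1] \),
\[
    s \LE( r - \frac{\gamma}{2} \RI)
    = \frac{r - \gamma/2}{2r + 1/\beta} - \frac{\nu (r - \gamma/2)}{2r + 1/\beta}
    \ge \frac{r - \gamma/2}{2r + 1/\beta} - \frac{\nu r}{2r + 1/\beta},
\]
using \( r - \gamma/2 \le r \). Since the hypothesis \( \nu \le \frac{2r + 1/\beta}{r}\,\epsilon \) forces \( \frac{\nu r}{2r + 1/\beta} \le \epsilon \), we obtain \( s(r - \gamma/2) \ge \frac{r-\gamma/2}{2r+1/\beta} - \epsilon \), whence \( n^{-s(r-\gamma/2)} = O\LE( n^{-\frac{r-\gamma/2}{2r+1/\beta}+\epsilon} \RI) \). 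Evaluating at \( \gamma = 0 \) and \( \gamma = 1 \) — both allowed because \( \min\family{2r,1} = 1 \) — yields the two displayed rates; the single threshold on \( \nu \) suffices even for \( \gamma = 1 \), since \( r - 1/2 < r \) makes that case only less demanding.

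For the misspecified branch \( 2r \le 1 \), \thmref{thm:truncated weighting} gives \( s = (1-\nu)/(1 + \epsilon + 1/\beta) \), so with \( \gamma = 0 \) I would split the target exponent as
\[
    \frac{r}{1+\epsilon+1/\beta}
    = \frac{r}{1+1/\beta} - \frac{r\,\epsilon}{(1+1/\beta)(1+\epsilon+1/\beta)},
\]
which gives
\[
    sr = \frac{r}{1+1/\beta} - \frac{r\,\epsilon}{(1+1/\beta)(1+\epsilon+1/\beta)} - \frac{\nu r}{1+\epsilon+1/\beta}.
\]
Demanding that the two negative terms sum to at least \( -\epsilon \) and clearing denominators turns this into exactly the condition \( \nu \le \frac{\epsilon^{2}}{r} + \LE( \frac{1+1/\beta}{r} - \frac{1}{1+1/\beta} \RI)\epsilon \) stated in the corollary; moreover \( 2r \le 1 \) forces \( r \le 1/2 \), so \( \frac{1+1/\beta}{r} \ge 2(1+1/\beta) > \frac{1}{1+1/\beta} \), the bound on \( \nu \) is strictly positive, and a suitable \( m \) exists. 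It then follows that \( sr \ge \frac{r}{1+1/\beta} - \epsilon \), i.e. \( \norm{\hflamD - \frho}_{\rhoTX} = O\LE( n^{-\frac{r}{1+1/\beta}+\epsilon} \RI) \).

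The only point requiring genuine care — and the one I would flag as the main, if mild, obstacle — is the bookkeeping in the \( 2r \le 1 \) branch: because the regularization exponent there is \( 1 + \epsilon + 1/\beta \) rather than \( 1 + 1/\beta \), one must absorb \emph{both} the intrinsic \( \epsilon \)-loss coming from the exponent \emph{and} the truncation loss \( \nu \) into a single \( \epsilon \) in the final rate, which is precisely why the admissible range for \( \nu \) is quadratic rather than linear in \( \epsilon \). Everything else is a transparent substitution into \thmref{thm:truncated weighting}.
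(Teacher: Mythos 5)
Your proposal is correct and follows exactly the route the paper intends: \colref{col:truncated} is a direct specialization of \thmref{thm:truncated weighting}, and the paper supplies no separate proof beyond the substitution you carry out. Both exponent computations check out — in the \( 2r > 1 \) branch the single condition \( \nu \le \frac{2r+1/\beta}{r}\epsilon \) indeed covers \( \gamma = 1 \) as well since \( r - 1/2 < r \), and in the \( 2r \le 1 \) branch your algebra recovers precisely the stated (quadratic-in-\( \epsilon \)) admissible range for \( \nu \), with the positivity check via \( r \le 1/2 \) being a worthwhile addition.
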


\colref{col:truncated} establishes that in well-specified regression problems, truncation methods successfully achieve near-optimal convergence rates while handling unbounded density ratios.

\section{Related Work and Discussion}
\label{sec:discussion}

Kernel methods offer a powerful nonparametric framework for function approximation in reproducing kernel Hilbert spaces (RKHS). Foundational work by Caponnetto and de Vito \cite{Caponnetto2007OptimalRR} established minimax optimal convergence rates for kernel ridge regression (KRR) in well-specified settings, where the true regression function belongs to the RKHS. These rates depend critically on two parameters: the source condition \( r \), which characterizes the smoothness of the target function; and the eigenvalue decay rate \( \beta \) of the integral operator, which characterizes the capacity of the RKHS. Subsequent research has extended this framework to various settings, including gradient descent \cite{Raskutti2014EarlySN, Cao2024StochasticGD}, robust regression \cite{Guo2018GradientDR, Hu2022EarlySI}, and random feature methods \cite{Rudi2017GeneralizationPL, Li2019TowardsUA}. Through spectral filtering, spectral algorithms generalize KRR to encompass a broader class of regularization families \cite{Gerfo2008SpectralAS}. In well-specified regimes (\( 1/2 \le r \le \tau \)), where \( \tau \) denotes the qualification parameter, these algorithms achieve minimax optimality \cite{Guo2017LearningTD, Blanchard2018OptimalRR, Fan2024SpectralAF}. Subsequent advances address misspecification (\( 0 < r < 1/2 \)), demonstrating that spectral algorithms retain optimality under the condition \( 2r > 1 - 1/\beta \) \cite{Lin2020OptimalRS}. Recently, the embedding index \( \alpha_0 \in [1/\beta, 1] \), introduced by Fischer and Steinwart \cite{Fischer2020SobolevNL}, refines the analysis of RKHS capacity. Building on this, a broader optimality range \( 2r > \alpha_{0} - 1/\beta \) is obtained \cite{Zhang2024OptimalityMS}. Despite these advances, all the aforementioned works remain confined to identical source and target distributions.

Prior analyses of misspecified kernel methods typically yield convergence rates governed by a threshold \( \mathbf{T} \leq 1 \):
\[
    \norm{ \hflam - \frho }_{\rhoTX}
    \le
    \begin{cases}
        O \LE( n^{-\frac{r}{2r + 1/\beta}} \RI),                    & 2r > \mathbf{T};   \\
        O \LE( n^{-\frac{r}{\mathbf{T} + \epsilon + 1/\beta}} \RI), & 2r \le \mathbf{T}.
    \end{cases}
\]
Here, the rate \(O(n^{-\frac{r}{2r + 1/\beta}})\) for \(2r > \mathbf{T}\) is minimax optimal; thus, a smaller \(\mathbf{T}\) enlarges the minimax optimal range. Before the introduction of the embedding index (\aspref{asp:embedding}), the best known threshold was \( \mathbf{T} = 1 - 1/\beta \) \cite{Lin2020OptimalRS}. Recently, Zhang et al. \cite{Zhang2024OptimalityMS} improved this to \( \mathbf{T} = \alpha_{0} - 1/\beta \) using the embedding index. In contrast, our \thmref{thm:importance weighting} achieves \( \mathbf{T} = \alpha_{0} \), which appears weaker by \( 1/\beta \). This gap arises from their reliance on concentration inequalities that require uniform boundedness of empirical operators (e.g., Lemma~32 in \cite{Zhang2024OptimalityMS})---a condition that fails under covariate shift where the density ratio \( w \) may be unbounded. We conjecture that with bounded \( w \), our threshold could similarly reach \( \mathbf{T} = \alpha_{0} - 1/\beta \). As for \thmref{thm:truncated weighting}, the threshold is \( \mathbf{T} = 1 \). Although we incorporate the embedding index \( \alpha_{0} \) in our proof (see \secref{sec:proof 2}), the final result becomes independent of \( \alpha_{0} \). Whether this threshold can be improved to \( \mathbf{T} = \alpha_{0} \) remains an open problem.

For covariate shift adaptation, Shimodaira \cite{Shimodaira2000ImprovingPI} pioneered importance weighting (IW) to correct distributional bias in parametric regression. Their analysis demonstrates that IW compensates for discrepancies induced by model misspecification. Extensions confirm that IW improves convergence rates for parametric models under misspecification \cite{Huang2006CorrectingSS, Wen2014RobustLU}. However, the role of IW in nonparametric regimes is less clear: empirical studies show that the effects of IW gradually attenuate during training on neural networks \cite{Byrd2019WhatEI}, while theoretical analyses also challenge conventional IW paradigms \cite{Xu2021UnderstandingRI}. Even for well-studied kernel methods, analyses under covariate shift remain sparse. To our knowledge, only Gogolashvili et al. \cite{Gogolashvili2023WhenIW} investigate the scenario where the regression function lies outside the RKHS, but their analysis is restricted to KRR, leaving broader spectral algorithms unaddressed; moreover, their work guarantees convergence only to RKHS projections rather than the regression function itself. Finally, to implement IW in practice, one must estimate the density ratio using unlabeled data. Traditional estimation methods require strict boundedness assumptions \cite{Sugiyama2012DensityRE, Nguyen2024RegularizedRN}, while recent advances relax these restrictions through neural networks \cite{Feng2024DeepNQ, Xu2025EstimatingUD}. It is also worth noting that the standard definition \( w = \dd \rhoTX / \dd \rhoSX \) represents only one particular formulation among various weighting paradigms, as systematically cataloged in \cite{Kimura2024ShortSI}.

Compared to existing results, we extend the work of Fan et al. \cite{Fan2025SpectralAU} in two key directions: (1) we incorporate the embedding index to address model misspecification; (2) we generalize their \( L^2 \)-norm convergence results to the norms of interpolation spaces \( \norm{ \cdot }_{\ranH{\gamma}} \), which encompasses both the RKHS norm and the \( L^2 \)-norm as special cases. Our work also relates to Gizewski et al. \cite{Gizewski2022RegularizationUD}, Ma et al. \cite{Ma2023OptimallyTC}, Gogolashvili et al. \cite{Gogolashvili2023WhenIW}, and Feng et al. \cite{Feng2023TowardsUA}. While these studies focus exclusively on well-specified settings, several critical distinctions emerge:
\begin{itemize}
    \item  
    Gizewski et al. \cite{Gizewski2022RegularizationUD} analyze spectral algorithms under covariate shift,  assuming a uniformly bounded density ratio \( w \) and a well-specified model (i.e., \( \frho \in \calH \)). Furthermore, they introduce a framework for estimating the density ratio \( w \), this estimated ratio is then integrated into the spectral algorithm to produce the final estimator. However, their approach requires the restrictive assumption that \( w \) belongs to the RKHS \( \calH \), which implies uniform boundedness of the density ratio.

    \item Ma et al. \cite{Ma2023OptimallyTC} investigate kernel ridge regression---a special case of spectral algorithms---under covariate shift for the well-specified model (i.e., \( \frho \in \calH \)). They study two cases: 1) with a uniformly bounded density ratio, KRR achieves minimax optimal convergence rates (up to logarithmic factors); 2) when the density ratio is unbounded but has finite second moment, a truncated ratio also yields minimax optimal rates (up to logarithmic factors). Feng et al. \cite{Feng2023TowardsUA} extend this analysis to general loss functions beyond squared loss. However, these analyses are limited by their requirement that the eigenfunctions be uniformly bounded---an assumption that is difficult to verify.

    \item Our moment condition on the density ratio is adopted from Gogolashvili et al. \cite{Gogolashvili2023WhenIW}, who also employ ratio truncation to achieve near-optimal convergence rates when the ratio is unbounded. In our work, we extend their kernel ridge regression framework to broader spectral algorithms and establish convergence guarantees under misspecified settings.
\end{itemize}
For better illustration, a comparison with the most relevant works in kernel methods is summarized in \tabref{tab:comparison}.
\begin{table}[t]
    \small
    \centering
    \caption{Comparison with Existing Works in Kernel Methods}
    \label{tab:comparison}
    \begin{tabular}{>{\raggedright\arraybackslash}p{2.5cm}>{\centering\arraybackslash}p{3.5cm}>{\centering\arraybackslash}p{4.5cm}>{\centering\arraybackslash}p{3cm}}
        \toprule
         & \textbf{Spectral Algorithm} & \textbf{Model Misspecification} & \textbf{Covariate Shift} \\
        \midrule
        \cite{Gerfo2008SpectralAS, Guo2017LearningTD, Blanchard2018OptimalRR, Fan2024SpectralAF}
         & \( \checkmark \)
         &
         &                                                                                       \\
        \cite{Ma2023OptimallyTC, Feng2023TowardsUA, Gogolashvili2023WhenIW}
         &
         &
         & \( \checkmark \)                                                                      \\
        \cite{Lin2020OptimalRS, Zhang2024OptimalityMS}
         & \( \checkmark \)
         & \( \checkmark \)
         &                                                                                       \\
        \cite{Gizewski2022RegularizationUD, Fan2025SpectralAU}
         & \( \checkmark \)
         &
         & \( \checkmark \)                                                                      \\
        \textbf{Ours}
         & \( \checkmark \)
         & \( \checkmark \)
         & \( \checkmark \)                                                                      \\
        \bottomrule
    \end{tabular}
\end{table}

\section{Proof of Main Theorems}
\label{sec:proof}

This section presents the proofs of \thmref{thm:importance weighting} and \thmref{thm:truncated weighting}. Our analysis proceeds in three main steps: first, we decompose the estimation error into distinct components; second, we establish individual bounds for each component through auxiliary propositions; and third, we combine these bounds to complete the overall argument. Auxiliary technical results supporting these propositions are deferred to \hyperref[sec:appendix]{Appendix}.

\subsection{Proof of \thmref{thm:importance weighting}}
\label{sec:proof 1}

We begin the proof with an error decomposition: the excess error \( \norm{ \hflam - \frho }_{\ranH{\gamma}} \) can be decomposed into two distinct components:
\[
    \norm{ \hflam - \frho }_{\ranH{\gamma}}
    \le \underbrace{
    \vphantom{\bigg\|}
    \norm{ \hflam - \flam }_{\ranH{\gamma}} }_{\text{estimation error}}
    + \underbrace{
    \vphantom{\bigg\|}
    \norm{ \flam - \frho }_{\ranH{\gamma}} }_{\text{approximation error}},
\]
where
\[
    \flam = \philam(\LK) \, \LK \, \frho.
\]

The approximation error is bounded by the following proposition:
\begin{proposition}
    \label{pro:approximation error}
    Under \aspref{asp:source} with \( r \in (0, \tau] \), the following inequality holds:
    \[
        \norm{ \flam - \frho }_{\ranH{\gamma}}
        \le F \norm{ \urho }_{\rhoTX} \cdot \lambda^{r - \frac{\gamma}{2}},
        \quad 0 \le \gamma \le \min \family{ 2r, 1 }.
    \]
\end{proposition}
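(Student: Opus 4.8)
The plan is to reduce the whole estimate to a scalar supremum over the spectrum of $\LK$ via the functional calculus. First I would rewrite the residual as
\[
    \flam - \frho = \philam(\LK) \, \LK \, \frho - \frho = -\LE( I - \LK \philam(\LK) \RI) \frho,
\]
and substitute the source representation $\frho = \LK^{r} \urho$ supplied by \aspref{asp:source}. Since $I - \LK \philam(\LK)$ and $\LK^{r}$ are both Borel functions of the self-adjoint operator $\LK$, they commute, so
\[
    \flam - \frho = -\LK^{r} \LE( I - \LK \philam(\LK) \RI) \urho.
\]

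Next I would invoke the equivalent description of the interpolation-space norm in \defref{def:interpolation}: for every $h \in L^{2}(\calX, \rhoTX)$ one has $\LK^{\gamma/2} h \in \ranH{\gamma}$ with $\norm{ \LK^{\gamma/2} h }_{\ranH{\gamma}} \le \norm{ h }_{\rhoTX}$ (the map $\LK^{\gamma/2}$ being an isometry from $(\ker \LK^{\gamma/2})^{\perp}$ onto $\ranH{\gamma}$, so in fact nothing is lost beyond projecting off the kernel). Because $\gamma \le \min\family{ 2r, 1 } \le 2r$, the exponent $r - \gamma/2$ is nonnegative, so I can factor out $\LK^{\gamma/2}$:
\[
    \flam - \frho = \LK^{\gamma/2} \LE( -\LK^{r - \gamma/2} \LE( I - \LK \philam(\LK) \RI) \urho \RI),
\]
which both confirms $\flam - \frho \in \ranH{\gamma}$ and yields
\[
    \norm{ \flam - \frho }_{\ranH{\gamma}} \le \norm{ \LK^{r - \gamma/2} \LE( I - \LK \philam(\LK) \RI) } \cdot \norm{ \urho }_{\rhoTX},
\]
where the first factor is the operator norm on $L^{2}(\calX, \rhoTX)$.

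Finally I would bound that operator norm by a supremum over the spectrum. Since $\LK$ is positive and trace class with $\norm{ \LK } \le \Tr{\LK} \le \kappa^{2}$, its spectrum lies in $[0, \kappa^{2}]$, and the spectral theorem gives
\[
    \norm{ \LK^{r - \gamma/2} \LE( I - \LK \philam(\LK) \RI) } = \sup_{t \in [0, \kappa^{2}]} t^{\, r - \gamma/2} \, \LE| 1 - t \philam(t) \RI|.
\]
Writing $\theta = r - \gamma/2$, we have $\theta \ge 0$ and $\theta \le r \le \tau$, hence $\theta \in [0, \tau]$, so condition \eqref{eq:psilam} of \defref{def:filter} applies and bounds the right-hand side by $F \lambda^{\theta} = F \lambda^{r - \gamma/2}$; combining with the previous display completes the proof. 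There is no genuine obstacle here: the only points needing care are verifying $r - \gamma/2 \in [0, \tau]$—which legitimizes both the extraction of $\LK^{\gamma/2}$ and the use of \eqref{eq:psilam}—and observing that the transition from the $L^{2}$-norm to the $\ranH{\gamma}$-norm costs nothing thanks to the isometry built into \defref{def:interpolation}.
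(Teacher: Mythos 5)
Your proof is correct and follows essentially the same route as the paper: both reduce the bound to the operator norm of \( \LK^{r - \gamma/2}\LE(I - \LK\philam(\LK)\RI) \) via the source condition and then invoke the filter property \eqref{eq:psilam} with \( \theta = r - \gamma/2 \in [0,\tau] \). The only (immaterial) difference is bookkeeping: you pass from the \( \ranH{\gamma} \)-norm to the \( L^{2} \)-norm by factoring out \( \LK^{\gamma/2} \), whereas the paper passes to the \( \calH \)-norm via \( \LK^{(1-\gamma)/2} \) and uses \( \norm{\LK^{1/2}\urho}_{\calH} = \norm{\urho}_{\rhoTX} \); these are the same isometry viewed from two ends.
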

\begin{proof}
    By the definition of \( \flam \), the approximation error can be expressed as:
    \begin{align*}
        \norm{ \flam - \frho }_{\ranH{\gamma}}
         & = \norm{ \LK^{\frac{1 - \gamma}{2}} \, (\flam - \frho) }_{\calH}
        = \norm{ \LK^{\frac{1 - \gamma}{2}} \, (\philam(\LK) \, \LK \, \frho - \frho) }_{\calH}         \\
         & = \norm{ \LK^{\frac{1 - \gamma}{2}} \, ( \, I - \LK \, \philam(\LK) \, ) \, \frho }_{\calH}.
    \end{align*}
    Applying \aspref{asp:source}, which assumes \( \frho = \LK^{r} \, \urho \) with \( r \in (0, \tau] \) and \( \urho \in L^{2}(\calX, \rhoTX) \), we obtain:
    \begin{align*}
         & \eqspace \norm{ \LK^{\frac{1 - \gamma}{2}} \, ( \, I - \LK \, \philam(\LK) \, ) \, \frho }_{\calH}                     \\
         & = \norm{ \LK^{\frac{1 - \gamma}{2}} \, ( \, I - \LK \, \philam(\LK) \, ) \, \LK^{r} \, \urho }_{\calH}
        = \norm{ \LK^{r - \frac{\gamma}{2}} \, ( \, I - \LK \, \philam(\LK) \, ) \, \LK^{1/2} \, \urho }_{\calH}                  \\
         & \le \norm{ \LK^{r - \frac{\gamma}{2}} \, ( \, I - \LK \, \philam(\LK) \, ) } \cdot \norm{ \LK^{1/2} \, \urho }_{\calH}
        = \norm{ \LK^{r - \frac{\gamma}{2}} \, ( \, I - \LK \, \philam(\LK) \, ) } \cdot \norm{ \urho }_{\rhoTX},
    \end{align*}
    where \( \norm{ \cdot } \) denotes the operator norm on \( \calH \). Using the filter function property \eqref{eq:psilam} and \( 0 \le \gamma \le \min \family{ 2r, 1 } \), we bound the operator norm term:
    \[
        \norm{ \LK^{r - \frac{\gamma}{2}} \, ( \, I - \LK \, \philam(\LK) \, ) }
        \le \sup_{t \in [0, \kappa^{2}]} t^{r - \frac{\gamma}{2}} |1 - t \philam(t)|
        \le F \lambda^{r - \frac{\gamma}{2}}.
    \]
    Combining these results yields the desired bound:
    \[
        \norm{ \flam - \frho }_{\ranH{\gamma}} \le F \norm{ \urho }_{\rhoTX} \cdot \lambda^{r - \frac{\gamma}{2}}. \qedhere
    \]
\end{proof}

As shown in \proref{pro:approximation error}, the approximation error converges at the rate \( O(\lambda^{r - \frac{\gamma}{2}}) \). The following proposition establishes that the estimation error \( \norm{ \hflam - \flam }_{\ranH{\gamma}} \) decays at the same rate under appropriate conditions.

\begin{proposition}
    \label{pro:estimation error}
    Suppose that \aspref{asp:ratio} holds with \( p \in [1, \infty] \), \aspref{asp:source} holds with \( r \in (0, \tau] \), \aspref{asp:eigenvalue} holds with \( \beta > 1 \), and \aspref{asp:embedding} holds with \( \alpha_{0} \in [1/\beta, 1) \). Let \( \lambda = n^{-s} \), where the exponent \( s \) satisfies:
    \begin{enumerate}
        \item \( s \cdot \max \family{ \alpha, \frac{\alpha}{2} + r, \alpha + \frac{1}{\beta} + \frac{\alpha - 1/\beta}{p}, 2 r + \frac{1}{\beta} + \frac{\alpha - 1/\beta}{p} } < 1 \); \hfill \label{eq:R1} \textup{(\textbf{R1})}\medskip

        \item \( s \le
              \begin{cases}
                  1/2,              & r \in ( 1, 3/2 ]\textit{;}
                  \smallskip                                     \\
                  \frac{1}{2r - 1}, & r > 3/2\textit{.}
              \end{cases} \) \hfill \label{eq:R2} \textup{(\textbf{R2})}
    \end{enumerate}
    The parameter \( \alpha \) is chosen as follows: if \( 2r \le \alpha_{0} \), then \( \alpha \in (\alpha_{0}, 1] \) is arbitrary; if \( 2r > \alpha_{0} \), then \( \alpha \in (\alpha_{0}, \min \family{ 2r, 1 }] \). Then, for any \( \delta \in (0, 1) \) and
    \begin{equation}
        \label{eq:S1} \tag{\textbf{S1}}
        \begin{aligned}
            n \ge \max \biggg\{ & \LE( 16 L M_{\alpha}^{2} \log \frac{6}{\delta} \RI)^{\frac{1}{1 - s \alpha}},                                                                                               \\
                                & \LE( 16 \sigma M_{\alpha}^{1 + \frac{1}{p}} C_{\calN} \log \frac{6}{\delta} \RI)^{\frac{2}{1 - s \LE( \alpha + \frac{1}{\beta} + \frac{\alpha - 1/\beta}{p} \RI)}} \biggg\}
        \end{aligned}
    \end{equation}
    with \( M_{\alpha} = \norm{ \ranH{\alpha} \hookrightarrow L^{\infty}(\calX, \rhoTX) } \), the following bound holds with probability at least \( 1 - \delta \):
    \[
        \norm{ \hflam - \flam }_{\ranH{\gamma}}
        = O \LE( \lambda^{r - \frac{\gamma}{2}} \log \frac{6}{\delta} \RI),
        \quad 0 \le \gamma \le \min \family{ 2r, 1 }.
    \]
\end{proposition}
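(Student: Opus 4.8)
The plan is to bound the estimation error by an operator–perturbation argument that isolates a deterministic part, handled by the filter-function calculus, from a stochastic part, handled by a Bernstein inequality adapted to a possibly unbounded density ratio. First I would pass to the $\calH$-norm via the isometry $\norm{f}_{\ranH{\gamma}} = \norm{\LK^{(1-\gamma)/2} f}_{\calH}$ of \defref{def:interpolation} and split
\[
    \hflam - \flam = \philam(\hLK)\LE( \hSKa\,\bsy - \hLK\frho \RI) + \LE( \philam(\hLK)\hLK - \philam(\LK)\LK \RI)\frho ,
\]
observing that $\hLK\frho = \frac{1}{n}\sum_{i} w(x_i)\frho(x_i) K(\cdot,x_i)$ is the conditional mean of $\hSKa\,\bsy$ given the design, so the first summand is an average of centered $\calH$-valued variables of norm at most $2G\kappa\, w(x_i)$ (here $\norm{\frho}_{\infty}\le G$, hence $|y|\le G$ a.s., is used).

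Introducing $\LKlam = \LK + \lambda$ and $\hLKlam = \hLK + \lambda$ and inserting them between the factors, submultiplicativity of the operator norm reduces both summands to products of: (i) operator ratios, chiefly $\norm{\LKlam^{1/2}\hLKlam^{-1/2}}$, $\norm{\hLKlam^{1/2}\LKlam^{-1/2}}$ and $\norm{\LKlam^{-1/2}(\LK-\hLK)\LKlam^{-1/2}}$; (ii) the noise atom $\norm{\LKlam^{-1/2}( \hSKa\,\bsy - \hLK\frho )}_{\calH}$; and (iii) deterministic factors of the form $\norm{\hLKlam^{a}\philam(\hLK)\hLKlam^{b}}$ and $\norm{\LKlam^{r-\gamma/2}(I-\LK\philam(\LK))}$. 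The factors in (iii) are dispatched exactly as in \proref{pro:approximation error}: \eqref{eq:philam} gives $\norm{\hLKlam^{(1-\gamma)/2}\philam(\hLK)\hLKlam^{1/2}}\le 2E\lambda^{-\gamma/2}$, and \eqref{eq:psilam} together with $0\le r-\gamma/2\le\tau$ (which uses $\gamma\le\min\family{2r,1}$) gives $\norm{\LKlam^{r-\gamma/2}(I-\LK\philam(\LK))}\le F\lambda^{r-\gamma/2}$. For the second summand I would additionally substitute $\frho = \LK^{r}\urho$ from \aspref{asp:source} and compare $I-\LK\philam(\LK)$ with $I-\hLK\philam(\hLK)$ through a perturbation identity, again reducing the comparison to the ratios in (i).

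The crux, which I expect to be the main obstacle, is controlling (i) and (ii) in probability \emph{without} a uniform bound on $w$. Instead of the usual $\kappa$-based operator Bernstein inequality I would use its moment form fed by \aspref{asp:ratio}: for a weighted summand $w(x)\,\eta(x)\,\LKlam^{-1/2}K(\cdot,x)$ with bounded $\eta$ (and likewise for the rank-one operators entering the ratios), one bounds the $m$-th moment by pulling one power of $w$ into the change of measure to $\rhoTX$, applying Hölder over $\rhoTX$, and invoking $(\int_{\calX} w^{p(m-1)}(x)\,\dd\rhoTx)^{1/p}\le\frac{1}{2}m!L^{m-2}\sigma^2$. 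The resulting variance then inherits the effective dimension $\calN(\lambda)\le C_{\calN}\lambda^{-1/\beta}$ (\lemref{lem:effective dimension}), while the pointwise size is controlled through the Mercer-type embedding $\ranH{\alpha}\hookrightarrow L^{\infty}(\calX,\rhoTX)$, finite because $\alpha>\alpha_0$, which yields $\sup_{x}\norm{\LKlam^{-1/2}K(\cdot,x)}_{\calH}^{2}$ bounded by a constant multiple of $M_{\alpha}^{2}\lambda^{-\alpha}$; the net variance proxy is then of order $n^{-1}\lambda^{-1/\beta-(\alpha-1/\beta)/p}$. Bernstein's inequality then shows that, once $n$ obeys \neweqref{eq:S1}{S1}, with probability at least $1-\delta$ we have $\norm{\LKlam^{-1/2}(\LK-\hLK)\LKlam^{-1/2}}\le 1/2$—so the ratios in (i) are at most $\sqrt{2}$ by \lemref{lem:hLK - LK} and the running convention $\norm{\hLK}\le\kappa^{2}$—and $\norm{\LKlam^{-1/2}(\hSKa\,\bsy-\hLK\frho)}_{\calH} = O(\lambda^{r}\log\frac{6}{\delta})$. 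Condition \neweqref{eq:R1}{R1} is exactly what makes the exponents $2r+\frac{1}{\beta}+\frac{\alpha-1/\beta}{p}$, $\alpha+\frac{1}{\beta}+\frac{\alpha-1/\beta}{p}$, $\frac{\alpha}{2}+r$ and $\alpha$ that surface in these thresholds admissible under $\lambda=n^{-s}$.

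Multiplying the pieces yields $\norm{\hflam-\flam}_{\ranH{\gamma}} = O(\lambda^{r-\gamma/2}\log\frac{6}{\delta})$ whenever $r\le 1$. For the saturation regime $r>1$ the substitution $\frho=\LK^{r}\urho$ forces powers $\LKlam^{-a}\hLK^{a}$ with $a>1/2$, which are not uniformly bounded, so I would run an iterative bootstrap: peel one factor of $\hLKlam^{1/2}$ at a time, re-express it through $\LKlam^{1/2}$ plus a perturbation, and re-apply the basic bound; the roughly $\lceil r-1/2\rceil$ extra contributions this produces are precisely what is kept small by condition \neweqref{eq:R2}{R2} ($s\le 1/2$ up to $r=3/2$, tightening to $s\le(2r-1)^{-1}$ beyond), and the step is vacuous for $r\le 1$. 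Throughout, the most delicate bookkeeping is tracking the constants in the moment-Bernstein step so that the sample-size requirement reduces exactly to \neweqref{eq:S1}{S1} while the exponent of $\lambda$ comes out to be precisely $r-\gamma/2$.
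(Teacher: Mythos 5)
Your treatment of the stochastic part is essentially the paper's: center the weighted sum, pull one power of $w$ into a change of measure to $\rhoTX$, apply H\"older against \aspref{asp:ratio}, control the sup via $M_{\alpha}\lambda^{-\alpha/2}$ and the variance via $\calN(\lambda)$, and feed the result into the vector-valued Bernstein inequality; the exponents you list do reproduce \neweqref{eq:R1}{R1} and \neweqref{eq:S1}{S1}. However, there is a genuine gap in your handling of the second summand $\bigl(\philam(\hLK)\hLK-\philam(\LK)\LK\bigr)\frho$. For a general filter function there is \emph{no} perturbation identity comparing $I-\hLK\,\philam(\hLK)$ with $I-\LK\,\philam(\LK)$ (this exists only for KRR via the resolvent identity), so "reducing the comparison to the operator ratios" is not a step you can actually carry out. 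The paper avoids this entirely by centering the noise at $\hLK\,\flam$ rather than $\hLK\,\frho$, so that the residual term becomes $\bigl(I-\hLK\,\philam(\hLK)\bigr)\flam$ with $\flam=\philam(\LK)\,\LK^{r+1}\urho\in\calH$; it is then bounded by applying the qualification property \eqref{eq:psilam} to $\hLK$ alone and exchanging powers of $\LK$ and $\hLK$ via Cordes' inequality (\lemref{lem:cordes}) for $r\le 1$, and via $\norm{\LK^{r-1/2}-\hLK^{r-1/2}}$ together with \lemref{lem:A^s - B^s} and \lemref{lem:hLK - LK} for $r>1$. Your centering at $\frho$ additionally produces the term $\philam(\hLK)\,\hLK\,(\frho-\flam)$, which you never identify; a naive bound on it gives only $O(\lambda^{r-1/2})$, not $O(\lambda^{r-\gamma/2})$, so it needs its own insert-and-ratio argument.

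Two further points. First, your "iterative bootstrap, peeling one factor of $\hLKlam^{1/2}$ at a time" for $r>1$ is not how \neweqref{eq:R2}{R2} arises: in the paper the saturation regime is handled by a \emph{single} application of the operator-power Lipschitz bound, yielding a term $\lambda^{1/2}n^{-(\min\{2r,3\}-1)/4}$, and \neweqref{eq:R2}{R2} is exactly the condition making this $O(\lambda^{r})$. Second, the claim that the operator ratios are $\le\sqrt{2}$ "by \lemref{lem:hLK - LK}" is misattributed; that bound comes from $\norm{\LKlam^{-1/2}(\LK-\hLK)\LKlam^{-1/2}}\le 1/2$ (\lemref{lem:LKlam^-1/2 (LK - hLK) LKlam^-1/2}) followed by a Neumann-series expansion, which is where the embedding index and \eqref{eq:S1} actually enter. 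Your proposal captures the probabilistic machinery correctly but does not contain a workable plan for the deterministic residual, which is the part of the proof specific to general spectral algorithms.
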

To prove \proref{pro:estimation error}, we decompose the estimation error \( \norm{ \hflam - \flam }_{\ranH{\gamma}} \) into several components. For notational convenience, define \( \LKlam = \LK + \lambda I \) and \( \hLKlam = \hLK + \lambda I \). The estimation error can be written as:
\begin{equation}
    \label{eq:decomp 1}
    \begin{aligned}
        \norm{ \hflam - \flam }_{\ranH{\gamma}}
         & = \norm{ \LK^{\frac{1 - \gamma}{2}} \, (\hflam - \flam) }_{\calH} \\
         & = \norm{
            \LK^{\frac{1 - \gamma}{2}} \, \LKlam^{-1/2}
            \circ \LKlam^{1/2} \, \hLKlam^{-1/2}
            \circ \hLKlam^{1/2} \, (\hflam - \flam)
        }_{\calH}                                                            \\
         & \le \norm{ \LK^{\frac{1 - \gamma}{2}} \, \LKlam^{-1/2} }
        \cdot \norm{ \LKlam^{1/2} \, \hLKlam^{-1/2} }
        \cdot \norm{ \hLKlam^{1/2} \, (\hflam - \flam) }_{\calH},
    \end{aligned}
\end{equation}
where \( \circ \) denotes operator composition. Furthermore, by the definition of \( \hflam \), the third term \( \norm{ \hLKlam^{1/2} \, (\hflam - \flam) }_{\calH} \) in \eqref{eq:decomp 1} can be decomposed as
\begin{equation}
    \label{eq:decomp 2}
    \begin{aligned}
         & \eqspace \norm{ \hLKlam^{1/2} \, (\hflam - \flam) }_{\calH}                                                                                         \\
         & = \norm{ \hLKlam^{1/2} \, (\philam(\hLK) \, \hSKa \, \bsy - \flam) }_{\calH}                                                                        \\
         & = \norm{ \hLKlam^{1/2} \, (\philam(\hLK) \, \hSKa \, \bsy - \LE( \hLK \, \philam(\hLK) + \LE( I - \hLK \, \philam(\hLK) \RI) \RI) \flam ) }_{\calH} \\
         & \le \norm{ \hLKlam^{1/2} \, \philam(\hLK) \, (\hSKa \, \bsy - \hLK \, \flam) }_{\calH}
        + \norm{ \hLKlam^{1/2} \LE( I - \hLK \, \philam(\hLK) \RI) \flam }_{\calH}.
    \end{aligned}
\end{equation}
Combining \eqref{eq:decomp 1} and \eqref{eq:decomp 2} yields the overall error bound:
\begin{equation}
    \label{eq:J}
    \begin{aligned}
        \norm{ \hflam - \flam }_{\ranH{\gamma}}
         & \le \norm{ \LK^{\frac{1 - \gamma}{2}} \, \LKlam^{-1/2} }
        \cdot \norm{ \LKlam^{1/2} \, \hLKlam^{-1/2} }                                                             \\
         & \eqspace \cdot \LE( \norm{ \hLKlam^{1/2} \, \philam(\hLK) \, (\hSKa \, \bsy - \hLK \, \flam) }_{\calH}
        + \norm{ \hLKlam^{1/2} \LE( I - \hLK \, \philam(\hLK) \RI) \flam }_{\calH} \RI)                           \\
         & =J_{1} \cdot J_{2} \cdot (J_{3} + J_{4}),
    \end{aligned}
\end{equation}
where:
\begin{alignat*}{3}
    J_{1} & = \norm{ \LK^{\frac{1 - \gamma}{2}} \, \LKlam^{-1/2} },
          &                                                                                       & \quad J_{2} &  & = \norm{ \LKlam^{1/2} \, \hLKlam^{-1/2} },                                  \\
    J_{3} & = \norm{ \hLKlam^{1/2} \, \philam(\hLK) \, (\hSKa \, \bsy - \hLK \, \flam) }_{\calH},
          &                                                                                       & \quad J_{4} &  & = \norm{ \hLKlam^{1/2} \LE( I - \hLK \, \philam(\hLK) \RI) \flam }_{\calH}.
\end{alignat*}

We bound the estimation error \( \norm{ \hflam - \flam }_{\ranH{\gamma}} \) by separately estimating the terms \( J_{1} \), \( J_{2} \), \( J_{3} \) and \( J_{4} \) in \eqref{eq:J}. The term \( J_{1} = \norm{ \LK^{\frac{1 - \gamma}{2}} \, \LKlam^{-1/2} } \) is bounded using \lemref{lem:cordes} and \lemref{lem:sup fraction}. The bound for the term \( J_{2} = \norm{ \LKlam^{1/2} \, \hLKlam^{-1/2} } \) in \eqref{eq:J} is established by the following proposition.
\begin{proposition}
    \label{pro:LKlam^1/2 hLKlam^-1/2}
    Suppose that \aspref{asp:ratio} holds with \( p \in [1, \infty] \), and \( \calH \) has embedding index \( \alpha_{0} < 1 \). For any \( \alpha \in (\alpha_{0}, 1] \) and \( \delta \in (0, 1) \), if \( n \) and \( \lambda \) satisfy
    \[
        4 \LE( \frac{\tilde{L}_{1}}{n} + \frac{\tilde{\sigma}_{1}}{\sqrt{n}} \RI) \log \frac{6}{\delta}
        \le \frac{1}{2},
    \]
    where
    \[
        \tilde{L}_{1}
        = L M_{\alpha}^{2} \cdot \lambda^{-\alpha},
        \quad \tilde{\sigma}_{1}
        = \sigma M_{\alpha}^{1 + \frac{1}{p}} \cdot \lambda^{-\frac{1 + 1/p}{2} \alpha} \calN^{\frac{1 - 1/p}{2}}(\lambda),
        \quad M_{\alpha}
        = \norm{ \ranH{\alpha} \hookrightarrow L^{\infty}(\calX, \rhoTX) },
    \]
    then with probability at least \( 1 - \delta/3 \), we have
    \[
        J_{2} = \norm{ \LKlam^{1/2} \, \hLKlam^{-1/2} } \le \sqrt{2}.
    \]
\end{proposition}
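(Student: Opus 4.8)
The plan is to reduce the bound on $J_{2}$ to a concentration estimate for the self-adjoint operator $T := \LKlam^{-1/2}(\LK - \hLK)\LKlam^{-1/2}$ on $\calH$. Since $\hLK$ is a sum of positive rank-one operators, it is positive, so $\hLKlam = \hLK + \lambda I$ is boundedly invertible; the factorization $\hLKlam = \LKlam - (\LK - \hLK) = \LKlam^{1/2}(I - T)\LKlam^{1/2}$ then shows that on the event $\norm{T} < 1$ one has $\hLKlam^{-1} = \LKlam^{-1/2}(I - T)^{-1}\LKlam^{-1/2}$, whence
\[
    J_{2}^{2} = \norm{\LKlam^{1/2}\hLKlam^{-1/2}}^{2} = \norm{\LKlam^{1/2}\hLKlam^{-1}\LKlam^{1/2}} = \norm{(I - T)^{-1}} \le \frac{1}{1 - \norm{T}}.
\]
So it suffices to prove $\norm{T}\le 1/2$ with probability at least $1 - \delta/3$, which yields $J_{2}\le\sqrt{2}$.

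To this end I would write $T$ as a centered i.i.d.\ sum. Because $w\,\dd\rhoSX = \dd\rhoTX$, the importance-weighting identity $\sE{w(x)\inner{\cdot,\Kx}_{\calH}\Kx}{x\sim\rhoSX} = \LK$ holds, so $T = \frac{1}{n}\sum_{i=1}^{n}\zeta_{i}$ with $\zeta_{i} = B_{x_{i}} - \sE{B_{x}}{x\sim\rhoSX}$, where $B_{x}$ is the positive rank-one operator $f\mapsto w(x)\inner{f,\LKlam^{-1/2}\Kx}_{\calH}\LKlam^{-1/2}\Kx$. The deterministic key input is the embedding-index bound on $\norm{\LKlam^{-1/2}\Kx}_{\calH}$: expanding $\Kx$ in the Mercer basis gives $\norm{\LKlam^{-1/2}\Kx}_{\calH}^{2} = \sum_{j}\frac{t_{j}}{t_{j}+\lambda}e_{j}(x)^{2}$, and writing $\frac{t_{j}}{t_{j}+\lambda} = \frac{t_{j}^{1-\alpha}}{t_{j}+\lambda}\,t_{j}^{\alpha}$, \lemref{lem:sup fraction} (which bounds $\sup_{t}\frac{t^{1-\alpha}}{t+\lambda}$ by $\lambda^{-\alpha}$) together with the fact that $\sum_{j}t_{j}^{\alpha}e_{j}(x)^{2}$ is at most $M_{\alpha}^{2}$ for $\rhoTX$-a.e.\ $x$ gives $\norm{\LKlam^{-1/2}\Kx}_{\calH}^{2}\le M_{\alpha}^{2}\lambda^{-\alpha}$ for $\rhoTX$-a.e.\ $x$.

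With this in hand I would estimate the moments of $B_{x}$. Since $B_{x}$ is rank one, $\norm{B_{x}^{m}} = \LE(w(x)\norm{\LKlam^{-1/2}\Kx}_{\calH}^{2}\RI)^{m-1}\norm{B_{x}}$, so after transferring one power of $w$ to the target measure and inserting the a.e.\ bound,
\[
    \sE{\norm{B_{x}^{m}}}{x\sim\rhoSX} \le (M_{\alpha}^{2}\lambda^{-\alpha})^{m-1}\,\sE{w(x)^{m-1}\norm{\LKlam^{-1/2}\Kx}_{\calH}^{2}}{x\sim\rhoTX}.
\]
Applying Hölder's inequality in the exponent $p$, bounding the surviving high power of $\norm{\LKlam^{-1/2}\Kx}_{\calH}^{2}$ once more by $M_{\alpha}^{2}\lambda^{-\alpha}$, and using $\sE{\norm{\LKlam^{-1/2}\Kx}_{\calH}^{2}}{x\sim\rhoTX} = \calN(\lambda)$ together with the moment condition \aspref{asp:ratio}, one arrives at $\sE{\norm{B_{x}^{m}}}{x\sim\rhoSX}\le\frac{1}{2}m!\,\tilde{L}_{1}^{m-2}\tilde{\sigma}_{1}^{2}$ with precisely the stated $\tilde{L}_{1}$ and $\tilde{\sigma}_{1}$; centering (subtracting $\sE{B_{x}}{x\sim\rhoSX}$, whose norm is at most $\norm{\LKlam^{-1/2}\LK\LKlam^{-1/2}}\le 1$) only inflates these constants by fixed numerical factors. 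Feeding these moment bounds into a Bernstein-type inequality for i.i.d.\ self-adjoint Hilbert--Schmidt operators (see the Appendix) yields, with probability at least $1 - \delta/3$,
\[
    \norm{T} \le 4\LE(\frac{\tilde{L}_{1}}{n} + \frac{\tilde{\sigma}_{1}}{\sqrt{n}}\RI)\log\frac{6}{\delta},
\]
which is at most $1/2$ under the stated assumption on $n$ and $\lambda$, completing the argument.

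The main obstacle is the unboundedness of the density ratio $w$. In the classical setting (no covariate shift) each summand is bounded almost surely by $\kappa^{2}/\lambda$, so a Bernstein inequality with an almost-sure bound applies directly; here $\norm{B_{x}}$ scales with $w(x)$ and is controlled only in moments, forcing the use of the moment-based operator Bernstein inequality. This in turn necessitates the delicate Hölder split above — separating the density-ratio factor, controlled in $L^{p(m-1)}(\calX,\rhoTX)$ by \aspref{asp:ratio}, from the kernel factor, controlled by the embedding norm $M_{\alpha}$ and the effective dimension $\calN(\lambda)$ — and it is precisely this split that produces the exponents of $\lambda$, $M_{\alpha}$, and $\calN(\lambda)$ appearing in $\tilde{L}_{1}$ and $\tilde{\sigma}_{1}$.
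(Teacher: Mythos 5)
Your proposal is correct and follows essentially the same route as the paper: the reduction $J_{2}^{2} = \norm{(I - \LKlam^{-1/2}(\LK - \hLK)\LKlam^{-1/2})^{-1}} \le 2$ via a Neumann/geometric-series bound, combined with the concentration estimate of \lemref{lem:LKlam^-1/2 (LK - hLK) LKlam^-1/2}, which is proved exactly as you describe — the embedding bound $\norm{\LKlam^{-1/2}K(\cdot,x)}_{\calH}^{2}\le M_{\alpha}^{2}\lambda^{-\alpha}$, the H\"older split isolating $w^{p(m-1)}$, the identity $\int\norm{\LKlam^{-1/2}K(\cdot,x)}_{\calH}^{2}\,\dd\rhoTx=\calN(\lambda)$, and the moment-form operator Bernstein inequality. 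Your constants $\tilde{L}_{1},\tilde{\sigma}_{1}$ and the final probability bookkeeping match the paper's.
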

\begin{proof}
    By \lemref{lem:LKlam^-1/2 (LK - hLK) LKlam^-1/2}, with probability at least \( 1 - \delta/3 \), we have
    \[
        \norm{ \LKlam^{-1/2} \, (\LK - \hLK) \, \LKlam^{-1/2} } \le \frac{1}{2}.
    \]
    Using the decomposition
    \[
        \hLKlam = \hLK + \lambda = (\hLK - \LK) + (\LK + \lambda) = (\hLK - \LK) + \LKlam,
    \]
    we proceed as follows:
    \begin{align*}
        J_{2}^{2} & = \norm{ \LKlam^{1/2} \, \hLKlam^{-1/2} }^{2}
        = \norm{ \LKlam^{1/2} \, \hLKlam^{-1} \, \LKlam^{1/2} }
        = \norm{ \LE( \LKlam^{-1/2} \, \hLKlam \, \LKlam^{-1/2} \RI)^{-1} }                      \\
                  & = \norm{ \LE( I - \LKlam^{-1/2} \, (\LK - \hLK) \, \LKlam^{-1/2} \RI)^{-1} }
        \le \sum_{k=0}^{\infty} \norm{ \LKlam^{-1/2} \, (\LK - \hLK) \, \LKlam^{-1/2} }^{k}      \\
                  & \le 2. \qedhere
    \end{align*}
\end{proof}

The next proposition provides a bound for the term \( J_{3} = \norm{ \hLKlam^{1/2} \, \philam(\hLK) \, (\hSKa \, \bsy - \hLK \, \flam) }_{\calH} \) in \eqref{eq:J}.
\begin{proposition}
    \label{pro:part 1}
    Suppose that \aspref{asp:ratio} holds with \( p \in [1, \infty] \), \aspref{asp:source} holds with \( r \in (0, \tau] \), \aspref{asp:eigenvalue} holds with \( \beta > 1 \), and \aspref{asp:embedding} holds with \( \alpha_{0} \in [1/\beta, 1) \). Let \( \lambda = n^{-s} \), where \( s \) satisfies \neweqref{eq:R1}{R1} with \( \alpha \in (\alpha_{0}, 1] \) if \( 2r \le \alpha_{0} \), and \( \alpha_{0} < \alpha \le \min \family{ 2r, 1 } \) if \( 2r > \alpha_{0} \). Then, for any \( \delta \in (0, 1) \) and sufficiently large \( n \) satisfying \eqref{eq:S1},
    \[
        J_{3} = \norm{ \hLKlam^{1/2} \, \philam(\hLK) \, (\hSKa \, \bsy - \hLK \, \flam) }_{\calH} = O \LE( \lambda^{r} \log \frac{6}{\delta} \RI)
    \]
    holds with probability at least \( 1 - (2 \delta) / 3 \).
\end{proposition}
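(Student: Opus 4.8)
The plan is to strip from $J_{3}$ every operator factor that contains the filter $\philam$ or the empirical operator $\hLK$, reducing the estimate to the $\calH$-norm of $\LKlam^{-1/2}$ applied to a single mean-zero empirical average, and then to bound that average by a Bernstein inequality in $\calH$ adapted to a possibly unbounded density ratio. Throughout I work on the high-probability event (supplied by \lemref{lem:LKlam^-1/2 (LK - hLK) LKlam^-1/2} under \eqref{eq:S1}, with probability at least $1-\delta/3$, and already used for \proref{pro:LKlam^1/2 hLKlam^-1/2}) on which $\norm{ \LKlam^{-1/2}(\LK-\hLK)\LKlam^{-1/2} } \le 1/2$; I also use that, as noted in the Remark, we may assume $\norm{\hLK} \le \kappa^{2}$, so $\philam(\hLK)$ is well defined and \eqref{eq:philam}, \eqref{eq:psilam} apply to $\hLK$. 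Inserting $\LKlam^{1/2}\LKlam^{-1/2}$ gives
\[
    J_{3} \le \norm{ \hLKlam^{1/2}\philam(\hLK)\LKlam^{1/2} } \cdot \norm{ \LKlam^{-1/2}(\hSKa \bsy - \hLK \flam) }_{\calH},
\]
and I would bound the first factor by $\norm{ \hLKlam\philam(\hLK) } \cdot \norm{ \hLKlam^{-1/2}\LKlam^{1/2} } \le 2E\cdot\sqrt 2$: the first piece uses $\sup_{t}(t+\lambda)\philam(t) \le 2E$ from \eqref{eq:philam} (at $\theta=0,1$), and the second piece is the adjoint of $J_{2}$, hence $\le\sqrt 2$ on the event above. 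Thus $J_{3} \le 2\sqrt 2\,E\,\norm{ \LKlam^{-1/2}(\hSKa \bsy - \hLK \flam) }_{\calH}$.

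Next I split off the deterministic part. Since $\E{ w(x)(y-\flam(x))K(\cdot,x) } = \E{(\frho(x)-\flam(x))K(\cdot,x)} = \LK(\frho-\flam)$ under $\rhoS$ (using $w\,\dd\rhoSX = \dd\rhoTX$), and $\hSKa \bsy - \hLK \flam = \frac1n\sum_{i} w(x_{i})(y_{i}-\flam(x_{i}))K(\cdot,x_{i})$, the empirical residual equals $\Xi + \LK(\frho-\flam)$ with $\Xi := (\hSKa \bsy - \hLK \flam) - \LK(\frho-\flam)$ a mean-zero average of i.i.d.\ $\calH$-valued terms. The deterministic part is handled as in the proof of \proref{pro:approximation error}: factoring $\LK = \LK^{1/2}\LK^{1/2}$ and using that $\LK^{1/2}$ is isometric onto $\calH$, one gets $\norm{ \LKlam^{-1/2}\LK(\frho-\flam) }_{\calH} \le \bigl(\sup_{t} t^{1/2}(t+\lambda)^{-1/2}\,t^{r}|1-t\philam(t)|\bigr)\norm{\urho}_{\rhoTX} \le F\lambda^{r}\norm{\urho}_{\rhoTX}$ by \eqref{eq:psilam} with $\theta = r\in(0,\tau]$. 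It remains to bound $\norm{ \LKlam^{-1/2}\Xi }_{\calH}$, which is the crux.

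The summands $w(x_{i})(y_{i}-\flam(x_{i}))K(\cdot,x_{i})$ are \emph{not} a.s.\ bounded because $w$ need not be, so I would verify the full Bernstein moment hierarchy $\E{ \norm{ \LKlam^{-1/2}\zeta }_{\calH}^{m} } \le \tfrac12 m!\,\widetilde L^{\,m-2}\widetilde\sigma^{2}$ directly. Using $|y|\le G$ $\rhoT$-a.e., the identity $\E{w^{m}g}=\E{w^{m-1}g}$ (transferring $\rhoSX$ to $\rhoTX$), and Hölder's inequality with exponents $p$ and $p/(p-1)$,
\[
    \E{ \norm{ \LKlam^{-1/2}w(x)(y-\flam(x))K(\cdot,x) }_{\calH}^{m} } \le (2G+\norm{\frho-\flam}_{\infty})^{m}\Bigl(\int w^{p(m-1)}\dd\rhoTx\Bigr)^{1/p}\Bigl(\int \norm{\LKlam^{-1/2}K(\cdot,x)}_{\calH}^{mp/(p-1)}\dd\rhoTx\Bigr)^{(p-1)/p},
\]
where the first product factor is controlled by \aspref{asp:ratio}, the last by combining $\sup_{x}\norm{\LKlam^{-1/2}K(\cdot,x)}_{\calH}^{2} \le M_{\alpha}^{2}\lambda^{-\alpha}$ (embedding index) with $\E{\norm{\LKlam^{-1/2}K(\cdot,x)}_{\calH}^{2}} = \calN(\lambda)$ and \eqref{eq:constant effect dim}, and $\norm{\frho-\flam}_{\infty} \le M_{\alpha}\norm{\frho-\flam}_{\ranH{\alpha}}$ is bounded by \proref{pro:approximation error} when $\alpha\le 2r$ and otherwise by $G+M_{\alpha}\norm{\flam}_{\ranH{\alpha}}$ with $\norm{\flam}_{\ranH{\alpha}}\le E\lambda^{\,r-\alpha/2}\norm{\urho}$ from \eqref{eq:philam}. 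This yields effective parameters of order (up to constants in $G,L,\sigma,\kappa$) $\widetilde L \asymp \lambda^{(r-\alpha/2)\wedge 0}M_{\alpha}\lambda^{-\alpha/2}$ and $\widetilde\sigma^{2} \asymp \lambda^{2((r-\alpha/2)\wedge 0)}M_{\alpha}^{2/p}\lambda^{-\alpha/p}\calN(\lambda)^{1-1/p}$, and the Hilbert-space Bernstein lemma gives, with probability at least $1-\delta/3$, $\norm{ \LKlam^{-1/2}\Xi }_{\calH} \le 2\bigl(\widetilde L\log(6/\delta)/n + (\widetilde\sigma^{2}\log(6/\delta)/n)^{1/2}\bigr)$.

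Finally, with $\lambda = n^{-s}$ and $\calN(\lambda)\le C_{\calN}\lambda^{-1/\beta}$ I would check that both terms are $O(\lambda^{r}\log(6/\delta))$ exactly under \neweqref{eq:R1}{R1}: the $\widetilde L/n$ term requires $s\alpha\le 1$ when $2r\le\alpha_{0}$ (so $\alpha>2r$) and $s(\alpha/2+r)\le 1$ when $2r>\alpha_{0}$ (so $\alpha\le 2r$), while the variance term requires $s(\alpha+\tfrac1\beta+\tfrac{\alpha-1/\beta}{p})\le1$ and $s(2r+\tfrac1\beta+\tfrac{\alpha-1/\beta}{p})\le1$ respectively — precisely the four quantities in the maximum of \neweqref{eq:R1}{R1} — and \eqref{eq:S1} is exactly what makes the hypothesis of \lemref{lem:LKlam^-1/2 (LK - hLK) LKlam^-1/2} hold (so the reduction event is available) and absorbs the $\log$/constant factors. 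Intersecting the two events (total failure probability $\le 2\delta/3$) and combining the three pieces yields $J_{3} = O(\lambda^{r}\log(6/\delta))$. The main obstacle is this Bernstein step: pushing a concentration inequality through despite the a.s.\ unboundedness of $w(x_{i})(y_{i}-\flam(x_{i}))K(\cdot,x_{i})$ — which is where \aspref{asp:ratio}'s factorial moment bound (not a bare variance bound) is indispensable, and where the Hölder split against $w^{m-1}$ must mesh with the embedding index $\alpha_{0}$, both via $\sup_{x}\norm{\LKlam^{-1/2}K(\cdot,x)}_{\calH}^{2}\lesssim\lambda^{-\alpha}$ and via the $L^{\infty}$ control of the generally out-of-RKHS residual $\flam-\frho$, to produce the exact threshold structure of \neweqref{eq:R1}{R1}.
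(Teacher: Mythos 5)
Your proposal is correct and follows essentially the same route as the paper's proof: the same factorization of \( J_{3} \) through \( \hLKlam^{1/2}\philam(\hLK)\hLKlam^{1/2} \) and \( \hLKlam^{-1/2}\LKlam^{1/2} \), the same centering of \( \hSKa\bsy - \hLK\flam \) around \( \LK(\frho - \flam) \), the same Bernstein moment computation combining H\"older against \( w^{p(m-1)} \) with the embedding bound \( \sup_{x}\norm{\LKlam^{-1/2}K(\cdot,x)}_{\calH}^{2} \le M_{\alpha}^{2}\lambda^{-\alpha} \) and \( \calN(\lambda) \), and the same case split on \( 2r \) versus \( \alpha_{0} \) for the \( L^{\infty} \) control of \( y - \flam(x) \). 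The effective Bernstein parameters and the four exponent constraints you identify match \neweqref{eq:R1}{R1} exactly, so no gap remains.
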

\begin{proof}
    We begin by decomposing the target norm:
    \begin{equation}
        \label{eq:decomp J3}
        \begin{aligned}
            J_{3} & =\norm{ \hLKlam^{1/2} \, \philam(\hLK) \, (\hSKa \, \bsy - \hLK \, \flam) }_{\calH} \\
                  & = \norm{
                \hLKlam^{1/2} \, \philam(\hLK) \, \hLKlam^{1/2}
                \circ \hLKlam^{-1/2} \, \LKlam^{1/2}
                \circ \LKlam^{-1/2} \, (\hSKa \, \bsy - \hLK \, \flam)
            }_{\calH}                                                                                   \\
                  & \le \norm{ \hLKlam^{1/2} \, \philam(\hLK) \, \hLKlam^{1/2} }
            \cdot \norm{ \hLKlam^{-1/2} \, \LKlam^{1/2} }
            \cdot \norm{ \LKlam^{-1/2} \, (\hSKa \, \bsy - \hLK \, \flam) }_{\calH}.
        \end{aligned}
    \end{equation}

    For the first term, \( \norm{ \hLKlam^{1/2} \, \philam(\hLK) \, \hLKlam^{1/2} } \), we use the filter function property \eqref{eq:philam} with \( \theta = 0 \) and \( \theta = 1 \):
    \begin{align*}
        \norm{ \hLKlam^{1/2} \, \philam(\hLK) \, \hLKlam^{1/2} }
         & = \norm{ \hLKlam \, \philam(\hLK) }
        \le \norm{ \hLK \, \philam(\hLK) } + \lambda \cdot \norm{ \philam(\hLK)}                                     \\
         & \le \sup_{t \in [0, \kappa^{2}]} |t \philam(t)| + \lambda \cdot \sup_{t \in [0, \kappa^{2}]} |\philam(t)| \\
         & \le 2 E.
    \end{align*}

    For the second term, \( \norm{ \hLKlam^{-1/2} \, \LKlam^{1/2} } \), under conditions \eqref{eq:S1} and \neweqref{eq:R1}{R1}, \proref{pro:LKlam^1/2 hLKlam^-1/2} implies
    \[
        \norm{ \hLKlam^{-1/2} \, \LKlam^{1/2} } \le \sqrt{2}
    \]
    with probability at least \( 1 - \delta/3 \).

    For the third term, \( \norm{ \LKlam^{-1/2} \, (\hSKa \, \bsy - \hLK \, \flam) }_{\calH} \), we decompose it by adding and subtracting its expectation:
    \begin{equation}
        \label{eq:decomp 3}
        \begin{aligned}
             & \eqspace \norm{ \LKlam^{-1/2} \, (\hSKa \, \bsy - \hLK \, \flam) }_{\calH}                                   \\
             & \le \norm{ \LKlam^{-1/2} \LE( (\hSKa \, \bsy - \hLK \, \flam) - (\LK \, \frho - \LK \, \flam) \RI) }_{\calH}
            + \norm{ \LKlam^{-1/2} \, (\LK \, \frho - \LK \, \flam) }_{\calH}.
        \end{aligned}
    \end{equation}

    To bound the first component in \eqref{eq:decomp 3}, we define the point evaluation operator \( \Kx \) and its adjoint \( \Kxa \) as
    \begin{equation}
        \label{eq:Kx}
        \begin{alignedat}{2}
             & \Kx \colon  \calH \to \bbR, \quad &  & f \mapsto \inner{ K(\cdot, x), f }_{\calH}; \\
             & \Kxa \colon \bbR \to \calH, \quad &  & y \mapsto y \, K(\cdot, x).
        \end{alignedat}
    \end{equation}
    Then \( \LK = \E{ w(x) \, \Kx \Kxa } \). Define \( \xi = \xi(z) = \LKlam^{-1/2} \, w(x) \, (\Kx \, y - \Kx \Kxa \, \flam ) \), so that we aim to bound
    \[
        \norm{ \frac{1}{n} \sum_{i=1}^{n} \xi_{i} - \E{ \xi } }_{\calH},
    \]
    where \( \xi_{i} = \xi(x_{i}) \). Rewriting \( \xi \) gives
    \[
        \xi = \LKlam^{-1/2} \, w(x) \, \Kx \, ( y - \flam(x) )
        = \LKlam^{-1/2} \, K(\cdot, x) \cdot w(x) \, (y - \flam(x)).
    \]
    Through the following steps, we establish a uniform bound for \( |y - \flam(x)| \):
    \begin{enumerate}
        \item By \aspref{asp:source},
              \[
                  \norm{ \frho }_{\infty} \le G,
                  \quad |y| \le G.
              \]

        \item For \( 2r \le \alpha_{0} \) and any \( \alpha \in (\alpha_{0}, 1] \),
              \begin{align*}
                  \norm{ \flam }_{\infty}
                   & \le M_{\alpha} \norm{ \flam }_{\ranH{\alpha}}
                  = M_{\alpha} \norm{ \philam(\LK) \, \LK^{r+1} \, \urho }_{\ranH{\alpha}}                                                   \\
                   & = M_{\alpha} \norm{ \LK^{\frac{1 - \alpha}{2}} \, \philam(\LK) \, \LK^{r+1} \, \urho }_{\calH}
                  \le M_{\alpha} \norm{ \LK^{1 - \LE( \frac{\alpha}{2} - r \RI)} \, \philam(\LK) } \cdot \norm{ \LK^{1/2} \, \urho }_{\calH} \\
                   & \le M_{\alpha} E \norm{ \urho }_{\rhoTX} \cdot \lambda^{-\LE( \frac{\alpha}{2} - r \RI)},
              \end{align*}
              where \( M_{\alpha} = \norm{ \ranH{\alpha} \hookrightarrow L^{\infty}(\calX, \rhoTX) } \), and the last inequality uses the filter function property \eqref{eq:philam}.

        \item When \( 2r > \alpha_{0} \), for \( \alpha_{0} < \alpha \le \min \family{ 2r, 1 } \), the inclusion \( \frho \in \ranH{2r} \hookrightarrow \ranH{\alpha} \) holds. Applying \proref{pro:approximation error} with \( \gamma = \alpha \) yields
              \[
                  \norm{ \frho - \flam }_{\infty}
                  \le M_{\alpha} \norm{ \frho - \flam }_{\ranH{\alpha}}
                  \le M_{\alpha} F \norm{ \urho }_{\rhoTX} \cdot \lambda^{-\LE( \frac{\alpha}{2} - r \RI)}.
              \]

        \item Combining these results, we obtain
              \begin{equation}
                  \label{eq:y - flam(x)}
                  |y - \flam(x)|
                  \le M_{\alpha} (E + F) \norm{ \urho }_{\rhoTX} \cdot \lambda^{-\LE( \frac{\alpha}{2} - r \RI)} + 2 G,
                  \quad \rhoTX \text{-a.e. } x \in \calX.
              \end{equation}
    \end{enumerate}
    This leads to
    \begin{align*}
         & \eqspace \E{ \norm{ \xi }_{\calH}^{m} }                                                                                                                                           \\
         & \le \LE( M_{\alpha} (E + F) \norm{ \urho }_{\rhoTX} \cdot \lambda^{-\LE( \frac{\alpha}{2} - r \RI)} + 2 G \RI)^{m}
        \cdot \int_{\calX} \norm{ \LKlam^{-1/2} \, K(\cdot, x) }_{\calH}^{m} w^{m-1}(x) \, \dd \rhoTx                                                                                        \\
         & \le \LE( M_{\alpha} (E + F) \norm{ \urho }_{\rhoTX} \cdot \lambda^{-\LE( \frac{\alpha}{2} - r \RI)} + 2 G \RI)^{m} \cdot \LE( \int_{\calX} w^{p(m-1)}(x) \, \dd \rhoTx \RI)^{1/p} \\
         & \eqspace \cdot \LE( \int_{\calX} \norm{ \LKlam^{-1/2} \, K(\cdot, x) }_{\calH}^{qm} \, \dd \rhoTx \RI)^{1/q}                                                                      \\
         & \le \LE( M_{\alpha} (E + F) \norm{ \urho }_{\rhoTX} \cdot \lambda^{-\LE( \frac{\alpha}{2} - r \RI)} + 2 G \RI)^{m}
        \cdot \frac{1}{2} m! L^{m-2} \sigma^{2}
        \cdot \LE( \int_{\calX} \norm{ \LKlam^{-1/2} \, K(\cdot, x) }_{\calH}^{qm} \, \dd \rhoTx \RI)^{1/q},
    \end{align*}
    where \( 1/p + 1/q = 1 \). Following the argument in the proof of \lemref{lem:LKlam^-1/2 (LK - hLK) LKlam^-1/2}, we have
    \[
        \LE( \int_{\calX} \norm{ \LKlam^{-1/2} \, K(\cdot, x) }_{\calH}^{qm} \, \dd \rhoTx \RI)^{1/q}
        \le \LE( \LE( M_{\alpha} \lambda^{-\alpha/2} \RI)^{qm - 2} \calN(\lambda) \RI)^{1/q}.
    \]
    Combining these results yields
    \[
        \E{ \norm{ \xi }_{\calH}^{m} }
        \le \frac{1}{2} m! \tilde{L}_{2}^{m-2} \tilde{\sigma}_{2}^{2}
    \]
    with
    \begin{align*}
        \tilde{L}_{2}
         & = L M_{\alpha} \LE( M_{\alpha} (E + F) \norm{ \urho }_{\rhoTX} \cdot \lambda^{-\LE( \frac{\alpha}{2} - r \RI)} + 2 G \RI) \lambda^{-\alpha/2},                                                                                        \\
        \tilde{\sigma}_{2}
         & = \sigma M_{\alpha}^{1 - \frac{1}{q}} \LE( M_{\alpha} (E + F) \norm{ \urho }_{\rhoTX} \cdot \lambda^{-\LE( \frac{\alpha}{2} - r \RI)} + 2 G \RI) \lambda^{-\frac{\alpha}{2} \LE( 1 - \frac{1}{q} \RI)} \calN^{\frac{1}{2q}}(\lambda).
    \end{align*}
    Applying \lemref{lem:bernstein}, we conclude that with probability exceeding \( 1 - \delta/3 \),
    \begin{equation}
        \label{eq:rate 1}
        \norm{ \LKlam^{-1/2} \LE( (\hSKa \, \bsy - \hLK \, \flam) - (g - \LK \, \flam) \RI) }_{\calH}
        \le 4 \LE( \frac{\tilde{L}_{2}}{n} + \frac{\tilde{\sigma}_{2}}{\sqrt{n}} \RI) \log \frac{6}{\delta}.
    \end{equation}
    The rate of \eqref{eq:rate 1} simplifies to:
    \begin{align*}
         & \eqspace \LE( \frac{\lambda^{r - \alpha} + \lambda^{-\alpha/2}}{n}
        + \frac{\LE( \lambda^{-\LE( \frac{\alpha}{2} - r \RI)} + 1 \RI) \lambda^{-\frac{\alpha}{2p}} \calN^{\frac{1}{2} - \frac{1}{2p}}(\lambda)}{\sqrt{n}} \RI) \log \frac{6}{\delta} \\
         & \asymp \LE(
        \frac{n^{s \alpha} + n^{s \LE( \frac{\alpha}{2} + r \RI)}}{n}
        + \LE( \frac{n^{s \LE( \alpha + \frac{1}{\beta} + \frac{\alpha - 1/\beta}{p} \RI)} + n^{s \LE( 2 r + \frac{1}{\beta} + \frac{\alpha - 1/\beta}{p} \RI)}}{n} \RI)^{1/2} \RI) \lambda^{r} \log \frac{6}{\delta}.
    \end{align*}
    Under \neweqref{eq:R1}{R1}, this decays as \( O( \, \lambda^{r} \log (6/\delta) \, ) \).

    For the second component \( \norm{ \LKlam^{-1/2} \, (\LK \, \frho - \LK \, \flam) }_{\calH} \) in \eqref{eq:decomp 3}, we apply \proref{pro:approximation error} with \( \gamma = 0 \):
    \begin{equation}
        \begin{aligned}
            \norm{ \LKlam^{-1/2} \, (\LK \, \frho - \LK \, \flam) }_{\calH}
             & = \norm{ \LKlam^{-1/2} \, \LK^{1/2} \circ \LK^{1/2} \, (\frho - \flam) }_{\calH} \\
             & \le \norm{ \LKlam^{-1/2} \, \LK^{1/2} }
            \cdot \norm{ \frho - \flam }_{\rhoTX}                                               \\
             & \le F \norm{ \urho }_{\rhoTX} \cdot \lambda^{r},
        \end{aligned}
    \end{equation}
    which is also \( O(\lambda^{r}) \).

    In summary, \( J_{3} = \norm{ \hLKlam^{1/2} \, \philam(\hLK) \, (\hSKa \, \bsy - \hLK \, \flam) }_{\calH} = O( \, \lambda^{r} \log (6/\delta) \, ) \), which completes the proof.
\end{proof}

Finally, we bound the term \( J_{4} = \norm{ \hLKlam^{1/2} \LE( I - \hLK \, \philam(\hLK) \RI) \flam }_{\calH} \) from \eqref{eq:J} as follows.
\begin{proposition}
    \label{pro:hLKlam^1/2 psilam(hLKlam) flam}
    Suppose that \aspref{asp:source} holds with \( r \in (0, \tau] \), and assume the conditions of \proref{pro:LKlam^1/2 hLKlam^-1/2}. Then for any \( r \in (0, \tau] \) and \( \delta \in (0, 1) \), with probability at least \( 1 - (2 \delta) / 3 \),
    \begin{align*}
        J_{4} & = \norm{ \hLKlam^{1/2} \LE( I - \hLK \, \philam(\hLK) \RI) \flam }_{\calH} \\
              & \le 2 \sqrt{2} E F \norm{ \urho }_{\rhoTX} \cdot \LE(
        \lambda^{r}
        + \Delta \cdot \lambda^{1/2} n^{-\frac{\min \family{ 2r, 3 } - 1}{4}} \log \frac{6}{\delta}
        \cdot \ind{r > 1}
        \RI),
    \end{align*}
    where
    \[
        \Delta = 4 r \kappa^{2r-1} (L + \sigma)
    \]
    is a constant independent of \( n \) and \( \delta \).
\end{proposition}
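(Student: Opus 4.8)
The plan is to estimate $J_{4}=\norm{\hLKlam^{1/2}\bigl(I-\hLK\,\philam(\hLK)\bigr)\flam}_{\calH}$ by transferring the smoothness of $\flam=\philam(\LK)\,\LK^{r+1}\urho$ from the population operator $\LK$ onto the empirical operator $\hLK$, treating $r\le1$ and $r>1$ separately. Writing $\psi_{\lambda}(t)=1-t\,\philam(t)$, property \eqref{eq:psilam} gives $\sup_{t\in[0,\kappa^{2}]}t^{\theta}|\psi_{\lambda}(t)|\le F\lambda^{\theta}$ for $\theta\in[0,\tau]$ (in particular $\sup_{t}|\psi_{\lambda}(t)|\le F$), and the isometry $\norm{\LK^{1/2}\urho}_{\calH}=\norm{\urho}_{\rhoTX}$ lets me write $\flam=\philam(\LK)\,\LK^{r+1/2}u_{0}$ with $u_{0}:=\LK^{1/2}\urho\in\calH$, $\norm{u_{0}}_{\calH}=\norm{\urho}_{\rhoTX}$.

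For $r\le1$ I would insert $I=\LKlam^{1/2}\LKlam^{-1/2}$ and use $\hLKlam^{1/2}\psi_{\lambda}(\hLK)\LKlam^{1/2}=\bigl(\hLKlam\,\psi_{\lambda}(\hLK)\bigr)\bigl(\hLKlam^{-1/2}\LKlam^{1/2}\bigr)$ to obtain
\[
    J_{4}\le\norm{\hLKlam\,\psi_{\lambda}(\hLK)}\cdot\norm{\hLKlam^{-1/2}\LKlam^{1/2}}\cdot\norm{\LKlam^{-1/2}\flam}_{\calH}.
\]
Here $\norm{\hLKlam\,\psi_{\lambda}(\hLK)}=\sup_{t}(t+\lambda)|\psi_{\lambda}(t)|\le2F\lambda$ by \eqref{eq:psilam} at $\theta=1$ and $\theta=0$; $\norm{\hLKlam^{-1/2}\LKlam^{1/2}}=J_{2}\le\sqrt{2}$ on the event of \proref{pro:LKlam^1/2 hLKlam^-1/2} (probability $\ge1-\delta/3$); and $\norm{\LKlam^{-1/2}\flam}_{\calH}\le\sup_{t}\philam(t)t^{r}\cdot\norm{u_{0}}_{\calH}\le E\lambda^{r-1}\norm{\urho}_{\rhoTX}$, using $(t+\lambda)^{-1/2}t^{1/2}\le1$ and \eqref{eq:philam} at $\theta=r\le1$. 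Multiplying the three bounds gives exactly $J_{4}\le2\sqrt{2}\,EF\,\norm{\urho}_{\rhoTX}\,\lambda^{r}$, i.e.\ the claimed estimate with $\ind{r>1}=0$.

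For $r>1$ the step $\sup_{t}\philam(t)t^{r}\le E\lambda^{r-1}$ is unavailable ($\theta=r$ exceeds the range of \eqref{eq:philam}), so I must extract a power larger than $1/2$. Since $r>1/2$, $\frho=\LK^{r-1/2}u_{0}\in\calH$ and $\flam=(I-\psi_{\lambda}(\LK))\frho$, whence
\[
    \psi_{\lambda}(\hLK)\flam=\psi_{\lambda}(\hLK)\,\LK^{r-1/2}u_{0}-\psi_{\lambda}(\hLK)\psi_{\lambda}(\LK)\,\LK^{r-1/2}u_{0},
\]
and in the first term I split $\LK^{r-1/2}=\hLK^{r-1/2}+\bigl(\LK^{r-1/2}-\hLK^{r-1/2}\bigr)$. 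The $\hLK^{r-1/2}$ piece gives a function of $\hLK$ with norm $\sup_{t}(t+\lambda)^{1/2}t^{r-1/2}|\psi_{\lambda}(t)|\le2F\lambda^{r}$, where one uses $(t+\lambda)^{1/2}\le t^{1/2}+\lambda^{1/2}$ — legitimate because the exponent $1/2$ does not exceed $1$, so no factor $2^{r}$ appears — and \eqref{eq:psilam} at $\theta=r$ and $\theta=r-1/2$; this, together with the term $\psi_{\lambda}(\hLK)\psi_{\lambda}(\LK)\LK^{r-1/2}u_{0}$ whose norm (after the $\hLKlam^{1/2}$ prefactor) is $O(\lambda^{1/2}\cdot\lambda^{r-1/2})=O(\lambda^{r})$, supplies the $\lambda^{r}$ contribution. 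The residual is at most $\norm{\hLKlam^{1/2}\psi_{\lambda}(\hLK)}\cdot\norm{\LK^{r-1/2}-\hLK^{r-1/2}}\cdot\norm{\urho}_{\rhoTX}\le2F\lambda^{1/2}\,\norm{\LK^{r-1/2}-\hLK^{r-1/2}}\,\norm{\urho}_{\rhoTX}$, and I would bound $\norm{\LK^{r-1/2}-\hLK^{r-1/2}}$ by the L\"{o}wner--Heinz inequality ($\le\norm{\hLK-\LK}^{r-1/2}$ when $r-1/2\le1$) or, when $r-1/2>1$, by a Lipschitz-type estimate of order $(r-1/2)\kappa^{2r-3}\norm{\hLK-\LK}$ (obtained by telescoping or an integral representation, using $\norm{\LK},\norm{\hLK}\le\kappa^{2}$, the latter being the standing assumption $\norm{\hLK}\le\kappa^{2}$ of the Remark), combined with a concentration bound for $\norm{\hLK-\LK}$ of order $(L+\sigma)\,n^{-1/2}\log\frac{6}{\delta}$ holding with probability $\ge1-\delta/3$ under \aspref{asp:ratio} (\lemref{lem:hLK - LK}). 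Since $\lambda^{r}\lambda^{-(r-1/2)}=\lambda^{1/2}$ and $(n^{-1/2})^{\min\{r-1/2,1\}}=n^{-(\min\{2r,3\}-1)/4}$, absorbing the $\kappa$- and $r$-dependent constants into $\Delta=4r\kappa^{2r-1}(L+\sigma)$ and intersecting the two events of probability $\ge1-\delta/3$ yields the stated bound with probability $\ge1-2\delta/3$.

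The main obstacle is the $r>1$ case. For $r\le1$ the whole estimate collapses to the already-available bounds $\norm{\hLKlam\psi_{\lambda}(\hLK)}\le2F\lambda$, $J_{2}\le\sqrt{2}$ and \eqref{eq:philam} with $\theta=r$; but for $r>1$ the transfer of a power $r-1/2>1/2$ from $\LK$ to $\hLK$ is genuinely lossy — the operator $\hLKlam^{-1/2}\LKlam^{1/2}$ being $O(1)$ only controls exponents up to $1/2$ — so one has to pay an honest perturbation of operator powers. The delicate points are (i) controlling $\norm{\LK^{r-1/2}-\hLK^{r-1/2}}$ for exponents exceeding $1$, where L\"{o}wner--Heinz no longer applies and one resorts to telescoping or integral-representation arguments together with $\norm{\hLK}\le\kappa^{2}$, and (ii) arranging the filter-side estimate so that no $2^{r}$ factor appears, which is why one splits $\LK^{r-1/2}$ rather than $\LKlam^{r-1/2}$ (keeping $(t+\lambda)^{1/2}$ separated from the higher power). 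Once these are in place, matching the $n$-exponent to $(\min\{2r,3\}-1)/4$ and folding the constants into $\Delta$ is routine.
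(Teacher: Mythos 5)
Your argument is essentially the paper's: the hard case \( r > 1 \) is handled by exactly the same device (split \( \LK^{r-1/2} = \hLK^{r-1/2} + (\LK^{r-1/2} - \hLK^{r-1/2}) \), bound the perturbation of operator powers via L\"{o}wner--Heinz for \( r \le 3/2 \) and a Lipschitz estimate with \( \kappa^{2r-3} \) for \( r > 3/2 \), i.e.\ \lemref{lem:A^s - B^s}, and control \( \norm{\hLK - \LK} \) by Bernstein as in \lemref{lem:hLK - LK}), with the same exponent bookkeeping \( (n^{-1/2})^{\min\{r-1/2,1\}} = n^{-(\min\{2r,3\}-1)/4} \). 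Where you differ: (i) for \( r \le 1 \) you use a single decomposition \( J_4 \le \norm{\hLKlam\,(I-\hLK\philam(\hLK))}\cdot\norm{\hLKlam^{-1/2}\LKlam^{1/2}}\cdot\norm{\LKlam^{-1/2}\flam}_{\calH} \le 2F\lambda\cdot\sqrt{2}\cdot E\lambda^{r-1}\norm{\urho}_{\rhoTX} \), which cleanly unifies the paper's two sub-cases \( r<1/2 \) and \( 1/2\le r\le 1 \) and reproduces the constant \( 2\sqrt{2}EF \) exactly; (ii) for \( r>1 \) you rewrite \( \flam = \frho - (I-\LK\philam(\LK))\frho \) instead of keeping \( \LK\,\philam(\LK) \) as a single factor of norm \( \le E \) as the paper does. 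The latter choice is the one point to fix: your second term contributes \( 2F\lambda^{1/2}\cdot F\lambda^{r-1/2} = 2F^2\lambda^{r} \), so your overall constant on the \( \lambda^{r} \) term is \( 2F+2F^2 \) rather than the stated \( 2\sqrt{2}EF \); since \( E \) and \( F \) are unrelated in \defref{def:filter}, this does not literally recover the proposition's constant (it does give the same rate, and the paper's version avoids the issue by factoring out \( \norm{\LK\,\philam(\LK)}\le E \) before splitting). The probability accounting (intersecting the \proref{pro:LKlam^1/2 hLKlam^-1/2} event and the \( \norm{\hLK-\LK} \) event, each of mass \( \ge 1-\delta/3 \)) matches the paper.
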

\begin{proof}
    The analysis is divided into three cases based on the source parameter \( r \).
    \begin{itemize}
        \item \( 0 < r < 1/2 \): Starting from the expansion:
              \begin{equation}
                  \label{eq:hLKlam psilam(hLKlam) flam}
                  \begin{aligned}
                      J_{4} & = \norm{ \hLKlam^{1/2} \LE( I - \hLK \, \philam(\hLK) \RI) \flam }_{\calH}
                      = \norm{ \hLKlam^{1/2} \LE( I - \hLK \, \philam(\hLK) \RI) \philam(\LK) \, \LK \, \frho }_{\calH}                      \\
                            & = \norm{ \hLKlam^{1/2} \LE( I - \hLK \, \philam(\hLK) \RI) \circ \philam(\LK) \, \LK^{r+1} \, \urho }_{\calH}.
                  \end{aligned}
              \end{equation}
              By the inequality \( (a + b)^{1/2} \le a^{1/2} + b^{1/2} \), we obtain:
              \begin{align*}
                  \norm{ \hLKlam^{1/2} \LE( I - \hLK \, \philam(\hLK) \RI) }
                   & \le \sup_{t \in [0, \kappa^{2}]} (t + \lambda)^{1/2} |1 - t \philam(t)|                                                           \\
                   & \le \sup_{t \in [0, \kappa^{2}]} t^{1/2} |1 - t \philam(t)| + \lambda^{1/2} \cdot \sup_{t \in [0, \kappa^{2}]} |1 - t \philam(t)| \\
                   & \le F \lambda^{1/2} + \lambda^{1/2} \cdot F
                  = 2 F \lambda^{1/2}.
              \end{align*}
              The remaining term is bounded by:
              \begin{align*}
                  \norm{ \philam(\LK) \, \LK^{r+1} \, \urho }_{\calH}
                   & = \norm{ \LK^{r + \frac{1}{2}} \, \philam(\LK) \, \LK^{1/2} \, \urho }_{\rhoTX}
                  \le \norm{ \LK^{r + \frac{1}{2}} \, \philam(\LK) } \cdot \norm{ \urho }_{\rhoTX}   \\
                   & \le E \norm{ \urho }_{\rhoTX} \cdot \lambda^{r - \frac{1}{2}}.
              \end{align*}

              Combining these estimates yields:
              \begin{equation}
                  \label{eq:case1}
                  J_{4}
                  \le 2 E F \norm{ \urho }_{\rhoTX} \cdot \lambda^{r}.
              \end{equation}

        \item \( 1/2 \le r \le 1 \): Using the expansion in \eqref{eq:hLKlam psilam(hLKlam) flam}:
              \begin{align*}
                  J_{4} & = \norm{ \hLKlam^{1/2} \LE( I - \hLK \, \philam(\hLK) \RI) \flam }_{\calH}
                  = \norm{ \hLKlam^{1/2} \LE( I - \hLK \, \philam(\hLK) \RI) \philam(\LK) \, \LK^{r+1} \, \urho }_{\calH}                                     \\
                        & \le \norm{ \hLKlam^{1/2} \LE( I - \hLK \, \philam(\hLK) \RI) \philam(\LK) \, \LK^{r + \frac{1}{2}} } \cdot \norm{ \urho }_{\rhoTX}.
              \end{align*}
              Due to the constraint \( \theta \in [0, 1] \) in \eqref{eq:philam}, we decompose the operator norm as:
              \begin{align*}
                   & \eqspace \norm{ \hLKlam^{1/2} \LE( I - \hLK \, \philam(\hLK) \RI) \philam(\LK) \, \LK^{r + \frac{1}{2}} } \\
                   & = \norm{
                      \hLKlam^{1/2} \LE( I - \hLK \, \philam(\hLK) \RI) \hLKlam^{r - \frac{1}{2}}
                      \circ \hLKlam^{-\LE( r - \frac{1}{2} \RI)} \, \LKlam^{r - \frac{1}{2}}
                      \circ \LKlam^{-\LE( r - \frac{1}{2} \RI)} \, \LK^{r - \frac{1}{2}}
                  \circ \LK \, \philam(\LK) }                                                                                  \\
                   & \le \norm{ \hLKlam^{r} \LE( I - \hLK \, \philam(\hLK) \RI) }
                  \cdot \norm{ \hLKlam^{-\LE( r - \frac{1}{2} \RI)} \, \LKlam^{r - \frac{1}{2}} }
                  \cdot \norm{ \LKlam^{-\LE( r - \frac{1}{2} \RI)} \, \LK^{r - \frac{1}{2}} }
                  \cdot \norm{ \LK \, \philam(\LK) }.
              \end{align*}
              We bound each factor:
              \begin{enumerate}[i.]
                  \item \( \norm{ \hLKlam^{r} \LE( I - \hLK \, \philam(\hLK) \RI) } \le 2 F \lambda^{r} \);

                  \item By \lemref{lem:cordes} and \proref{pro:LKlam^1/2 hLKlam^-1/2}, with probability at least \( 1 - \delta / 3 \),
                        \[
                            \norm{ \hLKlam^{-\LE( r - \frac{1}{2} \RI)} \, \LKlam^{r - \frac{1}{2}} } \le \norm{ \hLKlam^{-1/2} \, \LKlam^{1/2} }^{2r - 1} \le 2^{r - \frac{1}{2}} \le \sqrt{2};
                        \]

                  \item Using \lemref{lem:cordes} again,
                        \[
                            \norm{ \LKlam^{-\LE( r - \frac{1}{2} \RI)} \, \LK^{r - \frac{1}{2}} } \le \norm{ \LKlam^{-1} \, \LK }^{r - \frac{1}{2}} \le 1;
                        \]

                  \item \( \norm{ \LK \, \philam(\LK) } \le E \).
              \end{enumerate}

              Combining these bounds gives:
              \begin{equation}
                  \label{eq:case2}
                  J_{4}
                  \le 2 \sqrt{2} E F \norm{ \urho }_{\rhoTX} \cdot \lambda^{r}.
              \end{equation}

        \item \( r > 1 \): Since \( \theta \in [0, \tau] \) in \eqref{eq:psilam}, we employ a different decomposition. Starting from:
              \begin{align*}
                  J_{4} & = \norm{ \hLKlam^{1/2} \LE( I - \hLK \, \philam(\hLK) \RI) \flam }_{\calH}
                  \le \norm{ \hLKlam^{1/2} \LE( I - \hLK \, \philam(\hLK) \RI) \philam(\LK) \, \LK^{r + \frac{1}{2}} } \cdot \norm{ \urho }_{\rhoTX}  \\
                        & \le \norm{ \hLKlam^{1/2} \LE( I - \hLK \, \philam(\hLK) \RI) \LK^{r - \frac{1}{2}} } \cdot E \cdot \norm{ \urho }_{\rhoTX},
              \end{align*}
              we write:
              \begin{align*}
                   & \eqspace \norm{ \hLKlam^{1/2} \LE( I - \hLK \, \philam(\hLK) \RI) \LK^{r - \frac{1}{2}} }                                                         \\
                   & = \norm{ \hLKlam^{1/2} \LE( I - \hLK \, \philam(\hLK) \RI) \LE( (\LK^{r - \frac{1}{2}} - \hLK^{r - \frac{1}{2}}) + \hLK^{r - \frac{1}{2}}) \RI) } \\
                   & \le \norm{ \hLKlam^{1/2} \LE( I - \hLK \, \philam(\hLK) \RI) } \cdot \norm{ \LK^{r - \frac{1}{2}} - \hLK^{r - \frac{1}{2}} }
                  + \norm{ \hLKlam^{1/2} \LE( I - \hLK \, \philam(\hLK) \RI) \hLK^{r - \frac{1}{2}} }.
              \end{align*}

              Bounding the first and third terms:
              \begin{enumerate}[i.]
                  \item \( \norm{ \hLKlam^{1/2} \LE( I - \hLK \, \philam(\hLK) \RI) } \le 2 F \lambda^{1/2} \).

                  \item Applying the inequality \( (a + b)^{1/2} \le a^{1/2} + b^{1/2} \),
                        \begin{align*}
                            \norm{ \hLKlam^{1/2} \LE( I - \hLK \, \philam(\hLK) \RI) \hLK^{r - \frac{1}{2}} }
                             & \le \sup_{t \in [0, \kappa^{2}]} (t + \lambda)^{1/2} |1 - t \philam(t)| t^{r - \frac{1}{2}}                                                         \\
                             & \le \sup_{t \in [0, \kappa^{2}]} t^{r} |1 - t \philam(t)| + \lambda^{1/2} \cdot \sup_{t \in [0, \kappa^{2}]} t^{r - \frac{1}{2}} |1 - t \philam(t)| \\
                             & \le 2 F \lambda^{r}.
                        \end{align*}
              \end{enumerate}

              For the second term \( \norm{ \LK^{r - \frac{1}{2}} - \hLK^{r - \frac{1}{2}} } \), we invoke \lemref{lem:A^s - B^s}:
              \begin{align*}
                  \norm{ \LK^{r - \frac{1}{2}} - \hLK^{r - \frac{1}{2}} } \le
                  \begin{cases}
                      \norm{ \LK - \hLK }^{r - \frac{1}{2}},                       & r \in (1, 3/2];
                      \medskip                                                                       \\
                      \LE( r - \frac{1}{2} \RI) \kappa^{2r-3} \norm{ \LK - \hLK }, & r > 3/2.
                  \end{cases}
              \end{align*}
              By \lemref{lem:hLK - LK}, with probability at least \( 1 - \delta/3 \),
              \[
                  \norm{ \hLK - \LK }
                  \le 4 \kappa^{2} \LE( \frac{L}{n} + \frac{\sigma}{\sqrt{n}} \RI) \log \frac{6}{\delta}
                  \le 4 \kappa^{2} (L + \sigma) n^{-1/2} \log \frac{6}{\delta}.
              \]
              Substituting this bound, we obtain with probability at least \( 1 - \delta/3 \),
              \begin{equation}
                  \label{eq:case3}
                  J_{4}
                  \le 2 E F \norm{ \urho }_{\rhoTX} \cdot \LE(
                  \lambda^{r}
                  + \Delta \cdot \lambda^{1/2} n^{-\frac{\min \family{ 2r, 3 } - 1}{4}} \log \frac{6}{\delta}
                  \RI),
              \end{equation}
              where
              \[
                  \Delta = 4 r \kappa^{2r-1} (L + \sigma).
              \]
    \end{itemize}
    Then the proof is complete by by combining \eqref{eq:case1}, \eqref{eq:case2}, and \eqref{eq:case3}.
\end{proof}
Now we are in a position to prove \proref{pro:estimation error}.

\begin{proofof}{\proref{pro:estimation error}}
   Using the decomposition of the estimation error in \eqref{eq:J}, we combine the bounds on the terms \( J_{1} \), \( J_{2} \), \( J_{3} \) and \( J_{4} \) to conclude the proof.  The term \( J_{1} = \norm{ \LK^{\frac{1 - \gamma}{2}} \, \LKlam^{-1/2} } \) is bounded using \lemref{lem:cordes} and \lemref{lem:sup fraction}:
    \[
        J_{1}
        \le \norm{ \LK^{1 - \gamma} \, \LKlam^{-1} }^{1/2}
        \le \LE( \sup_{t \ge 0} \frac{t^{1 - \gamma}}{t + \lambda} \RI)^{1/2}
        \le \lambda^{-\gamma/2}.
    \]

    For \( J_{2} = \norm{ \LKlam^{1/2} \, \hLKlam^{-1/2} } \) in \eqref{eq:J}, if \neweqref{eq:R1}{R1} is satisfied and \( n \) is sufficiently large such that \eqref{eq:S1} holds, then
    \[
        4 \LE( L M_{\alpha}^{2} \cdot \frac{\lambda^{-\alpha}}{n} + \sigma M_{\alpha}^{1 + \frac{1}{p}} \cdot \frac{\lambda^{-\frac{1 + 1/p}{2} \alpha} \calN^{\frac{1 - 1/p}{2}}(\lambda)}{\sqrt{n}} \RI) \log \frac{6}{\delta}
        \le \frac{1}{4} + \frac{1}{4}
        = \frac{1}{2}.
    \]
    and by \proref{pro:LKlam^1/2 hLKlam^-1/2}, with probability at least \( 1 - \delta/3 \),
    \[
        J_{2} = \norm{ \LKlam^{1/2} \, \hLKlam^{-1/2} }\le \sqrt{2}.
    \]

    By \proref{pro:part 1}, under \eqref{eq:S1} and \neweqref{eq:R1}{R1},
    \[
        J_{3} = \norm{ \hLKlam^{1/2} \, \philam(\hLK) \, (\hSKa \, \bsy - \hLK \, \flam) }_{\calH}= O \LE( \lambda^{r} \log \frac{6}{\delta} \RI)
    \]
  holds with probability at least \( 1 - (2 \delta) / 3 \).

    The bound for \( J_{4} \), i.e., \( \norm{ \hLKlam^{1/2} \LE( I - \hLK \, \philam(\hLK) \RI) \flam }_{\calH} \), is given in \proref{pro:hLKlam^1/2 psilam(hLKlam) flam}: assuming \eqref{eq:S1},
    \begin{equation}
        \label{eq:rate 2}
        J_{4}
        \le 2 \sqrt{2} E F \norm{ \urho }_{\rhoTX} \cdot \LE(
        \lambda^{r}
        + 4 r \kappa^{2r-1} (L + \sigma) \cdot \lambda^{1/2} n^{-\frac{\min \family{ 2r, 3 } - 1}{4}} \log \frac{6}{\delta}
        \cdot \ind{r > 1}
        \RI)
    \end{equation}
    with probability at least \( 1 - 2\delta/3 \). For \( r > 1 \), the dominant term in \eqref{eq:rate 2} is
    \[
        \max \family{ \lambda^{r}, \lambda^{1/2} n^{-\frac{\min \family{ 2r, 3 } - 1}{4}} } \log \frac{6}{\delta}
        = \max \family{ \lambda^{r}, \lambda^{\frac{\min \family{ 2r, 3 } - 1}{4 s} + \frac{1}{2}} } \log \frac{6}{\delta}.
    \]
    Under \neweqref{eq:R2}{R2}, this term decays as \( O( \, \lambda^{r} \log (6/\delta) \, ) \).

   Substituting the bounds for \( J_{1} \), \( J_{2} \), \( J_{3} \) and \( J_{4} \) into decomposition \eqref{eq:J}, we conclude that if conditions \eqref{eq:S1}, \neweqref{eq:R1}{R1}, and \neweqref{eq:R2}{R2} are satisfied, then with probability at least \( 1 - \delta \), the estimation error satisfies
    \[
        \norm{ \hflam - \frho }_{\ranH{\gamma}} = O \LE( \lambda^{r - \frac{\gamma}{2}} \log \frac{6}{\delta} \RI),
    \]
    which completes the proof of \proref{pro:estimation error}.
\end{proofof}

Now we are ready to prove \thmref{thm:importance weighting}.
\begin{proofof}{\thmref{thm:importance weighting}}
    To apply \proref{pro:estimation error}, when \( 2r > \alpha_{0} \), we select the parameter \( s \) as
    \[
        s = \LE( 2r + \frac{1}{\beta} + \frac{\alpha_{0} + \epsilon - 1/\beta}{p} \RI)^{-1};
    \]
    whereas for \( 2r \le \alpha_{0} \), we choose
    \[
        s = \LE( \alpha_{0} + \epsilon + \frac{1}{\beta} + \frac{\alpha_{0} + \epsilon - 1/\beta}{p} \RI)^{-1}.
    \]
    In the case \( 2r > \alpha_{0} \), the parameter \( \epsilon \) must satisfy \( \epsilon \in (0, 2r - \alpha_{0}) \); otherwise, any \( \epsilon > 0 \) is permitted. This selection ensures that both conditions \neweqref{eq:R1}{R1} and \neweqref{eq:R2}{R2} are satisfied with \( \alpha = \alpha_{0} + \epsilon / 2 \). Furthermore, we assume that \( n \) is sufficiently large to satisfy \eqref{eq:n} as stated in \thmref{thm:importance weighting}, which guarantees that \eqref{eq:S1} holds. Consequently, \proref{pro:estimation error} applies with probability at least \( 1 - \delta \). Combining this result with \proref{pro:approximation error}, we obtain the error bound
    \[
        \norm{ \hflam - \frho }_{\ranH{\gamma}}
        = O \LE( \lambda^{r - \frac{\gamma}{2}} \log \frac{6}{\delta} \RI).
    \]
    Substituting \( \lambda = n^{-s} \) completes the proof.
\end{proofof}

\subsection{Proof of \thmref{thm:truncated weighting}}
\label{sec:proof 2}

The proof of \thmref{thm:truncated weighting} follows a structure analogous to that of \thmref{thm:importance weighting}. We begin by decomposing the excess error into two components:
\[
    \norm{ \hflamD - \frho }_{\ranH{\gamma}}
    \le \underbrace{
    \vphantom{\frac{1}{\frac{1}{2}}}
    \norm{ \hflamD - \flam }_{\ranH{\gamma}} }_{\text{estimation error}}
    + \underbrace{
    \vphantom{\frac{1}{\frac{1}{2}}}
    \norm{ \flam - \frho }_{\ranH{\gamma}} }_{\text{approximation error}}.
\]

The approximation error bound was established in \proref{pro:approximation error}, exhibiting a convergence rate of \( O(\lambda^{r - \frac{\gamma}{2}}) \). The estimation error is bounded in the following proposition:
\begin{proposition}
    \label{pro:(T) estimation error}
    Suppose that \aspref{asp:ratio} holds with \( p \in [1, \infty) \), \aspref{asp:source} holds with \( r \in (0, \tau] \), \aspref{asp:eigenvalue} holds with \( \beta > 1 \), and \aspref{asp:embedding} holds with \( \alpha_{0} \in [1/\beta, 1) \). Define the truncated density ratio \( \wD(x) = \min \family{ w(x), D } \) with \( D = n^{\nu} \); set \( \lambda = n^{-s} \), satisfying \neweqref{eq:R2}{R2} and the following conditions:
        \begin{enumerate}
            \item \( s \cdot \LE( 1 + \frac{1}{\beta} \RI) < \min \family{ p (m-1) \cdot \nu, \frac{1 - \nu}{\alpha} } \); \hfill \label{eq:(T) R3} \textup{(\textbf{R3})}\medskip

            \item \( s \cdot \max \family{ \alpha + \frac{1}{\beta}, 2r + \frac{1}{\beta} } \le 1 - \nu \); \hfill \label{eq:(T) R4} \textup{(\textbf{R4})}\medskip

            \item \( p (m-1) \cdot \nu \ge \frac{1}{2} \). \hfill \label{eq:(T) R5} \textup{(\textbf{R5})}
        \end{enumerate}
        Here, \( m \ge 2 \) is fixed; if \( 2r \le \alpha_{0} \), then \( \alpha \) can be chosen arbitrarily in \( (\alpha_{0}, 1] \); if \( 2r > \alpha_{0} \), then we require \( \alpha_{0} < \alpha \le \min \family{ 2r, 1 } \). Then, for any \( \delta \in (0, 1) \) and
    \begin{equation}
        \label{eq:(T) S2} \tag{\textbf{S2}}
        \begin{aligned}
            n \ge \max \biggg\{ & \LE( 2 \kappa C_{\calN}^{1/2} \LE( \frac{1}{2} m! L^{m-2} \sigma^{2} \RI)^{p/2} \RI)^{\frac{2}{p (m-1) \cdot \nu - \LE( 1 + \frac{1}{\beta} \RI) s}},                                                                          \\
                                & \LE( 32 M_{\alpha}^{2} \log \frac{6}{\delta} \RI)^{\frac{1}{1 - \nu - s \alpha}}, \LE( 16 \sqrt{2} M_{\alpha} C_{\calN}^{1/2} \log \frac{6}{\delta} \RI)^{\frac{2}{1 - \nu - \LE( 1 + \frac{1}{\beta} \RI) s \alpha}} \biggg\}
        \end{aligned}
    \end{equation}
    with \( M_{\alpha} = \norm{ \ranH{\alpha} \hookrightarrow L^{\infty}(\calX, \rhoTX) } \), the following convergence bound holds with probability at least \( 1 - \delta \):
    \[
        \norm{ \hflamD - \flam }_{\ranH{\gamma}}
        = O \LE( \lambda^{r - \frac{\gamma}{2}} \log \frac{6}{\delta} \RI),
        \quad 0 \le \gamma \le \min \family{ 2r, 1 }.
    \]
\end{proposition}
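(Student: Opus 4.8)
The plan is to mirror the proof of \proref{pro:estimation error} almost line by line; the single structural novelty is that the empirical operator $\hLKD$ now estimates $\LKD = \E{\wD(x)\,\Kx\Kxa}$ (the population analogue of $\hLKD$) rather than $\LK$, so every comparison between $\hLKD$ and the reference operator $\LK$ must be routed through $\LKD$ and incurs a deterministic \emph{truncation bias} $\LK - \LKD = \E{(w - \wD)(x)\,\Kx\Kxa}$, which is controlled by Chebyshev's inequality against the moment bound of \aspref{asp:ratio}: $\norm{\LK - \LKD} \le \kappa^{2}\,\rhoTX(w > D) \lesssim D^{-p(m-1)}$. Two features keep everything tractable. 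First, since $\wD \le D$ pointwise, all concentration steps below involve \emph{bounded} i.i.d.\ summands --- in particular $\norm{\hLKD}$ stays close to $\norm{\LKD} \le \kappa^{2}$ with high probability, so the filter calculus applies as explained in the Remark. Second, since $\wD \le w$ one has $\LKD \preceq \LK \preceq \LKlam$, which makes resolvent estimates such as $\norm{\LKlam^{-1/2}\LKD^{1/2}} \le 1$ automatic. Throughout, $\flam = \philam(\LK)\LK\frho$ is the same interpolant as before and $\alpha$ is taken in the range specified in the statement.

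\emph{Step 1: the decomposition and the first two factors.} I would begin from
\[
\norm{\hflamD - \flam}_{\ranH{\gamma}}
\le \norm{\LK^{\frac{1-\gamma}{2}}\LKlam^{-1/2}}
\cdot \norm{\LKlam^{1/2}\hLKlamD^{-1/2}}
\cdot \norm{\hLKlamD^{1/2}(\hflamD - \flam)}_{\calH},
\]
with $\hLKlamD = \hLKD + \lambda I$, and split the third factor exactly as in \eqref{eq:decomp 2} into a data-fit term $J_3^{\dagger}$ (the analogue of $J_3$ with $\hLK, \hSKa$ replaced by $\hLKD, (\hSKDa$) and a bias term $J_4^{\dagger} = \norm{\hLKlamD^{1/2}(I - \hLKD\philam(\hLKD))\flam}_{\calH}$. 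The first factor is $\le \lambda^{-\gamma/2}$ verbatim by \lemref{lem:cordes} and \lemref{lem:sup fraction}. For the second factor I would prove the truncated counterpart of \proref{pro:LKlam^1/2 hLKlam^-1/2}, i.e.\ $\norm{\LKlam^{1/2}\hLKlamD^{-1/2}} \le \sqrt2$ with high probability, via a Neumann series for $(I - \LKlam^{-1/2}(\LK - \hLKD)\LKlam^{-1/2})^{-1}$; writing $\LK - \hLKD = (\LK - \LKD) + (\LKD - \hLKD)$, the first piece is controlled by the Chebyshev bound above --- this is where \neweqref{eq:(T) R3}{R3}, \neweqref{eq:(T) R5}{R5} and the first threshold of \eqref{eq:(T) S2} enter --- and the second piece by a Bernstein inequality (\lemref{lem:bernstein}, the truncated analogue of \lemref{lem:LKlam^-1/2 (LK - hLK) LKlam^-1/2}) for bounded summands of operator norm $\lesssim D\,M_\alpha^{2}\lambda^{-\alpha}$ whose Hilbert--Schmidt second moment is governed by the effective dimension $\calN(\lambda) \le C_\calN\lambda^{-1/\beta}$, which is where the remaining thresholds of \eqref{eq:(T) S2} come in.

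\emph{Step 2: the terms $J_3^{\dagger}$, $J_4^{\dagger}$, and assembly.} For $J_3^{\dagger}$ I would follow the decomposition \eqref{eq:decomp J3}: the filter factor $\norm{\hLKlamD^{1/2}\philam(\hLKD)\hLKlamD^{1/2}} \le 2E$ by \eqref{eq:philam} and the resolvent factor $\norm{\hLKlamD^{-1/2}\LKlam^{1/2}} \le \sqrt2$ by Step 1, leaving $\norm{\LKlam^{-1/2}((\hSKDa\bsy - \hLKD\flam)}_{\calH}$; centering at its mean $\LKD(\frho - \flam)$, the deterministic part is $O(\lambda^{r})$ because $\LKD \preceq \LK$ and $\norm{\frho - \flam}_{\rhoTX} \le F\norm{\urho}_{\rhoTX}\lambda^{r}$ by \proref{pro:approximation error} (the residual truncation correction being negligible again by Chebyshev), while the centered part is a Bernstein estimate for $\xi^{\dagger} = \LKlam^{-1/2}\wD(x)K(\cdot,x)(y - \flam(x))$ in which the uniform control of $|y - \flam(x)|$ is exactly \eqref{eq:y - flam(x)} and the Hölder/moment step of \proref{pro:part 1} is replaced by $\wD \le D$, so the Bernstein constants acquire a factor $D$ (resp.\ $D^{1/2}$); \neweqref{eq:(T) R4}{R4} then forces the rate $O(\lambda^{r}\log(6/\delta))$. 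For $J_4^{\dagger}$ I would copy the three-case analysis of \proref{pro:hLKlam^1/2 psilam(hLKlam) flam} with $\hLK$ replaced by $\hLKD$: the ranges $0 < r < 1/2$ and $1/2 \le r \le 1$ use only filter properties, Cordes-type inequalities (\lemref{lem:cordes}), and the already-established $\norm{\hLKlamD^{-1/2}\LKlam^{1/2}} \le \sqrt2$, so they go through unchanged; for $r > 1$ the extra term $\norm{\LK^{r-1/2} - \hLKD^{r-1/2}}$ is handled by \lemref{lem:A^s - B^s} together with $\norm{\LK - \hLKD} \le \norm{\LK - \LKD} + \norm{\LKD - \hLKD} \lesssim \max\{D^{-p(m-1)},\, D n^{-1/2}\log(6/\delta)\}$ (the bias by Chebyshev, the fluctuation by a bound of \lemref{lem:hLK - LK} type for bounded weights), after which \neweqref{eq:(T) R2}{R2} forces $O(\lambda^{r}\log(6/\delta))$. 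Substituting the four bounds into the display of Step 1 and taking a union bound over the finitely many exceptional events (total probability at most $\delta$) yields $\norm{\hflamD - \flam}_{\ranH{\gamma}} = O(\lambda^{r-\gamma/2}\log(6/\delta))$, which is the claim after inserting $\lambda = n^{-s}$.

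\emph{Main obstacle.} The real difficulty is combinatorial rather than conceptual: reconciling the two opposing pressures on $D = n^{\nu}$. The truncation bias shrinks like $n^{-p(m-1)\nu}$ and must be dominated by $\lambda$ and by the target rate, which pushes $\nu$ \emph{up} (this is \neweqref{eq:(T) R3}{R3} together with \neweqref{eq:(T) R5}{R5}), whereas the bounded-weight Bernstein variances scale roughly like $D\,\lambda^{-\alpha-1/\beta}$ and must be tamed for $n$ of the stated size, which pushes $\nu$ \emph{down} (this is \neweqref{eq:(T) R4}{R4}). Verifying that the admissible window for $(\nu,s)$ is nonempty, that each of the roughly half-dozen error contributions is indeed $O(\lambda^{r-\gamma/2}\log(6/\delta))$ under \neweqref{eq:(T) R2}{R2}--\neweqref{eq:(T) R5}{R5}, and that the precise constants in the sample-size threshold \eqref{eq:(T) S2} emerge, is the bulk of the computation; it introduces no device beyond those already used for the untruncated estimator.
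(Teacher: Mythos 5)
Your proposal is correct and follows the same architecture as the paper's proof: the factorization into $J_{1}^{\dagger} J_{2}^{\dagger} (J_{3}^{\dagger} + J_{4}^{\dagger})$, Bernstein concentration for the weighted summands, a Chebyshev/moment bound for the truncation bias, and the three-case analysis of $J_{4}^{\dagger}$ all match. The one real difference is the choice of pivot operator. The paper routes the whole decomposition through the truncated population operator $\LKD = \E{\wD(x)\,\Kx\Kxa}$: its first two factors are $\norm{ \LK^{\frac{1-\gamma}{2}} (\LKlamD)^{-1/2} }$ and $\norm{ (\LKlamD)^{1/2} (\hLKlamD)^{-1/2} }$, so the deterministic bias $\LK - \LKD$ is absorbed into $J_{1}^{\dagger}$ via $\norm{ \LKlam^{1/2} (\LKlamD)^{-1/2} } \le \sqrt{2}$ (Lemma~\ref{lem:(T) (LK - LKD) LKlam^-1} plus a Neumann series), and the concentration step for $J_{2}^{\dagger}$ only has to compare $\hLKD$ with its own expectation $\LKD$; correspondingly, the Bernstein variable in $J_{3}^{\dagger}$ carries $(\LKlamD)^{-1/2}$ and picks up an extra factor of $\sqrt{2}$ in its sup-norm bound. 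You instead pivot directly on $\LKlam$ and fold both the bias and the fluctuation into a single resolvent estimate for $\norm{ \LKlam^{1/2} (\hLKlamD)^{-1/2} }$; this is equally valid, makes the Bernstein step for $J_{3}^{\dagger}$ marginally cleaner (Lemma~\ref{lem:LKlam^-1/2 Kx} applies verbatim), and your direct bound $\norm{\LK - \LKD} \le \kappa^{2}\,\rhoTX(w \ge D)$ is in fact sharper than the paper's Cauchy--Schwarz route through $\int_{\calX}(w - \wD)^{2}\,\dd\rhoSx$, which only yields the exponent $p(m-1)/2$. One caution: when you bound the fluctuation $\norm{\LKD - \hLKD}$ for the $r > 1$ case, do use Bernstein with the second-moment control $\int_{\calX} (\wD)^{2}\,\dd\rhoSx \le \sigma^{2}$ (as in Lemma~\ref{lem:hLK - LK}), so that the truncation level enters only through the $D/n$ term and the bound remains $O(n^{-1/2}\log(6/\delta))$; a crude sup-norm bound of order $D n^{-1/2}$ would degrade the rate for $r>1$ and force a condition strictly stronger than \neweqref{eq:R2}{R2}.
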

\begin{proof}
    To analyze the estimation error, we define the expected operator \( \LKD \) of \( \hLKD \):
    \[
        \LKD \colon \calH \to \calH,
        \quad f \mapsto \int_{\calX} f(x) \wD(x) \, K(\cdot, x) \, \dd \rhoSx.
    \]
    Denote \( \LKlamD = \LKD + \lambda I \) and \( \hLKlamD = \hLKD + \lambda I \). Following the same decomposition strategy as in \eqref{eq:J}, we express the estimation error as:
    \begin{equation}
        \label{eq:(T) J}
        \norm{ \hflamD - \flam }_{\ranH{\gamma}}
        \le J_{1}^{\dagger} \cdot J_{2}^{\dagger} \cdot (J_{3}^{\dagger} + J_{4}^{\dagger}),
    \end{equation}
    where
    \begin{alignat*}{3}
        J_{1}^{\dagger} & = \norm{ \LK^{\frac{1 - \gamma}{2}} \, (\LKlamD)^{-1/2} },
                        &                                                                                                   & \quad J_{2}^{\dagger} &  & = \norm{ (\LKlamD)^{1/2} \, (\hLKlamD)^{-1/2} },                                 \\
        J_{3}^{\dagger} & = \norm{ (\hLKlamD)^{1/2} \, \philam(\hLKD) \LE( (\hSKDa \, \bsy - \hLKD \, \flam \RI) }_{\calH},
                        &                                                                                                   & \quad J_{4}^{\dagger} &  & = \norm{ (\hLKlamD)^{1/2} \LE( I - \hLKD \, \philam(\hLKD) \RI) \flam }_{\calH}.
    \end{alignat*}

    The terms \( J_{1}^{\dagger} = \norm{ \LK^{\frac{1 - \gamma}{2}} \, (\LKlamD)^{-1/2} } \) and \( J_{2}^{\dagger} = \norm{ (\LKlamD)^{1/2} \, (\hLKlamD)^{-1/2} } \) in \eqref{eq:(T) J} are bounded by \proref{pro:(T) LKlamD^1/2 hLKlamD^-1/2}. Under \eqref{eq:(T) S2} and \neweqref{eq:(T) R3}{R3}, we have
    \[
        \kappa \lambda^{-1/2} \cdot \calN^{1/2}(\lambda) \cdot \LE( D^{-(m-1)} \frac{1}{2} m! L^{m-2} \sigma^{2} \RI)^{p/2}
        \le \frac{1}{2},
    \]
    and
    \[
        4 \LE( 2 M_{\alpha}^{2} \cdot \frac{D \lambda^{-\alpha}}{n} + \sqrt{2} M_{\alpha} \cdot \LE( \frac{D \lambda^{-\alpha} \calN(\lambda) }{n} \RI)^{1/2} \RI) \log \frac{6}{\delta}
        \le \frac{1}{4} + \frac{1}{4}
        = \frac{1}{2},
    \]
    which implies
    \[
        J_{1}^{\dagger} \le \sqrt{2} \lambda^{-\gamma/2},
        \quad J_{2}^{\dagger} \le \sqrt{2}
    \]
    with probability at least \( 1 - \delta/3 \).

    As established in Proposition~\ref{pro:(T) part 1}, under conditions \eqref{eq:(T) S2}, \neweqref{eq:(T) R3}{R3}, and \neweqref{eq:(T) R4}{R4}, the term \( J_{3}^{\dagger} = \norm{ (\hLKlamD)^{1/2} \, \philam(\hLKD) \LE( (\hSKDa \, \bsy - \hLKD \, \flam \RI) }_{\calH} \) converges at the rate \( O( \lambda^{r} \log (6/\delta) ) \) with probability at least \( 1 - (2 \delta)/3 \).

    For the term \( J_{4}^{\dagger} = \norm{ (\hLKlamD)^{1/2} \LE( I - \hLKD \, \philam(\hLKD) \RI) \flam }_{\calH} \), we apply \proref{pro:(T) hLKlamD^1/2 psilam(hLKlamD) flam}. Under \eqref{eq:(T) S2} and the condition \( D^{-(m-1)p} \le n^{-1/2} \), we have
    \begin{equation}
        \label{eq:(T) rate 2}
        \begin{aligned}
            J_{4}^{\dagger}
             & \le 4 E F \norm{ \urho }_{\rhoTX} \cdot \bigg( \lambda^{r} + 4 r \kappa^{2r - 1} \LE( L + \sigma + \LE( \frac{1}{2} m! L^{m-2} \sigma^{2} \RI)^{p} \RI)                                  \\
             & \eqspace \hphantom{4 E F \norm{ \urho }_{\rhoTX} \cdot \bigg( \bigg( \lambda^{r} +} \cdot \lambda^{1/2} n^{-\frac{\min \family{ 2r, 3 } - 1}{4}} \log \frac{6}{\delta} \cdot \ind{r > 1}
            \bigg)
        \end{aligned}
    \end{equation}
    with probability at least \( 1 - (2 \delta) / 3 \). For \( D = n^{\nu} \), the condition \( D^{-(m-1)p} \le n^{-1/2} \) in \proref{pro:(T) hLKlamD^1/2 psilam(hLKlamD) flam} requires \neweqref{eq:(T) R5}{R5}. When \( r > 1 \), the rate in \eqref{eq:(T) rate 2} coincides with that of \eqref{eq:rate 2}. Hence, under the additional assumption \neweqref{eq:R2}{R2}, this term also decays at the rate \( O( \, \lambda^{r} \log (6/\delta) \, ) \).

    Combining these results, we conclude that if conditions \eqref{eq:(T) S2}, \neweqref{eq:R2}{R2}, \neweqref{eq:(T) R3}{R3}, \neweqref{eq:(T) R4}{R4}, and \neweqref{eq:(T) R5}{R5} are all satisfied, then these norm bounds hold simultaneously with probability at least \( 1 - \delta \). Therefore, \( \norm{ \hflamD - \frho }_{\ranH{\gamma}} \) decays at the rate \( O( \, \lambda^{r - \frac{\gamma}{2}} \log (6/\delta) \, ) \), which completes the proof of \proref{pro:(T) estimation error}.
\end{proof}

The bounds for \( J_{1}^{\dagger} = \norm{ \LK^{\frac{1 - \gamma}{2}} \, (\LKlamD)^{-1/2} } \) and \( J_{2}^{\dagger} = \norm{ (\LKlamD)^{1/2} \, (\hLKlamD)^{-1/2} } \) in \eqref{eq:(T) J} are established in the following proposition:
\begin{proposition}
    \label{pro:(T) LKlamD^1/2 hLKlamD^-1/2}
    Suppose that \aspref{asp:ratio} holds with \( p \in [1, \infty) \), and that \( \calH \) has embedding index \( \alpha_{0} < 1 \). For any \( \delta \in (0, 1) \) and \( \alpha \in (\alpha_{0}, 1] \), if \( n \), \( \lambda \), and \( D \) satisfy:
    \[
        \kappa \lambda^{-1/2} \cdot \calN^{1/2}(\lambda) \cdot \LE( D^{-(m'-1)} \frac{1}{2} m'! L^{m'-2} \sigma^{2} \RI)^{p/2}
        \le \frac{1}{2},
        \quad \exists m' \ge 2,
    \]
    and
    \[
        4 \LE( \frac{\tilde{L}_{3}}{n} + \frac{\tilde{\sigma}_{3}}{\sqrt{n}} \RI) \log \frac{6}{\delta}
        \le \frac{1}{2},
    \]
    where the parameters are defined as:
    \[
        \tilde{L}_{3}
        = 2 M_{\alpha}^{2} \cdot D \lambda^{-\alpha},
        \quad \tilde{\sigma}_{3}
        = \LE( 2 M_{\alpha}^{2} \cdot D \lambda^{-\alpha} \calN(\lambda) \RI)^{1/2},
        \quad M_{\alpha} = \norm{ \ranH{\alpha} \hookrightarrow L^{\infty}(\calX, \rhoTX) },
    \]
    then
    \[
        J_{1}^{\dagger} = \norm{ \LK^{\frac{1 - \gamma}{2}} \, (\LKlamD)^{-1/2} }
        \le \sqrt{2} \lambda^{-\gamma/2},
    \]
    and with probability at least \( 1 - \delta/3 \),
    \[
        J_{2}^{\dagger} = \norm{ (\LKlamD)^{1/2} \, (\hLKlamD)^{-1/2} } \le \sqrt{2}.
    \]
\end{proposition}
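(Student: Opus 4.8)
The plan is to derive both bounds from a single deterministic fact, namely the Loewner comparison $\frac{1}{2} \LKlam \preceq \LKlamD \preceq \LKlam$ (for which the first displayed hypothesis is exactly calibrated), and then to handle $J_{1}^{\dagger}$ by operator monotonicity and $J_{2}^{\dagger}$ by a Neumann-series expansion combined with a Bernstein concentration for the truncated empirical operator, paralleling the proof of \proref{pro:LKlam^1/2 hLKlam^-1/2}.

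\textbf{Step 1: the comparison.} Since $\wD \le w$ pointwise we have $\LKD \preceq \LK$, so the upper bound $\LKlamD \preceq \LKlam$ is immediate. For the reverse I would write $(\LK - \LKD) f = \int_{\calX} ( w(x) - D )_{+} f(x) K(\cdot, x) \, \dd \rhoSx$ and estimate
\[
    \norm{ \LKlam^{-1/2} \, (\LK - \LKD) \, \LKlam^{-1/2} }
    \le \lambda^{-1/2} \norm{ \LKlam^{-1/2} \, (\LK - \LKD) }
    \le \lambda^{-1/2} \kappa \, A^{1/2} B^{1/2},
\]
where, by the rank-one structure of the integrand together with the Cauchy--Schwarz inequality and $\norm{ K(\cdot, x) }_{\calH} \le \kappa$, we set $A = \int_{\calX} ( w(x) - D )_{+} \, \dd \rhoSx$ and $B = \int_{\calX} ( w(x) - D )_{+} \norm{ \LKlam^{-1/2} K(\cdot, x) }_{\calH}^{2} \, \dd \rhoSx$. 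Changing measure via $\dd \rhoTX = w \, \dd \rhoSX$ and using $( w - D )_{+} \le w \, \ind{w(x) \ge D}$ gives $A \le \rhoTX( \family{ w \ge D } )$ and $B \le \int_{\calX} \norm{ \LKlam^{-1/2} K(\cdot, x) }_{\calH}^{2} \, \dd \rhoTx = \calN(\lambda)$, whereas Markov's inequality and \aspref{asp:ratio} give, for the index $m'$ furnished by the hypothesis, $\rhoTX( \family{ w \ge D } ) \le D^{-p(m'-1)} \int_{\calX} w^{p(m'-1)}(x) \, \dd \rhoTx \le \LE( D^{-(m'-1)} \frac{1}{2} m'! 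L^{m'-2} \sigma^{2} \RI)^{p}$. The first displayed hypothesis is precisely the statement that the resulting bound $\lambda^{-1/2} \kappa \, \calN^{1/2}(\lambda) \LE( D^{-(m'-1)} \frac{1}{2} m'! L^{m'-2} \sigma^{2} \RI)^{p/2}$ is at most $\frac{1}{2}$; since $\LKlamD = \LKlam - (\LK - \LKD)$, this yields $\frac{1}{2} \LKlam \preceq \LKlamD$, and hence also $\LKlam \preceq 2 \LKlamD$ and $(\LKlamD)^{-1} \preceq 2 (\LKlam)^{-1}$.

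\textbf{Steps 2 and 3: the two bounds.} Granting the comparison, $J_{1}^{\dagger}$ follows from the scalar inequality $\sup_{t \ge 0} t^{1-\gamma}/(t + \lambda) \le \lambda^{-\gamma}$ (equivalently $\LK^{1-\gamma} \preceq \lambda^{-\gamma} \LKlam$), since $( J_{1}^{\dagger} )^{2} = \norm{ (\LKlamD)^{-1/2} \LK^{1-\gamma} (\LKlamD)^{-1/2} } \le \lambda^{-\gamma} \norm{ (\LKlamD)^{-1/2} \LKlam (\LKlamD)^{-1/2} } \le 2 \lambda^{-\gamma}$, i.e.\ $J_{1}^{\dagger} \le \sqrt{2} \, \lambda^{-\gamma/2}$ deterministically. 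For $J_{2}^{\dagger}$, writing $( J_{2}^{\dagger} )^{2} = \norm{ \LE( I - (\LKlamD)^{-1/2} ( \LKD - \hLKD ) (\LKlamD)^{-1/2} \RI)^{-1} }$ and expanding geometrically as in \proref{pro:LKlam^1/2 hLKlam^-1/2}, it suffices to prove $\norm{ (\LKlamD)^{-1/2} ( \LKD - \hLKD ) (\LKlamD)^{-1/2} } \le \frac{1}{2}$ with probability at least $1 - \delta/3$. That operator is an average of the centered i.i.d.\ self-adjoint rank-one operators $\E{\zeta} - \zeta_{i}$, with $\zeta_{i} = \wD(x_{i}) \, (\LKlamD)^{-1/2} K(\cdot, x_{i}) \otimes (\LKlamD)^{-1/2} K(\cdot, x_{i})$ and $\E{\zeta} = (\LKlamD)^{-1/2} \LKD (\LKlamD)^{-1/2}$. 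Because $\wD \le D$ and, by Step 1 together with the a.e.\ embedding bound $\norm{ \LKlam^{-1/2} K(\cdot, x) }_{\calH}^{2} \le M_{\alpha}^{2} \lambda^{-\alpha}$ used in \lemref{lem:LKlam^-1/2 (LK - hLK) LKlam^-1/2}, one has $\norm{ (\LKlamD)^{-1/2} K(\cdot, x) }_{\calH}^{2} \le 2 M_{\alpha}^{2} \lambda^{-\alpha}$, hence $\norm{ \zeta_{i} } \le 2 M_{\alpha}^{2} D \lambda^{-\alpha} = \tilde{L}_{3}$ and $\E{ \norm{ \zeta_{i} }^{m} } \le \tilde{L}_{3}^{m-1} \Tr{ (\LKlamD)^{-1} \LKD } \le 2 \tilde{L}_{3}^{m-1} \calN(\lambda)$ (using $\Tr{ (\LKlamD)^{-1} \LKD } \le 2 \Tr{ (\LKlam)^{-1} \LK }$). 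This is the Bernstein moment condition with proxies $( \tilde{L}_{3}, \tilde{\sigma}_{3} )$, so \lemref{lem:bernstein}, applied to the $\zeta_{i}$ in Hilbert--Schmidt space as in \lemref{lem:LKlam^-1/2 (LK - hLK) LKlam^-1/2}, bounds the operator by $4 ( \tilde{L}_{3}/n + \tilde{\sigma}_{3}/\sqrt{n} ) \log(6/\delta)$, and the second displayed hypothesis is exactly what forces this to be $\le \frac{1}{2}$, so that $J_{2}^{\dagger} \le \sqrt{2}$ on that event.

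\textbf{Main obstacle.} The only genuinely new ingredient is Step 1, namely turning the truncation level $D$ into the operator inequality $\frac{1}{2} \LKlam \preceq \LKlamD$; everything downstream is the routine Neumann-series/Bernstein machinery, and is in fact \emph{simpler} than the untruncated analysis of \proref{pro:LKlam^1/2 hLKlam^-1/2}, because $\wD \le D$ makes all empirical operators bounded and so no moment or embedding bookkeeping on $w$ enters the concentration step. The one point requiring care is that the almost-everywhere bound $\norm{ \LKlam^{-1/2} K(\cdot, x) }_{\calH}^{2} \le M_{\alpha}^{2} \lambda^{-\alpha}$ is invoked at the source-sampled points $x_{i} \sim \rhoS$; this is harmless since $\wD$ vanishes on $\family{ w = 0 }$, so only $\family{ w > 0 }$, where $\rhoSX$- and $\rhoTX$-null sets coincide, contributes.
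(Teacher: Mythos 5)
Your proposal is correct and reaches both bounds by the same overall architecture as the paper (a deterministic comparison of \( \LKlamD \) with \( \LKlam \), followed by a Bernstein--plus--Neumann-series argument for \( J_{2}^{\dagger} \)), but your Step 1 takes a genuinely different and arguably cleaner route. The paper proves the one-sided bound \( \norm{ (\LK - \LKD) \, \LKlam^{-1} } \le 1/2 \) (\lemref{lem:(T) (LK - LKD) LKlam^-1}, via \lemref{lem:(T) w - wd}), then writes \( (\LKlamD)^{-1} = \LKlam^{-1} ( I - (\LK - \LKD) \LKlam^{-1} )^{-1} \) and controls the non-self-adjoint Neumann series, invoking the Cordes inequality to pass to square roots. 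You instead bound the symmetrized operator \( \norm{ \LKlam^{-1/2} (\LK - \LKD) \LKlam^{-1/2} } \le 1/2 \) by the same Cauchy--Schwarz/Markov estimate on \( \rhoTX(\{ w \ge D \}) \) (calibrated to the identical hypothesis), and read off the Loewner ordering \( \tfrac{1}{2} \LKlam \preceq \LKlamD \preceq \LKlam \). This buys you the \( J_{1}^{\dagger} \) bound and the a.e.\ bound on \( \norm{ (\LKlamD)^{-1/2} K(\cdot, x) }_{\calH} \) directly from positivity, without Cordes or operator-valued geometric series for this step, and your closing remark about \( \rhoSX \)- versus \( \rhoTX \)-null sets addresses a point the paper leaves implicit.

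One minor quantitative slip: you bound \( \Tr{ (\LKlamD)^{-1} \, \LKD } \le 2 \calN(\lambda) \) (presumably from \( (\LKlamD)^{-1} \preceq 2 \LKlam^{-1} \)), which gives \( \E{ \norm{ \zeta }^{m} } \le 2 \tilde{L}_{3}^{m-1} \calN(\lambda) \); at \( m = 2 \) this exceeds the required \( \tfrac{1}{2} m! \, \tilde{L}_{3}^{m-2} \tilde{\sigma}_{3}^{2} = \tilde{L}_{3} \calN(\lambda) \) by a factor of \( 2 \), so the Bernstein moment condition with the stated \( \tilde{\sigma}_{3} \) is not quite verified. The fix is the paper's sharper observation: since \( \LKD \preceq \LK \) and \( t \mapsto t(t + \lambda)^{-1} \) is operator monotone, \( \Tr{ (\LKlamD)^{-1} \, \LKD } \le \Tr{ \LKlam^{-1} \, \LK } = \calN(\lambda) \) with no factor of \( 2 \), after which your argument closes exactly as written.
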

\begin{proof}
    We first bound the operator norm \( J_{2}^{\dagger} = \norm{ (\LKlamD)^{1/2} \, (\hLKlamD)^{-1/2} } \). Define the random operator \( \xi(x) = (\LKlamD)^{-1/2} \circ (\wD(x) \, \Kx \Kxa) \circ (\LKlamD)^{-1/2} \), where \( \Kx \) and \( \Kxa \) are defined in \eqref{eq:Kx}. Following the methodology of \lemref{lem:LKlam^-1/2 (LK - hLK) LKlam^-1/2}, we estimate the moments \( \E{ \norm{ \xi }_{\HS}^{m} } \). Note that the Hilbert-Schmidt norm \( \norm{ \cdot }_{\HS} \) satisfies:
    \begin{align*}
        \norm{ (\LKlamD)^{-1/2} \circ (\Kx \Kxa) \circ (\LKlamD)^{-1/2} }_{\HS}
         & = \norm{ (\LKlamD)^{-1/2} \, K(\cdot, x) }_{\calH}^{2}                                                     \\
         & \le \norm{ (\LKlamD)^{-1/2} \, \LKlam^{1/2} }^{2} \cdot \norm{ \LKlam^{-1/2} \, K(\cdot, x) }_{\calH}^{2}.
    \end{align*}
    The inverse \( (\LKlamD)^{-1} \) can be expanded as:
    \begin{equation}
        \label{eq:LKlamD-1}
        \begin{aligned}
            (\LKlamD)^{-1}
             & = (\LKD + \lambda I)^{-1}
            = (\LKD - \LK + \LK + \lambda I)^{-1}                             \\
             & = \LE( \LKlam - (\LK - \LKD) \RI)^{-1}
            = \LE( \LE( I - (\LK - \LKD) \, \LKlam^{-1} \RI) \LKlam \RI)^{-1} \\
             & = \LKlam^{-1} \LE( I - (\LK - \LKD) \, \LKlam^{-1} \RI)^{-1}.
        \end{aligned}
    \end{equation}
    By \lemref{lem:cordes}, we have
    \begin{align*}
        \norm{ (\LKlamD)^{-1/2} \, \LKlam^{1/2} }^{2}
         & \le \norm{ (\LKlamD)^{-1} \, \LKlam }
        = \norm{ \LKlam \, (\LKlamD)^{-1} }                           \\
         & = \norm{ \LE( I - (\LK - \LKD) \, \LKlam^{-1} \RI)^{-1} }.
    \end{align*}
    Under the given conditions, \lemref{lem:(T) (LK - LKD) LKlam^-1} implies \( \norm{ (\LK - \LKD) \, \LKlam^{-1} } \le 1/2 \), so that
    \begin{equation}
        \label{eq:(I - (LK - LKD) LKlam-1)-1}
        \norm{ \LE( I - (\LK - \LKD) \, \LKlam^{-1} \RI)^{-1} }
        \le \sum_{k=0}^{\infty} \norm{ (\LK - \LKD) \, \LKlam^{-1} }^{k}
        \le 2.
    \end{equation}
    Combining this with \lemref{lem:LKlam^-1/2 Kx} yields the uniform bound:
    \begin{equation}
        \label{eq:LKlamD-1/2 Kx}
        \begin{aligned}
            \norm{ (\LKlamD)^{-1/2} \, K(\cdot, x) }_{\calH}^{2}
             & \le \norm{ (\LKlamD)^{-1/2} \, \LKlam^{1/2} }^{2} \cdot \norm{ \LKlam^{-1/2} \, K(\cdot, x) }_{\calH}^{2} \\
             & \le 2 M_{\alpha}^{2} \lambda^{-\alpha},
            \quad \rhoTX \text{-a.e. } x \in \calX.
        \end{aligned}
    \end{equation}

    Furthermore, the identity
    \[
        \int_{\calX} \norm{ (\LKlamD)^{-1/2} \, K(\cdot, x) }_{\calH}^{2} \wD(x) \, \dd \rhoSx
        = \Tr{ (\LKlamD)^{-1} \, \LKD }
    \]
    holds. Since \( \LK - \LKD \) is positive semi-definite and \( x \mapsto x (x + \lambda)^{-1} \) is operator monotone,
    \begin{equation}
        \label{eq:Tr(LKlamD-1 LKD)}
        \Tr{ (\LKlamD)^{-1} \, \LKD }
        \le \Tr{ (\LKlam)^{-1} \, \LK }
        =\calN(\lambda).
    \end{equation}

    These estimates imply:
    \begin{align*}
        \E{ \norm{ \xi }_{\HS}^{m} }
         & = \int_{\calX} \norm{ (\LKlamD)^{-1/2} \circ (\Kx \Kxa) \circ (\LKlamD)^{-1/2} }_{\HS}^{m} ( \, \wD(x) \, )^{m} \, \dd \rhoSx                             \\
         & = \int_{\calX} \norm{ (\LKlamD)^{-1/2} \, K(\cdot, x) }_{\calH}^{2m} ( \, \wD(x) \, )^{m} \, \dd \rhoSx                                                   \\
         & \le (2 M_{\alpha}^{2} \lambda^{-\alpha})^{m-1} \cdot D^{m-1} \cdot \int_{\calX} \norm{ (\LKlamD)^{-1/2} \, K(\cdot, x) }_{\calH}^{2} \wD(x) \, \dd \rhoSx \\
         & \le (2 M_{\alpha}^{2} \lambda^{-\alpha})^{m-1} \cdot D^{m-1} \cdot \calN(\lambda).
    \end{align*}
    Hence, \( \E{ \norm{ \xi }_{\HS}^{m} } \le \frac{1}{2} m! \tilde{L}_{3}^{m-2} \tilde{\sigma}_{3}^{2} \), where
    \[
        \tilde{L}_{3}
        = 2 M_{\alpha}^{2} \cdot D \lambda^{-\alpha},
        \quad \tilde{\sigma}_{3}
        = \LE( 2 M_{\alpha}^{2} \cdot D \lambda^{-\alpha} \calN(\lambda) \RI)^{1/2}.
    \]
    Applying \lemref{lem:bernstein} under the stated conditions, we obtain
    \[
        \norm{ (\LKlamD)^{-1/2} \, (\LKD - \hLKD) \, (\LKlamD)^{-1/2} } \le \frac{1}{2}
    \]
    with probability at least \( 1 - \delta / 3 \). Then, as in \proref{pro:LKlam^1/2 hLKlam^-1/2}, it follows that
    \[
        J_{2}^{\dagger} = \norm{ (\LKlamD)^{1/2} \, (\hLKlamD)^{-1/2} } \le \sqrt{2}.
    \]

    To complete the proof, we bound \( J_{1}^{\dagger} = \norm{ \LK^{\frac{1 - \gamma}{2}} \, (\LKlamD)^{-1/2} } \). Applying \lemref{lem:cordes} and \lemref{lem:sup fraction} yields
    \begin{align*}
        J_{1}^{\dagger}
         & \le \norm{ \LK^{\frac{1 - \gamma}{2}} \, \LKlam^{-1/2} }
        \cdot \norm{ \LKlam^{1/2} \, (\LKlamD)^{-1/2} }                     \\
         & \le \norm{ \LK^{1 - \gamma} \, \LKlam^{-1} }^{1/2}
        \cdot \norm{ \LE( I - (\LK - \LKD) \, \LKlam^{-1} \RI)^{-1} }^{1/2} \\
         & \le \sqrt{2} \lambda^{-\gamma/2}. \qedhere
    \end{align*}
\end{proof}

The bound for the term \( J_{3}^{\dagger} = \norm{ (\hLKlamD)^{1/2} \, \philam(\hLKD) \LE( (\hSKDa \, \bsy - \hLKD \, \flam \RI) }_{\calH} \) in \eqref{eq:(T) J} is established in the following proposition:
\begin{proposition}
    \label{pro:(T) part 1}
    Suppose that \aspref{asp:ratio} holds with \( p \in [1, \infty) \), \aspref{asp:source} holds with \( r \in (0, \tau] \), \aspref{asp:eigenvalue} holds with \( \beta > 1 \), and \aspref{asp:embedding} holds with \( \alpha_{0} \in [1/\beta, 1) \). Let \( \lambda = n^{-s} \) with \( s \) satisfying \neweqref{eq:(T) R3}{R3} and \neweqref{eq:(T) R4}{R4}, where the parameter \( \alpha \) is chosen as follows: if \( 2r \le \alpha_{0} \), then \( \alpha \in (\alpha_{0}, 1] \); if \( 2r > \alpha_{0} \), then \( \alpha_{0} < \alpha \le \min \family{ 2r, 1 } \). Then, for any \( \delta \in (0, 1) \) and for all sufficiently large \( n \) satisfying \eqref{eq:(T) S2},
    \[
        J_{3}^{\dagger} = \norm{ (\hLKlamD)^{1/2} \, \philam(\hLKD) \LE( (\hSKDa \, \bsy - \hLKD \, \flam \RI) }_{\calH} = O \LE( \lambda^{r} \log \frac{6}{\delta} \RI)
    \]
    holds with probability at least \( 1 - (2 \delta) / 3 \).
\end{proposition}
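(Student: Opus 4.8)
The plan is to follow the proof of \proref{pro:part 1} almost verbatim, with $w$, $\LK$, $\hLK$ replaced throughout by their truncated counterparts $\wD$, $\LKD$, $\hLKD$, while carrying along the extra factor $D = n^{\nu}$ introduced by truncation. First I would decompose $J_{3}^{\dagger}$ exactly as in \eqref{eq:decomp J3},
\[
    J_{3}^{\dagger}
    \le \norm{ (\hLKlamD)^{1/2} \, \philam(\hLKD) \, (\hLKlamD)^{1/2} }
    \cdot \norm{ (\hLKlamD)^{-1/2} \, (\LKlamD)^{1/2} }
    \cdot \norm{ (\LKlamD)^{-1/2} \LE( ( \hSKDa \, \bsy - \hLKD \, \flam \RI) }_{\calH},
\]
and bound the three factors separately. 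As in \proref{pro:part 1}, the first factor equals $\norm{ \hLKlamD \, \philam(\hLKD) } \le \sup_{t} |t \philam(t)| + \lambda \sup_{t} |\philam(t)| \le 2 E$ by \eqref{eq:philam} with $\theta = 1$ and $\theta = 0$, and the second factor is at most $\sqrt{2}$ on the event of \proref{pro:(T) LKlamD^1/2 hLKlamD^-1/2}, which holds with probability at least $1 - \delta/3$ whenever \eqref{eq:(T) S2} and \neweqref{eq:(T) R3}{R3} hold. It therefore remains to bound the third factor by $O( \lambda^{r} \log (6/\delta) )$.

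For the third factor I would subtract the mean: set $g^{\dagger} := \E{ ( \hSKDa \, \bsy - \hLKD \, \flam } = \int_{\calX} \wD(x) ( \frho - \flam )(x) \, K(\cdot, x) \, \dd \rhoSx = \LKD ( \frho - \flam )$, so that the triangle inequality produces a deterministic term $\norm{ (\LKlamD)^{-1/2} g^{\dagger} }_{\calH}$ and a centred stochastic term. For the deterministic term, the manipulation of \proref{pro:part 1} — now carried out with $\LKD$ and the measure $\wD \, \dd \rhoSX$ in place of $\LK$ and $\dd \rhoTX$ — bounds it by $\norm{ \frho - \flam }_{L^{2}(\calX, \, \wD \, \dd \rhoSX)}$, which since $\wD \, \dd \rhoSX \le \dd \rhoTX$ as measures is at most $\norm{ \frho - \flam }_{\rhoTX} \le F \norm{ \urho }_{\rhoTX} \cdot \lambda^{r}$ by \proref{pro:approximation error} with $\gamma = 0$. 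This is the one step where truncation and covariate shift genuinely interact: since $\wD \le w$, the truncated source measure is still dominated by the target measure, so the target-side approximation estimate stays available.

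For the stochastic term, put $\xi = \xi(z) = (\LKlamD)^{-1/2} K(\cdot, x) \cdot \wD(x) ( y - \flam(x) )$, so that $\frac{1}{n} \sum_{i=1}^{n} \xi_{i} - \E{ \xi } = (\LKlamD)^{-1/2} \LE[ \LE( ( \hSKDa \, \bsy - \hLKD \, \flam \RI) - g^{\dagger} \RI]$, and estimate its moments. The uniform bound $|y - \flam(x)| \le M_{\alpha} (E + F) \norm{ \urho }_{\rhoTX} \lambda^{-(\alpha/2 - r)} + 2G =: B_{\lambda}$ from \eqref{eq:y - flam(x)} carries over unchanged (it involves neither $w$ nor $\wD$); combining it with $( \wD(x) )^{k} \le D^{k-1} \wD(x)$, the pointwise estimate $\norm{ (\LKlamD)^{-1/2} K(\cdot, x) }_{\calH}^{2} \le 2 M_{\alpha}^{2} \lambda^{-\alpha}$ from \eqref{eq:LKlamD-1/2 Kx}, and the trace identity $\int_{\calX} \norm{ (\LKlamD)^{-1/2} K(\cdot, x) }_{\calH}^{2} \wD(x) \, \dd \rhoSx = \Tr{ (\LKlamD)^{-1} \, \LKD } \le \calN(\lambda)$ from \eqref{eq:Tr(LKlamD-1 LKD)}, one obtains, for every $k \ge 2$,
\[
    \E{ \norm{ \xi }_{\calH}^{k} }
    \le \frac{1}{2} k! \, \tilde{L}_{4}^{\,k-2} \, \tilde{\sigma}_{4}^{2},
    \quad \tilde{L}_{4} \asymp M_{\alpha} B_{\lambda} D \lambda^{-\alpha/2},
    \quad \tilde{\sigma}_{4}^{2} \asymp B_{\lambda}^{2} D \calN(\lambda).
\]
Then \lemref{lem:bernstein} yields, with probability at least $1 - \delta/3$, the bound $4 ( \tilde{L}_{4}/n + \tilde{\sigma}_{4}/\sqrt{n} ) \log (6/\delta)$; inserting $D = n^{\nu}$, $\lambda = n^{-s}$, $\calN(\lambda) \asymp \lambda^{-1/\beta}$ and $B_{\lambda} \asymp \lambda^{r - \alpha/2} + 1$ reduces it to
\[
    \LE(
    \frac{( \lambda^{r - \alpha} + \lambda^{-\alpha/2} ) D}{n}
    + ( \lambda^{r - \alpha/2} + 1 ) \sqrt{\frac{D \, \calN(\lambda)}{n}}
    \RI) \log \frac{6}{\delta},
\]
i.e.\ the rate of \proref{pro:part 1} inflated by $D$, and conditions \neweqref{eq:(T) R3}{R3}--\neweqref{eq:(T) R4}{R4} are arranged precisely so that both summands are $O( \lambda^{r} )$. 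Combining the three factors and taking a union bound over the two failure events (each of probability $\delta/3$) gives $J_{3}^{\dagger} = O( \lambda^{r} \log (6/\delta) )$ with probability at least $1 - (2 \delta)/3$.

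The main obstacle I anticipate is the moment computation together with the subsequent rate check. The inequality $( \wD(x) )^{k} \le D^{k-1} \wD(x)$ is exactly what turns a possibly heavy-tailed weight into a bounded one, at the cost of the factor $D$, and \neweqref{eq:(T) R3}{R3}--\neweqref{eq:(T) R4}{R4} are calibrated so that this cost is absorbed into the $\lambda^{r}$ rate; performing the check requires keeping straight the two regimes $2r > \alpha_{0}$ and $2r \le \alpha_{0}$, which change the admissible range of $\alpha$ and hence the sign of the exponent $\alpha/2 - r$ in $B_{\lambda}$. A secondary subtlety, absent from \proref{pro:part 1}, is that the mean of the data term is $\LKD \frho$ rather than $\LK \frho$, which is why the deterministic bound must be routed through the domination $\wD \, \dd \rhoSX \le \dd \rhoTX$ instead of a direct appeal to \proref{pro:approximation error}.
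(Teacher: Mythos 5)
Your proposal is correct and follows essentially the same route as the paper: the same three-factor decomposition, the same centering at $\LKD(\frho - \flam)$, the same moment computation via $(\wD(x))^{k} \le D^{k-1}\wD(x)$ together with the pointwise bound \eqref{eq:LKlamD-1/2 Kx} and the trace bound \eqref{eq:Tr(LKlamD-1 LKD)}, and the same rate check under \neweqref{eq:(T) R3}{R3}--\neweqref{eq:(T) R4}{R4}. Your measure-domination argument $\wD\,\dd\rhoSX \le \dd\rhoTX$ for the deterministic term is just the paper's observation that $\LK - \LKD$ is positive semi-definite, phrased differently.
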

\begin{proof}
    We begin with the decomposition:
    \begin{align*}
        J_{3}^{\dagger} & = \norm{ (\hLKlamD)^{1/2} \, \philam(\hLKD) \LE( (\hSKDa \, \bsy - \hLKD \, \flam \RI) }_{\calH}           \\
                        & \le \norm{ (\hLKlamD)^{1/2} \, \philam(\hLKD) \, (\hLKlamD)^{1/2} }
        \cdot \norm{ (\hLKlamD)^{-1/2} \, (\LKlamD)^{1/2} }
        \cdot \Big\| (\LKlamD)^{-1/2}                                                                                                \\
                        & \eqspace \hphantom{\Big\|} \circ \LE( (\hSKDa \, \bsy - \hLKD \, \flam \RI) \Big\|_{\calH}                 \\
                        & \le 2 E \cdot \sqrt{2} \cdot \norm{ (\LKlamD)^{-1/2} \LE( (\hSKDa \, \bsy - \hLKD \, \flam \RI) }_{\calH},
    \end{align*}
    where the last inequality follows from the filter function property \eqref{eq:philam} (with \( \theta = 0 \) and \( \theta = 1 \)) and \proref{pro:(T) LKlamD^1/2 hLKlamD^-1/2} (which holds under \eqref{eq:(T) S2} and \neweqref{eq:(T) R3}{R3} with probability at least \( 1 - \delta/3 \)). We further decompose the remaining term:
    \begin{equation}
        \label{eq:(T) decomp J3}
        \begin{aligned}
             & \eqspace \norm{ (\LKlamD)^{-1/2} \LE( (\hSKDa \, \bsy - \hLKD \, \flam \RI) }_{\calH}                                        \\
             & \le \norm{ (\LKlamD)^{-1/2} \LE( \LE( (\hSKDa \, \bsy - \hLKD \, \flam \RI) - (\LKD \, \frho - \LKD \, \flam) \RI) }_{\calH}
            + \Big\| (\LKlamD)^{-1/2}                                                                                                       \\
             & \eqspace \hphantom{\Big\|} \circ (\LKD \, \frho - \LKD \, \flam) \Big\|_{\calH}.
        \end{aligned}
    \end{equation}
    To bound the first component in \eqref{eq:(T) decomp J3}, define the random variable
    \begin{align*}
        \xi
         & = \xi(z)
        = (\LKlamD)^{-1/2} \, \wD(x) \, (\Kx \, y - \Kx \Kxa \, \flam)
        = (\LKlamD)^{-1/2} \, \wD(x) \, \Kx \, ( \, y - \flam(x) )           \\
         & = (\LKlamD)^{-1/2} \, K(\cdot, x) \cdot \wD(x) \, (y - \flam(x)).
    \end{align*}
    From \eqref{eq:y - flam(x)}, we obtain the uniform bound
    \[
        |y - \flam(x)|
        \le M_{\alpha} (E + F) \norm{ \urho }_{\rhoTX} \cdot \lambda^{-\LE( \frac{\alpha}{2} - r \RI)} + 2 G,
        \quad \rhoTX \text{-a.e. } x \in \calX.
    \]
    Moreover, as shown in the proof of \proref{pro:(T) LKlamD^1/2 hLKlamD^-1/2} (see \eqref{eq:LKlamD-1/2 Kx}),
    \[
        \norm{ (\LKlamD)^{-1/2} \, K(\cdot, x) }_{\calH}
        \le \sqrt{2} M_{\alpha} \lambda^{-\alpha/2},
        \quad \rhoTX \text{-a.e. } x \in \calX,
    \]
    and the integral bound (see \eqref{eq:Tr(LKlamD-1 LKD)}):
    \[
        \int_{\calX} \norm{ (\LKlamD)^{-1/2} \, K(\cdot, x) }_{\calH}^{2} \wD(x) \, \dd \rhoSx
        \le \calN(\lambda).
    \]
    Consequently, for \( m \ge 2 \),
    \[
        \int_{\calX} \norm{ (\LKlamD)^{-1/2} \, K(\cdot, x) }_{\calH}^{m} \wD(x) \, \dd \rhoSx
        \le \LE( \sqrt{2} M_{\alpha} \lambda^{-\alpha/2} \RI)^{m - 2} \calN(\lambda).
    \]
    Combining these estimates yields
    \begin{align*}
         & \eqspace \E{ \norm{ \xi }_{\calH}^{m} }                                                                                      \\
         & = \int_{\calX} \norm{ (\LKlamD)^{-1/2} \, K(\cdot, x) }_{\calH}^{m} ( \, \wD(x) \, )^{m} \, |y - \flam(x)|^{m} \, \dd \rhoSx \\
         & \le \LE( M_{\alpha} (E + F) \norm{ \urho }_{\rhoTX} \cdot \lambda^{-\LE( \frac{\alpha}{2} - r \RI)} + 2 G \RI)^{m}
        \cdot D^{m-1}
        \cdot \int_{\calX} \norm{ (\LKlamD)^{-1/2} \, K(\cdot, x) }_{\calH}^{m} \wD(x) \, \dd \rhoSx                                    \\
         & \le \LE( M_{\alpha} (E + F) \norm{ \urho }_{\rhoTX} \cdot \lambda^{-\LE( \frac{\alpha}{2} - r \RI)} + 2 G \RI)^{m}
        \cdot D^{m-1}
        \cdot \LE( \sqrt{2} M_{\alpha} \lambda^{-\alpha/2} \RI)^{m - 2} \calN(\lambda).
    \end{align*}
    After simplification, we obtain the moment bound
    \[
        \E{ \norm{ \xi }_{\calH}^{m} }
        \le \frac{1}{2} m! \tilde{L}_{4}^{m-2} \tilde{\sigma}_{4}^{2}
    \]
    with parameters
    \begin{align*}
        \tilde{L}_{4}
         & = \sqrt{2} M_{\alpha} \cdot \LE( M_{\alpha} (E + F) \norm{ \urho }_{\rhoTX} \cdot \lambda^{-\LE( \frac{\alpha}{2} - r \RI)} + 2 G \RI)
        \cdot D \lambda^{-\alpha/2},                                                                                                              \\
        \tilde{\sigma}_{4}
         & = \LE( M_{\alpha} (E + F) \norm{ \urho }_{\rhoTX} \cdot \lambda^{-\LE( \frac{\alpha}{2} - r \RI)} + 2 G \RI)
        \cdot D^{1/2} \calN^{\frac{1}{2}}(\lambda).
    \end{align*}
    Applying \lemref{lem:bernstein}, we conclude that
    \[
        \norm{ (\LKlamD)^{-1/2} \LE( \LE( (\hSKDa \, \bsy - \hLKD \, \flam \RI) - (\LKD \, \frho - \LKD \, \flam) \RI) }_{\calH}
        \le 4 \LE( \frac{\tilde{L}_{4}}{n} + \frac{\tilde{\sigma}_{4}}{\sqrt{n}} \RI) \log \frac{6}{\delta}
    \]
    holds with probability at least \( 1 - \delta/3 \). Substituting the expressions for \( \tilde{L}_{4} \) and \( \tilde{\sigma}_{4} \), and using \( \lambda = n^{-s} \), we obtain the asymptotic rate:
    \begin{align*}
         & \eqspace \LE( \frac{(\lambda^{r - \alpha} + \lambda^{-\alpha/2}) D}{n}
        + \frac{\LE( \lambda^{-\LE( \frac{\alpha}{2} - r \RI)} + 1 \RI) D^{1/2} \calN^{1/2}(\lambda)}{\sqrt{n}} \RI) \log \frac{6}{\delta} \\
         & \asymp \LE(
        \frac{n^{s \alpha + \nu} + n^{s \LE( \frac{\alpha}{2} + r \RI) + \nu}}{n}
        + \LE( \frac{n^{s \LE( \alpha + \frac{1}{\beta} \RI) + \nu} + n^{s \LE( 2 r + \frac{1}{\beta} \RI) + \nu}}{n} \RI)^{1/2}
        \RI) \lambda^{r} \log \frac{6}{\delta}.
    \end{align*}
    Condition \neweqref{eq:(T) R4}{R4} ensures that this expression is \( O( \, \lambda^{r} \log (6/\delta) \, ) \).

    For the second component \( \norm{ (\LKlamD)^{-1/2} \, (\LKD \, \frho - \LKD \, \flam) }_{\calH} \) in \eqref{eq:(T) decomp J3}, note that \( \LK - \LKD \) is positive semi-definite on \( \calH \), implying
    \[
        \norm{ (\LKD)^{1/2} \, f }_{\calH}^{2}
        = \inner{ \LKD \, f, f }_{\calH}
        \le \inner{ \LK \, f, f }_{\calH}
        = \norm{ \LK^{1/2} \, f }_{\calH}^{2}
        = \norm{ f }_{\rhoTX}^{2},
        \quad \forall f \in \calH.
    \]
    Therefore,
    \begin{equation}
        \begin{aligned}
             & \eqspace \norm{ (\LKlamD)^{-1/2} \, (\LKD \, \frho - \LKD \, \flam) }_{\calH}                        \\
             & \le \norm{ (\LKlamD)^{-1/2} \, (\LKD)^{1/2} } \cdot \norm{ (\LKD)^{1/2} \, (\frho - \flam) }_{\calH}
            \le \norm{ (\LKD)^{1/2} \, (\frho - \flam) }_{\calH}                                                    \\
             & \le \norm{ \frho - \flam }_{\rhoTX}
            \le F \norm{ \urho }_{\rhoTX} \cdot \lambda^{r},
        \end{aligned}
    \end{equation}
    where the last inequality follows from \proref{pro:approximation error} with \( \gamma = 0 \), confirming an \( O(\lambda^{r}) \) rate.

    Combining both bounds and accounting for the probabilistic estimates, we conclude that under \eqref{eq:(T) S2}, \neweqref{eq:(T) R3}{R3}, and \neweqref{eq:(T) R4}{R4}, the target term \( J_{3}^{\dagger} \) is bounded by \( O( \, \lambda^{r} \log (6/\delta) \, ) \) with probability at least \( 1 - (2\delta)/3 \).
\end{proof}

Finally, the following proposition bounds the term \( J_{4}^{\dagger} = \norm{ (\hLKlamD)^{1/2} \LE( I - \hLKD \, \philam(\hLKD) \RI) \flam }_{\calH} \) in \eqref{eq:(T) J}:
\begin{proposition}
    \label{pro:(T) hLKlamD^1/2 psilam(hLKlamD) flam}
    Suppose that \aspref{asp:source} holds with \( r \in (0, \tau] \), and assume the conditions of \proref{pro:(T) LKlamD^1/2 hLKlamD^-1/2}. If \( D^{-(m-1)p} \le n^{-1/2} \), then for any \( r \in (0, \tau] \) and \( \delta \in (0, 1) \), with probability at least \( 1 - (2 \delta) / 3 \), we have:
    \begin{align*}
        J_{4}^{\dagger}
         & = \norm{ (\hLKlamD)^{1/2} \LE( I - \hLKD \, \philam(\hLKD) \RI) \flam }_{\calH} \\
         & \le 4 E F \norm{ \urho }_{\rhoTX} \cdot \LE(
        \lambda^{r}
        + \Delta^{\dagger} \cdot \lambda^{1/2} n^{-\frac{\min \family{ 2r, 3 } - 1}{4}} \log \frac{6}{\delta} \cdot \ind{r > 1}
        \RI),
    \end{align*}
    where
    \[
        \Delta^{\dagger} = 4 r \kappa^{2r - 1} \LE( L + \sigma + \LE( \frac{1}{2} m! L^{m-2} \sigma^{2} \RI)^{p} \RI)
    \]
    is a constant independent of \( n \) and \( \delta \).
\end{proposition}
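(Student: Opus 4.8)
The plan is to follow the proof of \proref{pro:hLKlam^1/2 psilam(hLKlam) flam} essentially verbatim, replacing $\hLK,\LK$ everywhere by the truncated operators $\hLKD,\LKD$ and invoking \proref{pro:(T) LKlamD^1/2 hLKlamD^-1/2} (valid under the present hypotheses) in place of \proref{pro:LKlam^1/2 hLKlam^-1/2}; in particular $\norm{ (\hLKlamD)^{-1/2} \, (\LKlamD)^{1/2} } \le \sqrt{2}$ holds with probability at least $1-\delta/3$, and $\norm{\hLKD} \le \kappa^{2}$ because $0 \le \inner{ \hLKD \, f, f }_{\calH} \le \inner{ \hLK \, f, f }_{\calH} \le \kappa^{2}\norm{f}_{\calH}^{2}$ under the standing assumption $\norm{\hLK}\le\kappa^{2}$. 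The object $\flam = \philam(\LK)\,\LK^{r+1}\,\urho$ is left unchanged, being a population quantity. As in the original proof, argue in the three regimes $0<r<1/2$, $1/2\le r\le 1$, and $r>1$.

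In the first regime, $J_{4}^{\dagger} \le \norm{ (\hLKlamD)^{1/2}\LE( I - \hLKD \, \philam(\hLKD) \RI) }\cdot\norm{\flam}_{\calH}$, and the two factors are bounded by $2F\lambda^{1/2}$ (using $(a+b)^{1/2}\le a^{1/2}+b^{1/2}$, \eqref{eq:psilam} with $\theta=1/2$ and $\theta=0$, and $\norm{\hLKD}\le\kappa^{2}$) and by $E\norm{\urho}_{\rhoTX}\lambda^{r-1/2}$ (using \eqref{eq:philam}), exactly as before; hence $J_{4}^{\dagger}\le 2EF\norm{\urho}_{\rhoTX}\lambda^{r}$. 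In the second regime, reproduce the operator splitting of the original proof, inserting the transfer factorizations $(\hLKlamD)^{r-1/2}(\hLKlamD)^{-(r-1/2)}$ and $(\LKlamD)^{r-1/2}(\LKlamD)^{-(r-1/2)}$ so as to collect the $\hLKD$-powers on the left (bounded by $2F\lambda^{r}$ via \eqref{eq:psilam}) and the $\LK$-powers on the right (bounded by $E$ via \eqref{eq:philam}), leaving the transfer factors $\norm{ (\hLKlamD)^{-(r-1/2)}(\LKlamD)^{r-1/2} }\le\sqrt{2}$ (from \lemref{lem:cordes} and \proref{pro:(T) LKlamD^1/2 hLKlamD^-1/2}) and $\norm{ (\LKlamD)^{-(r-1/2)}\LK^{r-1/2} }$. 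The latter is the only genuinely new factor: by \lemref{lem:cordes} it is at most $\norm{ (\LKlamD)^{-1}\LK }^{r-1/2}$, and since $\norm{ (\LKlamD)^{-1}\LKD }\le 1$ and $\norm{ (\LKlamD)^{-1}(\LK-\LKD) }\le\lambda^{-1}\norm{\LK-\LKD}$, it is at most $\LE( 1 + \lambda^{-1}\norm{\LK-\LKD} \RI)^{r-1/2}$; the truncation-bias estimate below, together with the first smallness condition assumed in \proref{pro:(T) LKlamD^1/2 hLKlamD^-1/2}, keeps this factor $\le\sqrt{2}$. This gives $J_{4}^{\dagger}\le 4EF\norm{\urho}_{\rhoTX}\lambda^{r}$.

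In the third regime $r>1$, bound $J_{4}^{\dagger}\le E\norm{\urho}_{\rhoTX}\norm{ (\hLKlamD)^{1/2}\LE( I - \hLKD \, \philam(\hLKD) \RI)\LK^{r-1/2} }$ and split $\LK^{r-1/2} = \LE( \LK^{r-1/2} - (\hLKD)^{r-1/2} \RI) + (\hLKD)^{r-1/2}$. The term with $(\hLKD)^{r-1/2}$ is at most $2F\lambda^{r}$ by \eqref{eq:psilam} (with $\theta=r$ and $\theta=r-1/2$) and $\norm{\hLKD}\le\kappa^{2}$; the other term is at most $2F\lambda^{1/2}\norm{ \LK^{r-1/2} - (\hLKD)^{r-1/2} }$, and \lemref{lem:A^s - B^s} bounds this by $\norm{\LK-\hLKD}^{r-1/2}$ for $r\in(1,3/2]$ and by $\LE( r - 1/2 \RI)\kappa^{2r-3}\norm{\LK-\hLKD}$ for $r>3/2$. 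Everything thus reduces to $\norm{\LK-\hLKD}\le\norm{\LK-\LKD}+\norm{\LKD-\hLKD}$. For the bias, $w-\wD = (w-D)_{+}\le w\,\ind{w>D}$, so $\norm{\LK-\LKD}\le\norm{ \E{ w(x)\,\ind{w(x)>D}\,\Kx\Kxa } }\le\kappa^{2}\rhoTX(w>D)\le\kappa^{2}D^{-p(m-1)}\LE( \frac{1}{2}m!L^{m-2}\sigma^{2} \RI)^{p}$ by Markov's inequality and \aspref{asp:ratio}, and the standing hypothesis $D^{-(m-1)p}\le n^{-1/2}$ makes this at most $\kappa^{2}\LE( \frac{1}{2}m!L^{m-2}\sigma^{2} \RI)^{p}n^{-1/2}$. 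For the stochastic part, the Bernstein argument of \lemref{lem:hLK - LK} applies verbatim with $\wD$ in place of $w$ (its moments dominated by those of $w(x)\,\Kx\Kxa$ via $\wD\le w$ and \aspref{asp:ratio}), giving $\norm{\LKD-\hLKD}\le 4\kappa^{2}(L+\sigma)n^{-1/2}\log\frac{6}{\delta}$ with probability at least $1-\delta/3$. Hence $\norm{\LK-\hLKD}\le 4\kappa^{2}\LE( L+\sigma+\LE( \frac{1}{2}m!L^{m-2}\sigma^{2} \RI)^{p} \RI)n^{-1/2}\log\frac{6}{\delta}$, and substituting into the two sub-cases—and rewriting the $n$-exponent through $\min\family{ 2r, 3 }$—yields the extra term $\Delta^{\dagger}\lambda^{1/2}n^{-\frac{\min\family{ 2r, 3 }-1}{4}}\log\frac{6}{\delta}$ with $\Delta^{\dagger} = 4r\kappa^{2r-1}( L+\sigma+( \frac{1}{2}m!L^{m-2}\sigma^{2} )^{p} )$.

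Collecting the three regimes and taking a union bound over the (at most two) probabilistic estimates—the $\sqrt{2}$-bound from \proref{pro:(T) LKlamD^1/2 hLKlamD^-1/2} and the concentration of $\hLKD$ around $\LKD$—yields the claim with probability at least $1-(2\delta)/3$. \textbf{The main obstacle} is that, unlike in \proref{pro:hLKlam^1/2 psilam(hLKlam) flam}, the operator-norm comparisons now mix the truncated operators with the untruncated $\LK$ sitting inside $\flam$; the crux is controlling the truncation bias $\norm{\LK-\LKD}$ (and the derived quantity $\norm{ (\LKlamD)^{-1}\LK }$ it feeds into) sharply enough—via \aspref{asp:ratio} and the choice $D=n^{\nu}$—that it enters only at order $n^{-1/2}$, on the same footing as the stochastic fluctuation $\norm{\hLKD-\LKD}$, so that condition \neweqref{eq:R2}{R2} can absorb it into the target rate $\lambda^{r}$. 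Once this bias is quantified, the remaining work is a routine transcription of the untruncated argument.
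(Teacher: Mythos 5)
Your proposal follows the paper's proof almost step for step: the same three-regime case analysis, the same transfer factorizations in the middle regime, the same splitting $\LK^{r-1/2} = (\LK^{r-1/2} - (\hLKD)^{r-1/2}) + (\hLKD)^{r-1/2}$ for $r>1$, and the same reduction of the truncation bias to $n^{-1/2}$ via Markov's inequality and $D^{-(m-1)p}\le n^{-1/2}$. Your direct bound $\norm{\LK-\LKD}\le\kappa^{2}\rhoTX(w>D)$ is in fact a slightly cleaner route than the Cauchy--Schwarz argument in \lemref{lem:(T) LK - hLKD} and gives the exponent $p$ without detour. The probability accounting ($2\delta/3$ over two events) also matches.

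There is, however, one step that fails as written, in the regime $1/2\le r\le 1$: the bound on the transfer factor $\norm{(\LKlamD)^{-(r-1/2)}\,\LK^{r-1/2}}$. You estimate $\norm{(\LKlamD)^{-1}(\LK-\LKD)}\le\lambda^{-1}\norm{\LK-\LKD}$ and claim the smallness condition of \proref{pro:(T) LKlamD^1/2 hLKlamD^-1/2} together with the bias estimate makes this at most $1$. It does not: the bias estimate gives $\norm{\LK-\LKD}\lesssim n^{-1/2}$, so $\lambda^{-1}\norm{\LK-\LKD}\lesssim n^{s-1/2}$, which need not be bounded (for $2r$ near $1$ the theorem's choice of $s$ can exceed $1/2$); likewise the smallness condition only yields $\norm{\LK-\LKD}\lesssim\lambda^{1/2}\calN^{-1/2}(\lambda)\asymp\lambda^{1/2+1/(2\beta)}$, and since $\beta>1$ this is larger than $\lambda$, so $\lambda^{-1}\norm{\LK-\LKD}$ can still diverge. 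The quantity you are bounding is fine; your intermediate inequality is just too crude. The correct route (the one the paper takes) is to factor through $\LKlam$: by \eqref{eq:LKlamD-1} and the Neumann series \eqref{eq:(I - (LK - LKD) LKlam-1)-1},
\[
    \norm{ \LK \, (\LKlamD)^{-1} }
    \le \norm{ \LK \, \LKlam^{-1} } \cdot \norm{ \LE( I - (\LK - \LKD) \, \LKlam^{-1} \RI)^{-1} }
    \le 2,
\]
which uses $\norm{(\LK-\LKD)\,\LKlam^{-1}}\le 1/2$ from \lemref{lem:(T) (LK - LKD) LKlam^-1} (this is exactly what the first smallness condition controls, as opposed to $\lambda^{-1}\norm{\LK-\LKD}$). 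With \lemref{lem:cordes} this gives the factor $2^{r-1/2}\le\sqrt{2}$ and the constant $4EF$ in the claimed bound. Everything else in your argument goes through.
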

\begin{proof}
    We extend the approach from \proref{pro:hLKlam^1/2 psilam(hLKlam) flam} through a case analysis based on the source condition exponent \( r \):
    \begin{itemize}
        \item \( 0 < r < 1/2 \): Starting from the expansion:
              \[
                  \begin{aligned}
                      J_{4}^{\dagger} & = \norm{ (\hLKlamD)^{1/2} \LE( I - \hLKD \, \philam(\hLKD) \RI) \flam }_{\calH}                                                \\
                                      & \le \norm{ (\hLKlamD)^{1/2} \LE( I - \hLKD \, \philam(\hLKD) \RI) } \cdot \norm{ \philam(\LK) \, \LK^{r+1} \, \urho }_{\calH}.
                  \end{aligned}
              \]
              Following the proof of \proref{pro:hLKlam^1/2 psilam(hLKlam) flam}, we derive the bounds:
              \[
                  \norm{ (\hLKlamD)^{1/2} \LE( I - \hLKD \, \philam(\hLKD) \RI) }
                  \le 2 F \lambda^{1/2},
              \]
              and
              \[
                  \norm{ \philam(\LK) \, \LK^{r+1} \, \urho }_{\calH}
                  \le E \norm{ \urho }_{\rhoTX} \cdot \lambda^{r - \frac{1}{2}},
              \]
              which together yield
              \begin{equation}
                  \label{eq:(T) case1}
                  J_{4}^{\dagger}
                  \le 2 E F \norm{ \urho }_{\rhoTX} \cdot \lambda^{r}.
              \end{equation}

        \item \( 1/2 \le r \le 1 \): We proceed with the decomposition:
              \begin{align*}
                  J_{4}^{\dagger} & = \norm{ (\hLKlamD)^{1/2} \LE( I - \hLKD \, \philam(\hLKD) \RI) \flam }_{\calH}                                                          \\
                                  & \le \norm{ (\hLKlamD)^{1/2} \LE( I - \hLKD \, \philam(\hLKD) \RI) \philam(\LK) \, \LK^{r + \frac{1}{2}} } \cdot \norm{ \urho }_{\rhoTX}.
              \end{align*}
              Since \( r - 1/2 \ge 0 \), we have
              \begin{align*}
                   & \eqspace \norm{ (\hLKlamD)^{1/2} \LE( I - \hLKD \, \philam(\hLKD) \RI) \philam(\LK) \, \LK^{r + \frac{1}{2}} }                        \\
                   & = \bigg\|
                  (\hLKlamD)^{1/2} \LE( I - \hLKD \, \philam(\hLKD) \RI) (\hLKlamD)^{r - \frac{1}{2}}
                  \circ (\hLKlamD)^{-\LE( r - \frac{1}{2} \RI)} \, (\LKlamD)^{r - \frac{1}{2}}                                                             \\
                   & \eqspace \hphantom{\bigg\|} \circ (\LKlamD)^{-\LE( r - \frac{1}{2} \RI)} \, \LK^{r - \frac{1}{2}}
                  \circ \philam(\LK) \, \LK
                  \bigg\|                                                                                                                                  \\
                   & \le \norm{ (\hLKlamD)^{r} \LE( I - \hLKD \, \philam(\hLKD) \RI) }
                  \cdot \norm{ (\hLKlamD)^{-\LE( r - \frac{1}{2} \RI)} \, (\LKlamD)^{r - \frac{1}{2}} }
                  \cdot \norm{ (\LKlamD)^{-\LE( r - \frac{1}{2} \RI)} \, \LK^{r - \frac{1}{2}} }                                                           \\
                   & \eqspace \cdot \norm{ \philam(\LK) \, \LK }                                                                                           \\
                   & \le 2 F \lambda^{r} \cdot (\sqrt{2})^{2r - 1} \cdot \norm{ (\LKlamD)^{-\LE( r - \frac{1}{2} \RI)} \, \LK^{r - \frac{1}{2}} } \cdot E.
              \end{align*}
              In the last line, the bound for \( \norm{ (\hLKlamD)^{-\LE( r - \frac{1}{2} \RI)} \, (\LKlamD)^{r - \frac{1}{2}} } \) follows from \lemref{lem:cordes} and \proref{pro:(T) LKlamD^1/2 hLKlamD^-1/2} (which holds with probability at least \( 1 - \delta/3 \)). Moreover, the proof of \proref{pro:(T) LKlamD^1/2 hLKlamD^-1/2} establishes that (see \eqref{eq:LKlamD-1} and \eqref{eq:(I - (LK - LKD) LKlam-1)-1})
              \[
                  (\LKlamD)^{-1}
                  = \LKlam^{-1} \LE( I - (\LK - \LKD) \, \LKlam^{-1} \RI)^{-1},
              \]
              with
              \[
                  \norm{ \LE( I - (\LK - \LKD) \, \LKlam^{-1} \RI)^{-1} }
                  \le 2.
              \]
              Combining this with \lemref{lem:cordes} yields
              \begin{align*}
                  \norm{ \LK^{r - \frac{1}{2}} \, (\LKlamD)^{-\LE( r - \frac{1}{2} \RI)} }
                   & \le \norm{ \LK \, (\LKlamD)^{-1} }^{r - \frac{1}{2}}                                                                      \\
                   & \le \LE( \norm{ \LK \, \LKlam^{-1} } \cdot \norm{ \LE( I - (\LK - \LKD) \, \LKlam^{-1} \RI)^{-1} } \RI)^{r - \frac{1}{2}} \\
                   & \le 2^{r - \frac{1}{2}}.
              \end{align*}

              This leads to the final bound:
              \begin{equation}
                  \label{eq:(T) case2}
                  J_{4}^{\dagger}
                  \le 2^{2r} E F \norm{ \urho }_{\rhoTX} \cdot \lambda^{r}
                  \le 4 E F \norm{ \urho }_{\rhoTX} \cdot \lambda^{r}.
              \end{equation}

        \item \( r > 1 \): To estimate \( \norm{ (\hLKlamD)^{1/2} \LE( I - \hLKD \, \philam(\hLKD) \RI) \philam(\LK) \, \LK^{r + \frac{1}{2}} } \), we note that
              \[
                  \norm{ (\hLKlamD)^{1/2} \LE( I - \hLKD \, \philam(\hLKD) \RI) \philam(\LK) \, \LK^{r + \frac{1}{2}} }
                  \le \norm{ (\hLKlamD)^{1/2} \LE( I - \hLKD \, \philam(\hLKD) \RI) \LK^{r - \frac{1}{2}} } \cdot E.
              \]
              We then employ the decomposition:
              \begin{align*}
                  \norm{ (\hLKlamD)^{1/2} \LE( I - \hLKD \, \philam(\hLKD) \RI) \LK^{r - \frac{1}{2}} }
                   & \le \norm{ (\hLKlamD)^{1/2} \LE( I - \hLKD \, \philam(\hLKD) \RI) } \cdot \norm{ \LK^{r - \frac{1}{2}} - (\hLKD)^{r - \frac{1}{2}} } \\
                   & \eqspace + \norm{ (\hLKlamD)^{1/2} \LE( I - \hLKD \, \philam(\hLKD) \RI) (\hLKD)^{r - \frac{1}{2}} }                                 \\
                   & \le 2 F \lambda^{1/2} \cdot \norm{ \LK^{r - \frac{1}{2}} - (\hLKD)^{r - \frac{1}{2}} } + 2 F \lambda^{r},
              \end{align*}
              where the second inequality uses the identity \( \LK^{r - \frac{1}{2}} = \LE( \LK^{r - \frac{1}{2}} - (\hLKD)^{r - \frac{1}{2}} \RI) + (\hLKD)^{r - \frac{1}{2}} \). Applying \lemref{lem:A^s - B^s} gives:
              \begin{align*}
                  \norm{ \LK^{r - \frac{1}{2}} - (\hLKD)^{r - \frac{1}{2}} } \le
                  \begin{cases}
                      \norm{ \LK - \hLKD }^{r - \frac{1}{2}},                       & r \in (1, 3/2];
                      \medskip                                                                        \\
                      \LE( r - \frac{1}{2} \RI) \kappa^{2r-3} \norm{ \LK - \hLKD }, & r > 3/2.
                  \end{cases}
              \end{align*}
              According to \lemref{lem:(T) LK - hLKD}, with probability at least \( 1 - \delta / 3 \):
              \begin{align*}
                  \norm{ \LK - \hLKD }
                   & \le \kappa^{2} \LE( D^{-(m-1)} \frac{1}{2} m! L^{m-2} \sigma^{2} \RI)^{p} + 4 \kappa^{2} \LE( \frac{L}{n} + \frac{\sigma}{\sqrt{n}} \RI) \log \frac{6}{\delta} \\
                   & \le 4 \kappa^{2} \LE( L + \sigma + \LE( \frac{1}{2} m! L^{m-2} \sigma^{2} \RI)^{p} \RI) n^{-1/2} \log \frac{6}{\delta},
              \end{align*}
              where we use the assumption \( D^{-(m-1)p} \le n^{-1/2} \). Combining these results yields
              \begin{equation}
                  \label{eq:(T) case3}
                  J_{4}^{\dagger}
                  \le 2 E F \norm{ \urho }_{\rhoTX} \cdot \LE(
                  \lambda^{r}
                  + \Delta^{\dagger} \cdot \lambda^{1/2} n^{-\frac{\min \family{ 2r, 3 } - 1}{4}} \log \frac{6}{\delta}
                  \RI)
              \end{equation}
              with
              \[
                  \Delta^{\dagger} = 4 r \kappa^{2r - 1} \LE( L + \sigma + \LE( \frac{1}{2} m! L^{m-2} \sigma^{2} \RI)^{p} \RI).
              \]
    \end{itemize}

    The proof is completed by combining the bounds from \eqref{eq:(T) case1}, \eqref{eq:(T) case2}, and \eqref{eq:(T) case3}.
\end{proof}

Now we are ready to prove \thmref{thm:truncated weighting}.
\begin{proofof}{\thmref{thm:truncated weighting}}
    To apply \proref{pro:(T) estimation error}, for fixed \( m \ge 2 \), we define:
    \[
        \nu = \frac{1}{p(m - 1) + 1},
    \]
    and select the regularization parameter as:
    \[
        s =
        \begin{cases}
            \frac{1 - \nu}{2r + 1/\beta},           & 2r > 1;
            \smallskip                                          \\
            \frac{1 - \nu}{1 + \epsilon + 1/\beta}, & 2r \le 1,
        \end{cases}
    \]
    where \( \epsilon > 0 \) is an arbitrarily small constant. These choices ensure that conditions \neweqref{eq:R2}{R2}, \neweqref{eq:(T) R3}{R3}, \neweqref{eq:(T) R4}{R4}, and \neweqref{eq:(T) R5}{R5} are simultaneously satisfied with \( \alpha = 1 \); note that our selection is independent of \( \alpha_{0} \). Furthermore, we choose \( n \) sufficiently large so that condition \eqref{eq:(T) S2} holds, as specified in \eqref{eq:(T) n} of \thmref{thm:truncated weighting}. In particular, by \lemref{lem:embedding norm} and the assumption \( \sup_{x \in \calX} K(x, x) \leq \kappa^{2} \), we may replace \( M_{1} \) with \( \kappa \) in \eqref{eq:(T) n}. Consequently, \proref{pro:(T) estimation error} holds with probability at least \( 1 - \delta \). Combining this result with \proref{pro:approximation error}, we obtain
    \[
        \norm{ \hflamD - \frho }_{\ranH{\gamma}}
        = O \LE( \lambda^{r - \frac{\gamma}{2}} \log \frac{6}{\delta} \RI).
    \]
    Substituting \( \lambda = n^{-s} \) completes the proof of \thmref{thm:truncated weighting}.
\end{proofof}

\newpage
\printbibliography[heading=bibintoc, title=\ebibname]

\newpage
\appendix
\section{Appendix}
\label{sec:appendix}

This appendix presents auxiliary lemmas referenced in \secref{sec:proof}. Throughout this appendix, unless explicitly stated otherwise, all expectations and probabilities are computed with respect to \( x \sim \rhoSX \).

We first introduce the following lemma, which establishes bounds for the effective dimension \( \calN(\lambda) \) under the eigenvalue decay assumption (\aspref{asp:eigenvalue}). This result plays a crucial role in deriving the parameter constraints.
\begin{lemma}
    \label{lem:effective dimension}
    Under the eigenvalue decay condition \( t_{j} \asymp j^{-\beta} \) from \aspref{asp:eigenvalue}, we have
    \[
        \calN(\lambda)
        = \Tr{ (\LK + \lambda I)^{-1} \, \LK }
        \asymp \lambda^{-1/\beta}.
    \]
\end{lemma}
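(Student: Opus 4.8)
The plan is to work directly with the series representation
\[
    \calN(\lambda) = \Tr{(\LK + \lambda I)^{-1}\,\LK} = \sum_{j \in N} \frac{t_{j}}{t_{j} + \lambda},
\]
which follows from the eigendecomposition \eqref{eq:decomp K} of $\LK$, and then to sandwich each summand using \aspref{asp:eigenvalue}. Since $t \mapsto t/(t+\lambda)$ is increasing on $[0,\infty)$, the bounds $c\,j^{-\beta} \le t_{j} \le C\,j^{-\beta}$ give
\[
    \frac{c}{c + \lambda j^{\beta}} = \frac{c\,j^{-\beta}}{c\,j^{-\beta} + \lambda} \le \frac{t_{j}}{t_{j} + \lambda} \le \frac{C\,j^{-\beta}}{C\,j^{-\beta} + \lambda} = \frac{C}{C + \lambda j^{\beta}},
\]
so it is enough to estimate $\sum_{j} \frac{a}{a + \lambda j^{\beta}}$ for a generic constant $a > 0$, taking $a = c$ for the lower bound and $a = C$ for the upper bound. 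Here I read \aspref{asp:eigenvalue} as concerning an operator with infinitely many positive eigenvalues (so $N = \infty$); when $N$ is finite the series is trivially bounded, and the asserted equivalence is then understood for $\lambda$ bounded away from $0$, i.e. in the regime where the truncation threshold introduced below does not exceed the number of nonzero eigenvalues.

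For the upper bound I would introduce the threshold $j_{\lambda} = (C/\lambda)^{1/\beta}$ and split $\sum_{j} \frac{C}{C + \lambda j^{\beta}} = \sum_{j \le j_{\lambda}} + \sum_{j > j_{\lambda}}$. Every term in the first sum is at most $1$, so that piece contributes at most $\lceil j_{\lambda}\rceil \le C^{1/\beta}\lambda^{-1/\beta} + 1$. In the second sum each term is at most $C\,j^{-\beta}/\lambda$, and because $\beta > 1$ the tail $\sum_{j > j_{\lambda}} j^{-\beta}$ is controlled by the integral $\int_{j_{\lambda}}^{\infty} x^{-\beta}\,\dd x = j_{\lambda}^{1-\beta}/(\beta - 1)$; hence that piece is at most $\frac{C}{\lambda(\beta - 1)}\, j_{\lambda}^{1-\beta} = \frac{C^{1/\beta}}{\beta - 1}\,\lambda^{-1/\beta}$. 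Adding the two contributions yields $\calN(\lambda) \le C_{\calN}\,\lambda^{-1/\beta}$ with $C_{\calN}$ depending only on $C$ and $\beta$ (the additive $+1$ being absorbed into the constant in the relevant regime).

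For the lower bound I would simply discard all but the indices with $c\,j^{-\beta} \ge \lambda$, that is $j \le (c/\lambda)^{1/\beta}$. For each such $j$ we have $t_{j} \ge \lambda$, hence $\frac{t_{j}}{t_{j} + \lambda} \ge \frac{t_{j}}{2 t_{j}} = \frac{1}{2}$, and the number of such indices is at least $\lfloor (c/\lambda)^{1/\beta}\rfloor$, giving $\calN(\lambda) \ge \frac{1}{2}\lfloor (c/\lambda)^{1/\beta}\rfloor \ge c_{\calN}\,\lambda^{-1/\beta}$ for a constant $c_{\calN}$ depending only on $c$ and $\beta$. Combining the two estimates gives exactly $c_{\calN}\,\lambda^{-1/\beta} \le \calN(\lambda) \le C_{\calN}\,\lambda^{-1/\beta}$, i.e. \eqref{eq:constant effect dim}. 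The argument is entirely routine; the only place calling for a little care is the comparison of the tail series with its integral, which is where $\beta > 1$ is essential (this is precisely the trace-class condition $\sum_{j} t_{j} < \infty$), together with the minor bookkeeping of the floor/ceiling corrections and of the possibly finite index set $N$.
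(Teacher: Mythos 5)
Your proof is correct and follows essentially the same approach as the paper: both arguments sandwich each summand \( t_{j}/(t_{j} + \lambda) \) term-by-term using the monotonicity of \( t \mapsto t/(t + \lambda) \) together with \aspref{asp:eigenvalue}, and then estimate the resulting series \( \sum_{j} a j^{-\beta}/(a j^{-\beta} + \lambda) \). The only (immaterial) difference is in that last estimation step—the paper compares the sum to an integral and substitutes \( v = \lambda^{1/\beta} x \), whereas you split the sum at the threshold \( j_{\lambda} \asymp \lambda^{-1/\beta} \), bounding the head by its cardinality and the tail by a \( p \)-series; both correctly isolate where \( \beta > 1 \) is used and where the small-\( \lambda \) regime is implicitly assumed for the lower bound.
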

\begin{proof}
    Using the monotonicity of the function \( t \mapsto \frac{t}{t + \lambda} \), we obtain the bounds:
    \[
        \sum_{j \in N} \frac{c j^{-\beta}}{c j^{-\beta} + \lambda}
        \le \calN(\lambda)
        = \sum_{j \in N} \frac{t_{j}}{t_{j} + \lambda}
        \le \sum_{j \in N} \frac{C j^{-\beta}}{C j^{-\beta} + \lambda}.
    \]
    Approximating the sums by integrals yields:
    \[
        \int_{0}^{\infty} \frac{c (x + 1)^{-\beta}}{c (x + 1)^{-\beta} + \lambda} \, \dd x
        \le \calN(\lambda)
        \le \int_{0}^{\infty} \frac{C x^{-\beta}}{C x^{-\beta} + \lambda} \, \dd x.
    \]
    Applying the substitution \( v = \lambda^{1/\beta} x \), we obtain
    \begin{alignat*}{2}
        \int_{0}^{\infty} \frac{c (x + 1)^{-\beta}}{c (x + 1)^{-\beta} + \lambda} \, \dd x
         & = \lambda^{-1/\beta} \int_{\lambda^{1/\beta}}^{\infty} \frac{c}{c + v^{\beta}} \, \dd v
         &                                                                                         & \ge c_{\calN} \lambda^{-1/\beta}, \\
        \int_{0}^{\infty} \frac{C x^{-\beta}}{C x^{-\beta} + \lambda} \, \dd x
         & = \lambda^{-1/\beta} \int_{0}^{\infty} \frac{C}{C + v^{\beta}} \, \dd v
         &                                                                                         & \le C_{\calN} \lambda^{-1/\beta},
    \end{alignat*}
    where \( c_{\calN} \) and \( C_{\calN} \) are positive constants. Combining these inequalities gives \( \calN(\lambda) \asymp \lambda^{-1/\beta} \).
\end{proof}

The Bernstein inequality is employed repeatedly to control deviations between empirical means and their expectations:
\begin{lemma}[\cite{Caponnetto2007OptimalRR}, Proposition~2]
    \label{lem:bernstein}
    Let \( (\Omega, \calB, \rho) \) be a probability space and \( \xi = \xi(\omega) \) a random variable taking values in a separable Hilbert space \( \mathscr{H} \). Suppose that there exist positive constants \( \tilde{L} \) and \( \tilde{\sigma} \) such that
    \[
        \E{ \, \norm{ \xi - \E{ \xi } }_{\mathscr{H}}^{m} \, } \le \frac{1}{2} m! \tilde{L}^{m-2} \tilde{\sigma}^{2},
        \quad \forall m \ge 2.
    \]
    Then, for any i.i.d. sample \( \family{ \xi_{i} }_{i=1}^{n} \) and any \( \delta \in (0, 1) \), with probability at least \( 1 - \delta \),
    \[
        \norm{ \frac{1}{n} \sum_{i=1}^{n} \xi_{i} - \E{ \xi } }_{\mathscr{H}}
        \le 2 \LE( \frac{\tilde{L}}{n} + \frac{\tilde{\sigma}}{\sqrt{n}} \RI) \log \frac{2}{\delta}.
    \]
\end{lemma}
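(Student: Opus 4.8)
\noindent The plan is to reduce the statement to a one-sided Bernstein-type tail bound for a sum of independent, mean-zero, Hilbert-space-valued summands, and then invert that tail bound by solving a scalar quadratic inequality. First I would center: replacing $\xi$ by $\xi - \E{\xi}$, we may assume without loss of generality that $\E{\xi} = 0$, set $T_{n} := \sum_{i=1}^{n} \xi_{i}$, and note that the quantity to be controlled is $\norm{ T_{n} }_{\mathscr{H}} / n$. For the i.i.d.\ centered sample $\family{ \xi_{i} }$, the hypothesis yields, for every $m \ge 2$,
\[
    \sum_{i=1}^{n} \E{ \norm{ \xi_{i} }_{\mathscr{H}}^{m} }
    = n \, \E{ \norm{ \xi }_{\mathscr{H}}^{m} }
    \le \frac{1}{2} m! \, \tilde{L}^{m-2} \, \Sigma^{2},
    \quad \Sigma^{2} := n \tilde{\sigma}^{2},
\]
so the Bernstein moment condition holds for the partial sum $T_{n}$ with scale parameter $\tilde{L}$ and variance proxy $\Sigma^{2}$.

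\noindent Next I would invoke the classical Bernstein (Pinelis--Sakhanenko) inequality in a separable Hilbert space, which is precisely the mechanism underlying Proposition~2 of \cite{Caponnetto2007OptimalRR}: under the moment condition above,
\[
    \Pr{ \norm{ T_{n} }_{\mathscr{H}} \ge t }
    \le 2 \exp \LE( - \frac{t^{2}}{2 (\Sigma^{2} + \tilde{L} t)} \RI),
    \quad \forall t > 0.
\]
It then remains to choose $t$ making the right-hand side at most $\delta$. Writing $\ell := \log(2/\delta)$, this requires $t^{2} - 2 \ell \tilde{L} t - 2 \ell \Sigma^{2} \ge 0$, whose largest root is $t_{\ast} = \ell \tilde{L} + \sqrt{ \ell^{2} \tilde{L}^{2} + 2 \ell \Sigma^{2} }$. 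By subadditivity of the square root, $t_{\ast} \le 2 \ell \tilde{L} + \sqrt{ 2 \ell \Sigma^{2} }$; and since $\delta < 1$ forces $\ell > \log 2 > 1/2$, hence $\sqrt{2\ell} \le 2\ell$, we get $\sqrt{ 2 \ell \Sigma^{2} } = \sqrt{2\ell} \, \sqrt{n} \, \tilde{\sigma} \le 2 \ell \sqrt{n} \, \tilde{\sigma}$, so $t_{\ast} \le 2 \ell ( \tilde{L} + \sqrt{n} \, \tilde{\sigma} )$. Consequently, with probability at least $1 - \delta$ one has $\norm{ T_{n} }_{\mathscr{H}} \le t_{\ast}$; dividing by $n$ gives $\norm{ \frac{1}{n} \sum_{i=1}^{n} \xi_{i} }_{\mathscr{H}} \le 2 ( \tilde{L}/n + \tilde{\sigma}/\sqrt{n} ) \log(2/\delta)$, which is the claimed bound.

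\noindent The one genuinely non-trivial ingredient is the Hilbert-space tail bound itself: the scalar Bernstein argument does not transfer verbatim, because $\norm{ \cdot }$ is not linear and one cannot pass to a one-dimensional exponential-moment computation directly. For a self-contained treatment I would exploit that a Hilbert space is $2$-smooth with constant $1$ (so $\E{ \norm{ x + \zeta }^{2} } = \norm{ x }^{2} + \E{ \norm{ \zeta }^{2} }$ whenever $\zeta$ is mean-zero and independent of $x$), and combine this with a truncation of the summands at a level proportional to $\tilde{L}$ together with an exponential-moment recursion along the partial sums (Yurinskii's method), or alternatively appeal to Pinelis's martingale inequality in smooth Banach spaces; either route reproduces the displayed sub-exponential tail. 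Since the excerpt already cites this as a known proposition, I would simply invoke it and present the algebraic inversion above in full detail.
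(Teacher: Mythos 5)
Your proposal is correct. The paper gives no proof of this lemma at all --- it is imported verbatim as Proposition~2 of \cite{Caponnetto2007OptimalRR} --- and your derivation (reduce to the Pinelis--Sakhanenko sub-exponential tail bound for sums of centered Hilbert-valued variables with $\Sigma^{2}=n\tilde{\sigma}^{2}$, then solve the quadratic $t^{2}-2\ell\tilde{L}t-2\ell\Sigma^{2}=0$ and use $\sqrt{2\ell}\le 2\ell$ since $\ell=\log(2/\delta)>1/2$) is exactly the standard argument behind that cited result; the algebra checks out and yields the stated constant $2$.
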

\begin{remark}
    Let \( \xi' \) be an independent copy of \( \xi \). Using Jensen's inequality, we obtain:
    \begin{align*}
        \E{ \, \norm{ \xi - \E{ \xi } }_{\mathscr{H}}^{m} \, }
         & \le \sE{ \, \sE{ \norm{ \xi - \xi' }_{\mathscr{H}}^{m} }{\xi'} \, }{\xi}
        \le 2^{m-1} \sE{ \, \sE{ \norm{ \xi }_{\mathscr{H}}^{m} + \norm{ \xi' }_{\mathscr{H}}^{m} }{\xi'} \, }{\xi} \\
         & = 2^{m} \E{ \norm{ \xi }_{\mathscr{H}}^{m} }.
    \end{align*}
    Consequently, if there exist positive constants \( \tilde{L} \) and \( \tilde{\sigma} \) satisfying
    \[
        \E{ \norm{ \xi }_{\mathscr{H}}^{m} }
        \le \frac{1}{2} m! \tilde{L}^{m-2} \tilde{\sigma}^{2},
        \quad \forall m \ge 2,
    \]
    then \( \E{ \, \norm{ \xi - \E{ \xi } }_{\mathscr{H}}^{m} \, } \le \frac{1}{2} m! (2 \tilde{L})^{m-2} (2 \tilde{\sigma})^{2} \). Applying \lemref{lem:bernstein} yields
    \[
        \norm{ \frac{1}{n} \sum_{i=1}^{n} \xi_{i} - \E{ \xi } }_{\mathscr{H}}
        \le 4 \LE( \frac{\tilde{L}}{n} + \frac{\tilde{\sigma}}{\sqrt{n}} \RI) \log \frac{2}{\delta}
    \]
    with probability at least \( 1 - \delta \).
\end{remark}

Next, \lemref{lem:cordes} (also known as the Cordes inequality) and \lemref{lem:A^s - B^s} establish bounds for operator powers:
\begin{lemma}[\cite{Cordes1987SpectralTL}, Lemma~5.1]
    \label{lem:cordes}
    Let \( A \) and \( B \) be positive bounded linear operators on a separable Hilbert space. For any \( h \in [0, 1] \), the following inequality holds:
    \[
        \norm{ A^{h} B^{h} }_{\textnormal{op}} \le \norm{ A B }_{\textnormal{op}}^{h},
    \]
    where \( \norm{ \cdot }_{\textnormal{op}} \) denotes the operator norm.
\end{lemma}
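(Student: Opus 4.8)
The plan is to reduce the inequality to the Löwner--Heinz operator monotonicity of $t\mapsto t^{h}$, after two routine simplifications. First I would dispose of the endpoints: for $h=0$ and $h=1$ the claim is trivial (with the convention $A^{0}=I$), so assume $h\in(0,1)$. Next I would reduce to the case where $A$ and $B$ are invertible. Replace $A,B$ by $A_{\varepsilon}=A+\varepsilon I$ and $B_{\varepsilon}=B+\varepsilon I$, which are strictly positive, hence invertible, for every $\varepsilon>0$. Since $0\le(t+\varepsilon)^{h}-t^{h}\le\varepsilon^{h}$ on $[0,\infty)$ (by concavity and subadditivity of $t^{h}$), the functional calculus gives $\norm{A_{\varepsilon}^{h}-A^{h}}\le\varepsilon^{h}$ and likewise for $B$; combined with $\norm{A_{\varepsilon}B_{\varepsilon}-AB}\to0$ this yields $\norm{A_{\varepsilon}^{h}B_{\varepsilon}^{h}}\to\norm{A^{h}B^{h}}$ and $\norm{A_{\varepsilon}B_{\varepsilon}}^{h}\to\norm{AB}^{h}$, so it suffices to prove the bound for invertible operators and then let $\varepsilon\to0$.

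Assume now $A,B>0$ are invertible. I would use the identity $\norm{T}^{2}=\norm{TT^{\ast}}$ twice: with $T=A^{h}B^{h}$ (so $T^{\ast}=B^{h}A^{h}$) it gives $\norm{A^{h}B^{h}}^{2}=\norm{A^{h}B^{2h}A^{h}}$, and with $T=AB$ it gives $\norm{AB}^{2}=\norm{AB^{2}A}=:M$, where $M>0$ since $AB$ is invertible. Because $A^{h}B^{2h}A^{h}$ is positive self-adjoint, the target inequality $\norm{A^{h}B^{h}}^{2}\le M^{h}$ is equivalent to the operator bound $A^{h}B^{2h}A^{h}\le M^{h}I$. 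To obtain it, start from $AB^{2}A\le MI$; conjugating by the self-adjoint invertible operator $A^{-1}$ gives $B^{2}\le MA^{-2}$. Applying the Löwner--Heinz inequality (operator monotonicity of $s\mapsto s^{h}$ for $h\in[0,1]$), together with $(B^{2})^{h}=B^{2h}$ and $(MA^{-2})^{h}=M^{h}A^{-2h}$ from the single-operator functional calculus, yields $B^{2h}\le M^{h}A^{-2h}$. Finally conjugating by $A^{h}$ gives $A^{h}B^{2h}A^{h}\le M^{h}A^{h}A^{-2h}A^{h}=M^{h}I$, which is exactly what was needed; tracing back, $\norm{A^{h}B^{h}}^{2}\le M^{h}=\norm{AB}^{2h}$.

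The one genuinely nontrivial ingredient ---  and the step I expect to be the main obstacle in a fully self-contained treatment --- is the Löwner--Heinz inequality, namely that $0\le X\le Y$ implies $X^{h}\le Y^{h}$ for $h\in[0,1]$; everything else in the argument (scaling, the $\norm{TT^{\ast}}=\norm{T}^{2}$ identity, conjugation by self-adjoint operators, pulling scalars through the functional calculus) preserves the order $\le$ and is elementary. Since this lemma is only quoted here from \cite{Cordes1987SpectralTL} and Löwner--Heinz is classical, I would simply cite it; if a proof were demanded, the standard route is the integral representation $t^{h}=c_{h}\int_{0}^{\infty}\lambda^{h-1}t(t+\lambda)^{-1}\,\dd\lambda$ with a positive constant $c_{h}$, which reduces operator monotonicity of $t\mapsto t^{h}$ to that of each resolvent-type map $t\mapsto t(t+\lambda)^{-1}$, and the latter follows from the elementary fact that $0\le X\le Y$ implies $(Y+\lambda I)^{-1}\le(X+\lambda I)^{-1}$.
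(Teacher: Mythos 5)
Your argument is correct. Note that the paper itself offers no proof of this lemma---it is quoted verbatim from Cordes---so there is no in-paper argument to compare against; what you have written is the standard derivation of the Cordes inequality from the L\"owner--Heinz theorem, and every step checks out. The two reductions are handled properly: the $\varepsilon$-regularization is justified by the subadditivity bound $0\le (t+\varepsilon)^{h}-t^{h}\le\varepsilon^{h}$ applied through the functional calculus, and the passage from the norm inequality to the operator inequality $A^{h}B^{2h}A^{h}\le M^{h}I$ is legitimate because that operator is positive and self-adjoint, for which $\norm{X}\le c$ is equivalent to $X\le cI$. The chain $AB^{2}A\le MI\Rightarrow B^{2}\le MA^{-2}\Rightarrow B^{2h}\le M^{h}A^{-2h}\Rightarrow A^{h}B^{2h}A^{h}\le M^{h}I$ uses only order-preserving conjugation by invertible self-adjoint operators, scalar homogeneity of the functional calculus, and L\"owner--Heinz, exactly as you say; and you correctly identify L\"owner--Heinz as the one nonelementary ingredient, which it is entirely reasonable to cite (or to prove via the integral representation you sketch). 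Since the lemma is imported here as a black box, citing Cordes directly---as the paper does---would also suffice, but your self-contained proof is sound.
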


\begin{lemma}[\cite{Blanchard2010OptimalLR}, Lemma~E.3]
    \label{lem:A^s - B^s}
    Let \( A \) and \( B \) be positive self-adjoint operators such that \( \max \family{ \norm{ A }_{\textnormal{op}}, \norm{ B }_{\textnormal{op}} } \le U \). For any \( h > 0 \),
    \[
        \norm{ A^{h} - B^{h} } \le
        \begin{cases}
            \norm{ A - B }_{\textnormal{op}}^{h},       & h \le 1;
            \smallskip                                             \\
            h U^{h-1} \norm{ A - B }_{\textnormal{op}}, & h > 1.
        \end{cases}
    \]
\end{lemma}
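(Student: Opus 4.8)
The plan is to split the claim at \( h = 1 \) and reduce each regime to a scalar estimate --- the L\"owner order for \( h \le 1 \), and a derivative/interpolation argument for \( h > 1 \). In every place this lemma is used in the paper the operators are compact (e.g.\ \( \LK \) is trace class, \( \hLK \) is finite rank), so their spectral decompositions are discrete; the statement for general positive self-adjoint \( A, B \) then follows by a routine approximation with finite spectral truncations, so I would not dwell on that.

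\emph{The case \( 0 < h \le 1 \).} Set \( \Delta = \norm{ A - B }_{\textnormal{op}} \). Since \( A - B \) is self-adjoint, its spectrum lies in \( [-\Delta, \Delta] \), so \( A - B \preceq \Delta I \) and hence \( A \preceq B + \Delta I \). By the L\"owner--Heinz theorem, \( t \mapsto t^{h} \) is operator monotone on \( [0,\infty) \) for \( h \in [0,1] \), so \( A^{h} \preceq (B + \Delta I)^{h} \). Because \( B \) and \( \Delta I \) commute, the functional calculus reduces \( (B + \Delta I)^{h} \preceq B^{h} + \Delta^{h} I \) to the scalar subadditivity \( (\mu + \Delta)^{h} \le \mu^{h} + \Delta^{h} \) (valid for \( \mu, \Delta \ge 0 \) and \( h \le 1 \)). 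Chaining these gives \( A^{h} - B^{h} \preceq \Delta^{h} I \); interchanging \( A \) and \( B \) (which leaves \( \Delta \) unchanged) gives \( A^{h} - B^{h} \succeq -\Delta^{h} I \), and the two bounds combine to \( \norm{ A^{h} - B^{h} } \le \Delta^{h} = \norm{ A - B }_{\textnormal{op}}^{h} \). This yields the stated constant \( 1 \) and uses nothing beyond the positive semidefinite order.

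\emph{The case \( h > 1 \).} Here I would interpolate along \( C_{s} = (1-s) B + s A \), \( s \in [0,1] \): each \( C_{s} \) is positive with \( \norm{ C_{s} } \le \max \family{ \norm{A}, \norm{B} } \le U \), and \( A^{h} - B^{h} = \int_{0}^{1} \frac{\dd}{\dd s} C_{s}^{h} \, \dd s \). It then suffices to bound, uniformly in \( s \), the Fr\'echet derivative of \( C \mapsto C^{h} \) at \( C_{s} \) applied to \( A - B \). Writing \( C_{s} = \sum_{j} \mu_{j} P_{j} \) spectrally, the Daleckii--Krein formula represents this derivative as the Schur multiplier \( \sum_{j,k} \varphi(\mu_{j}, \mu_{k}) \, P_{j} (A - B) P_{k} \) with divided difference \( \varphi(\mu, \nu) = (\mu^{h} - \nu^{h})/(\mu - \nu) \) (and \( \varphi(\mu,\mu) = h \mu^{h-1} \)); by the mean value theorem and monotonicity of \( t \mapsto t^{h-1} \) for \( h \ge 1 \), \( 0 \le \varphi(\mu, \nu) \le h \max \family{ \mu, \nu }^{h-1} \le h U^{h-1} \) on \( [0,U]^{2} \). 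The remaining ingredient --- that for a power function this pointwise control upgrades to \( \norm{ \mathrm{D}(C \mapsto C^{h})[A - B] } \le h U^{h-1} \norm{ A - B } \) at every \( C_{s} \) --- is exactly the content of Lemma~E.3 in \cite{Blanchard2010OptimalLR}; integrating over \( s \in [0,1] \) then gives \( \norm{ A^{h} - B^{h} } \le h U^{h-1} \norm{ A - B }_{\textnormal{op}} \). When \( h \) is an integer this step is elementary, via \( A^{h} - B^{h} = \sum_{k=0}^{h-1} A^{k} (A - B) B^{h-1-k} \), with the same constant.

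The delicate point is precisely this last upgrade for \( h > 1 \): a scalar sup bound on the divided difference \( \varphi \) does not in general control the operator norm of the associated Schur multiplier (the familiar failure of operator Lipschitz continuity for general Lipschitz functions), so one genuinely needs the operator-Lipschitz property of \( t \mapsto t^{h} \) on \( [0,U] \), which is where the double-operator-integral estimate behind \cite{Blanchard2010OptimalLR} does the work; for the \( h \le 1 \) part no such subtlety arises.
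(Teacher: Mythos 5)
The paper offers no proof of this lemma --- it is imported verbatim from \cite{Blanchard2010OptimalLR} --- so your proposal has to stand on its own. The half that does stand on its own is \( 0 < h \le 1 \): the chain \( A - B \preceq \norm{ A - B } I \), L\"owner--Heinz, and the scalar subadditivity \( (\mu + \Delta)^{h} \le \mu^{h} + \Delta^{h} \) pushed through the functional calculus of \( B \) gives \( -\Delta^{h} I \preceq A^{h} - B^{h} \preceq \Delta^{h} I \), hence the norm bound. That is the standard, complete argument, and it needs no compactness reduction.

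The \( h > 1 \) case contains a genuine gap, and it is a branch the paper actually uses: \lemref{lem:A^s - B^s} is invoked with \( h = r - 1/2 \), which exceeds \( 1 \) and is generically non-integer whenever \( r > 3/2 \). You set up the interpolation \( A^{h} - B^{h} = \int_{0}^{1} \frac{\dd}{\dd s} C_{s}^{h} \, \dd s \) and the Daleckii--Krein representation correctly, and you correctly observe that the sup bound \( |\varphi(\mu, \nu)| \le h U^{h-1} \) on the divided difference does not by itself control the associated Schur multiplier. But you then close precisely this step by citing ``Lemma~E.3 in \cite{Blanchard2010OptimalLR}'' --- which is the lemma being proven. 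The operator-Lipschitz property of \( t \mapsto t^{h} \) on \( [0, U] \) with constant \( h U^{h-1} \) is essentially the entire content of the fractional case, so deferring it to the source of the statement makes the argument circular rather than a proof. To repair it you would need to genuinely bound the double operator integral, e.g.\ starting from \( \varphi(\mu, \nu) = h \int_{0}^{1} \LE( (1-u) \mu + u \nu \RI)^{h-1} \dd u \) and showing the corresponding kernel is a bounded Schur multiplier; note also that the naive split \( h = n + s \) with \( s \in (0, 1) \) followed by telescoping fails, because the resulting term \( \norm{ A - B }^{s} \) is not \( O( \norm{ A - B } ) \) for small perturbations. Your telescoping identity does settle integer \( h \) with the stated constant, but that does not cover the exponents at which the paper applies the lemma.
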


Recall that in \aspref{asp:embedding}, the embedding index \( \alpha_{0} \) characterizes the embedding property of \( \ranH{\alpha} \) into \( L^{\infty}(\calX, \rhoTX) \). The following lemma provides an explicit expression for computing the embedding norm \( \norm{ \ranH{\alpha} \hookrightarrow L^{\infty}(\calX, \rhoTX) } \):
\begin{lemma}[\cite{Fischer2020SobolevNL}, Theorem~9]
    \label{lem:embedding norm}
    Assume \( \calH \) has embedding index \( \alpha_{0} < 1 \). For any \( \alpha > \alpha_{0} \), let \( M_{\alpha} = \norm{ \ranH{\alpha} \hookrightarrow L^{\infty}(\calX, \rhoTX) } \). Then,
    \[
        M_{\alpha}^{2} = \operatorname*{ess\,sup}_{x \in \calX} \sum_{j \in N} t_{j}^{\alpha} \, e_{j}^{2}(x),
    \]
    where \( \family{ t_{j}^{1/2} \, e_{j} }_{j \in N} \) forms an orthonormal basis for \( \calH = \ranH{1} \).
\end{lemma}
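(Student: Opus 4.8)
The plan is to use that, for every \( \alpha > \alpha_{0} \), the interpolation space \( \ranH{\alpha} \) is itself a reproducing kernel Hilbert space, so that the claim reduces to the classical identity expressing the norm of the embedding of an RKHS into \( L^{\infty} \) as the essential supremum of the square root of the diagonal of its reproducing kernel. First I would record that, because \( \alpha > \alpha_{0} \), the embedding \( \ranH{\alpha} \hookrightarrow L^{\infty}(\calX, \rhoTX) \) is bounded with norm \( M_{\alpha} < \infty \), so every equivalence class in \( \ranH{\alpha} \) possesses a bounded representative; the genuinely delicate point—which is the measure-theoretic content of \cite{Fischer2020SobolevNL} and the reason this statement is merely cited—is to produce a \( \rhoTX \)-conull set \( \calX_{0} \subseteq \calX \) on which point evaluation \( f \mapsto f(x) \) is a well-defined bounded linear functional on \( \ranH{\alpha} \) with norm at most \( M_{\alpha} \). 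This is carried out by a separability argument in the spirit of \cite{Steinwart2012MercerTG}.

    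Granting this, I would identify the reproducing kernel. Since \( \family{ t_{j}^{\alpha/2} \, e_{j} }_{j \in N} \) is an orthonormal basis of \( \ranH{\alpha} \), for each \( x \in \calX_{0} \) the Riesz representation theorem supplies \( K_{\alpha}(\cdot, x) \in \ranH{\alpha} \) with \( f(x) = \inner{ f, K_{\alpha}(\cdot, x) }_{\ranH{\alpha}} \) for all \( f \in \ranH{\alpha} \); writing \( K_{\alpha}(\cdot, x) = \sum_{j \in N} c_{j}(x) \, t_{j}^{\alpha/2} \, e_{j} \) and testing against each basis vector forces \( c_{j}(x) = t_{j}^{\alpha/2} \, e_{j}(x) \), hence \( K_{\alpha}(y, x) = \sum_{j \in N} t_{j}^{\alpha} \, e_{j}(x) \, e_{j}(y) \) and in particular \( \norm{ K_{\alpha}(\cdot, x) }_{\ranH{\alpha}}^{2} = K_{\alpha}(x, x) = \sum_{j \in N} t_{j}^{\alpha} \, e_{j}^{2}(x) \). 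The upper bound then follows at once: for \( \rhoTX \)-a.e.\ \( x \) one has \( |f(x)| = |\inner{ f, K_{\alpha}(\cdot, x) }_{\ranH{\alpha}}| \le \norm{ f }_{\ranH{\alpha}} \, K_{\alpha}(x, x)^{1/2} \), so \( M_{\alpha}^{2} \le \operatorname*{ess\,sup}_{x \in \calX} \sum_{j \in N} t_{j}^{\alpha} \, e_{j}^{2}(x) \).

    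For the reverse inequality—the remaining subtlety, since the optimal direction \( K_{\alpha}(\cdot, x) \) only witnesses the kernel diagonal at the single, possibly \( \rhoTX \)-null, point \( x \)—I would first prove the finite-rank analogue: for \( V_{N} = \operatorname{span} \family{ t_{j}^{\alpha/2} \, e_{j} : j \le N } \),
    \[
        \sup \family{ \norm{ f }_{L^{\infty}(\calX, \rhoTX)} : f \in V_{N}, \ \norm{ f }_{\ranH{\alpha}} \le 1 }
        = \LE( \operatorname*{ess\,sup}_{x \in \calX} \sum_{j \le N} t_{j}^{\alpha} \, e_{j}^{2}(x) \RI)^{1/2}.
    \]
    This is obtained by pushing \( \rhoTX \) forward under the \( \rhoTX \)-a.e.\ defined map \( x \mapsto ( t_{j}^{\alpha/2} \, e_{j}(x) )_{j \le N} \), normalized onto the unit sphere of \( \bbR^{N} \), and choosing \( f \) corresponding to any point in the support of the resulting measure on the sphere: continuity of the inner product on \( \bbR^{N} \) then keeps \( |f(\cdot)| \) within \( \epsilon \) of the essential supremum on a set of positive \( \rhoTX \)-measure. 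Since \( V_{N} \subseteq \ranH{\alpha} \), the left-hand side is at most \( M_{\alpha} \); letting \( N \to \infty \) and using that \( \operatorname*{ess\,sup}_{x} \sum_{j \le N} t_{j}^{\alpha} \, e_{j}^{2}(x) \nearrow \operatorname*{ess\,sup}_{x} \sum_{j \in N} t_{j}^{\alpha} \, e_{j}^{2}(x) \) gives \( \operatorname*{ess\,sup}_{x \in \calX} \sum_{j \in N} t_{j}^{\alpha} \, e_{j}^{2}(x) \le M_{\alpha}^{2} \), which combined with the upper bound yields the asserted equality. The step I expect to be the main obstacle is the first one, i.e.\ upgrading the operator-level statement that \( \ranH{\alpha} \) embeds into \( L^{\infty} \) to the pointwise statement that evaluations are bounded functionals \( \rhoTX \)-almost everywhere; once that is in place, the remainder is bookkeeping with orthonormal expansions together with the pushforward-measure argument.
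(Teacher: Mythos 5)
The paper does not prove this lemma; it is imported verbatim as Theorem~9 of the cited reference \cite{Fischer2020SobolevNL}, so there is no in-paper argument to compare against. Your sketch reconstructs what is essentially the standard proof of that theorem and is sound: the upper bound via the reproducing kernel \( K_{\alpha}(x,x) = \sum_{j} t_{j}^{\alpha} e_{j}^{2}(x) \) of \( \ranH{\alpha} \), and the lower bound via finite-rank subspaces \( V_{N} \) together with the pushforward/support argument and the monotone-convergence fact \( \operatorname*{ess\,sup}_{x} \sum_{j \le N} \nearrow \operatorname*{ess\,sup}_{x} \sum_{j \in N} \) (which holds because each \( \sum_{j \le N} t_{j}^{\alpha} e_{j}^{2} \) is a.e.\ dominated by its own essential supremum, and a countable union of null sets is null). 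The one step you flag as delicate---upgrading boundedness of the embedding to \( \rhoTX \)-a.e.\ boundedness of point evaluations---is indeed the genuine technical content of the cited result and is correctly deferred. Two small remarks: (i) the order of quantifiers in the upper bound is harmless because the exceptional null set may depend on \( f \) without affecting an essential supremum; (ii) your argument could be streamlined by proving the lower bound \( M_{\alpha}^{2} \ge \operatorname*{ess\,sup}_{x} \sum_{j} t_{j}^{\alpha} e_{j}^{2}(x) \) first, since that already yields finiteness of the essential supremum from \( M_{\alpha} < \infty \), after which the upper bound follows directly from termwise Cauchy--Schwarz on the orthonormal expansion without needing the Riesz representer at all.
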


The following auxiliary result is also essential:
\begin{lemma}
    \label{lem:sup fraction}
    For any \( \lambda > 0 \) and \( h \in [0, 1] \),
    \[
        \sup_{t \ge 0} \frac{t^{h}}{t + \lambda} \le \lambda^{h-1}.
    \]
\end{lemma}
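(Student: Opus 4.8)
\textbf{Proof proposal for \lemref{lem:sup fraction}.} The plan is to reduce the claimed inequality to the weighted arithmetic--geometric mean inequality (equivalently, Young's inequality). Fix $\lambda > 0$ and $h \in [0,1]$. For $t > 0$ one has
\[
    t^{h} \lambda^{1-h} = (t)^{h} (\lambda)^{1-h} \le h\,t + (1-h)\,\lambda,
\]
which is the weighted AM--GM inequality applied to the nonnegative numbers $t$ and $\lambda$ with weights $h$ and $1-h$; the case $t = 0$ is trivial (both sides vanish, or, when $h = 0$, reduce to $\lambda \le \lambda$ under the convention $0^{0} = 1$). Since $0 \le h \le 1$, we further have $h\,t + (1-h)\,\lambda \le t + \lambda$. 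Chaining these two bounds gives $t^{h}\lambda^{1-h} \le t + \lambda$ for every $t \ge 0$, and dividing through by the positive quantity $\lambda^{1-h}(t+\lambda)$ yields
\[
    \frac{t^{h}}{t+\lambda} \le \frac{1}{\lambda^{1-h}} = \lambda^{h-1},
    \quad \forall t \ge 0,
\]
whence the supremum over $t \ge 0$ is at most $\lambda^{h-1}$, as claimed.

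An alternative, slightly longer route is a direct calculus computation: writing $g(t) = t^{h}/(t+\lambda)$, one finds $g'(t) = t^{h-1}\big((h-1)t + h\lambda\big)(t+\lambda)^{-2}$, so that for $h \in (0,1)$ the unique interior critical point is $t^{\ast} = h\lambda/(1-h)$, at which $t^{\ast} + \lambda = \lambda/(1-h)$ and hence $g(t^{\ast}) = h^{h}(1-h)^{1-h}\lambda^{h-1}$; since $h^{h}(1-h)^{1-h} \le 1$ on $[0,1]$ (immediate from concavity of $\log$, or from the AM--GM step above), this gives the same bound, with the boundary cases $h \in \{0,1\}$ checked separately. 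I would present the AM--GM argument as the main proof since it is shortest and sidesteps the case analysis. There is essentially no substantive obstacle here; the only points requiring minor care are the endpoint conventions ($h = 0$ or $h = 1$) and the degenerate input $t = 0$, all of which are handled trivially.
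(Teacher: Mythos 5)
Your proposal is correct. Your primary argument (weighted AM--GM / Young's inequality) is a genuinely different and somewhat cleaner route than the paper's, which is exactly the calculus computation you sketch as your alternative: the paper differentiates $t \mapsto t^{h}/(t+\lambda)$, locates the interior critical point $t^{\ast} = h\lambda/(1-h)$ for $h \in (0,1)$, evaluates the maximum as $\lambda^{h-1} h^{h}(1-h)^{1-h}$, and then uses $h^{h}(1-h)^{1-h} \le 1$, handling $h \in \{0,1\}$ separately. The AM--GM chain $t^{h}\lambda^{1-h} \le h t + (1-h)\lambda \le t + \lambda$ buys you a uniform pointwise bound with no case analysis and no need to verify that the critical point is a maximum, at the cost of not exhibiting the (slightly sharper) constant $h^{h}(1-h)^{1-h}$ that the calculus route produces; since the lemma only asserts the bound $\lambda^{h-1}$, that sharper constant is not needed, and either proof is fully adequate.
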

\begin{proof}
    The inequality is immediate for \( h = 0 \) or \( h = 1 \). For \( h \in (0, 1) \), consider the function
    \[
        t \mapsto \frac{t^{h}}{t + \lambda}.
    \]
    The derivative vanishes at \( t^{\ast} = \frac{h \lambda}{1 - h} \), which yields the maximum value
    \[
        \frac{(t^{\ast})^{h}}{t^{\ast} + \lambda} = \lambda^{h-1} \cdot h^{h} (1 - h)^{1 - h} \le \lambda^{h-1},
    \]
    since \( h^{h} (1 - h)^{1 - h} \le 1 \) for all \( h \in (0, 1) \).
\end{proof}

To prove \proref{pro:LKlam^1/2 hLKlam^-1/2}, we require two additional norm bounds from \lemref{lem:LKlam^-1/2 Kx} and \lemref{lem:LKlam^-1/2 (LK - hLK) LKlam^-1/2}:
\begin{lemma}
    \label{lem:LKlam^-1/2 Kx}
    Suppose that \( \calH \) has embedding index \( \alpha_{0} < 1 \). Then for any \( \alpha \in (\alpha_{0}, 1] \),
    \[
        \norm{ \LKlam^{-1/2} \, K(\cdot, x) }_{\calH}^{2}
        \le M_{\alpha}^{2} \lambda^{-\alpha},
        \quad \rhoTX \text{-a.e. } x \in \calX.
    \]
\end{lemma}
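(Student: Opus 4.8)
The plan is to diagonalize everything in the Mercer eigenbasis. By Mercer's theorem \eqref{eq:decomp K} we have $K(\cdot, x) = \sum_{j \in N} t_{j} \, e_{j}(x) \, e_{j} = \sum_{j \in N} \bigl( t_{j}^{1/2} e_{j}(x) \bigr) \cdot \bigl( t_{j}^{1/2} e_{j} \bigr)$, so the coordinates of $K(\cdot, x)$ with respect to the orthonormal basis $\family{ t_{j}^{1/2} e_{j} }_{j \in N}$ of $\calH$ are $\bigl( t_{j}^{1/2} e_{j}(x) \bigr)_{j \in N}$; the reproducing property and the trace-class bound guarantee $\sum_{j \in N} t_{j} \, e_{j}^{2}(x) = K(x, x) \le \kappa^{2} < \infty$, so these coordinates are genuinely square-summable. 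On $\calH$, the operator $\LK$ acts on this basis by $\LK \bigl( t_{j}^{1/2} e_{j} \bigr) = t_{j} \bigl( t_{j}^{1/2} e_{j} \bigr)$, hence $\LKlam^{-1/2} = (\LK + \lambda I)^{-1/2}$ acts by multiplication by $(t_{j} + \lambda)^{-1/2}$. The first step is therefore to establish, carefully justifying the interchange of the sum with the Hilbert-space norm, the identity
\[
    \norm{ \LKlam^{-1/2} \, K(\cdot, x) }_{\calH}^{2}
    = \sum_{j \in N} \frac{t_{j}}{t_{j} + \lambda} \, e_{j}^{2}(x),
    \quad \rhoTX \text{-a.e. } x \in \calX.
\]

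Next I would split $\dfrac{t_{j}}{t_{j} + \lambda} = \dfrac{t_{j}^{1 - \alpha}}{t_{j} + \lambda} \cdot t_{j}^{\alpha}$ and factor out the supremum over the first factor:
\[
    \sum_{j \in N} \frac{t_{j}}{t_{j} + \lambda} \, e_{j}^{2}(x)
    \le \LE( \sup_{t \ge 0} \frac{t^{1 - \alpha}}{t + \lambda} \RI) \sum_{j \in N} t_{j}^{\alpha} \, e_{j}^{2}(x).
\]
Since $\alpha \in (\alpha_{0}, 1]$ we have $1 - \alpha \in [0, 1)$, so \lemref{lem:sup fraction} (with $h = 1 - \alpha$) gives $\sup_{t \ge 0} t^{1-\alpha} / (t + \lambda) \le \lambda^{(1 - \alpha) - 1} = \lambda^{-\alpha}$. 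Finally, because $\alpha > \alpha_{0}$, \lemref{lem:embedding norm} identifies the embedding norm as $M_{\alpha}^{2} = \operatorname*{ess\,sup}_{x \in \calX} \sum_{j \in N} t_{j}^{\alpha} \, e_{j}^{2}(x)$, so $\sum_{j \in N} t_{j}^{\alpha} \, e_{j}^{2}(x) \le M_{\alpha}^{2}$ for $\rhoTX$-a.e. $x$. Chaining the three bounds yields $\norm{ \LKlam^{-1/2} \, K(\cdot, x) }_{\calH}^{2} \le M_{\alpha}^{2} \lambda^{-\alpha}$ almost everywhere, which is the claim.

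The only real obstacle is the analytic bookkeeping in the first step: one must make sure Mercer's expansion converges in the appropriate sense, that $\family{ t_{j}^{1/2} e_{j} }_{j \in N}$ is indeed an orthonormal basis of $\calH$ (this is the part of Mercer's theorem recalled before \defref{def:interpolation}), and that the coefficient identification together with the diagonal action of $\LKlam^{-1/2}$ is legitimate on a possibly infinite-dimensional $\calH$. Once the series identity is in hand, the remaining estimate is a one-line application of \lemref{lem:sup fraction} and \lemref{lem:embedding norm}, with no further difficulty.
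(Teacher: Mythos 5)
Your proposal is correct and follows essentially the same route as the paper: expand \( K(\cdot, x) \) in the orthonormal basis \( \family{ t_{j}^{1/2} e_{j} }_{j \in N} \) of \( \calH \), obtain \( \norm{ \LKlam^{-1/2} K(\cdot, x) }_{\calH}^{2} = \sum_{j} \frac{t_{j}}{t_{j} + \lambda} e_{j}^{2}(x) \), split off the factor \( t_{j}^{\alpha} \), and finish with \lemref{lem:sup fraction} and \lemref{lem:embedding norm}. The only cosmetic difference is that you take the supremum over all \( t \ge 0 \) while the paper takes it over the eigenvalues \( t_{j} \); both are handled by the same lemma.
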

\begin{proof}
    Let \( \family{ t_{j}^{1/2} \, e_{j} }_{j \in N} \) be an orthonormal basis of \( \calH \). Expanding the squared norm yields:
    \begin{align*}
        \norm{ \LKlam^{-1/2} \, K(\cdot, x) }_{\calH}^{2}
         & = \norm{ \sum_{j \in N} \LE( \frac{t_{j}}{t_{j} + \lambda} \RI)^{1/2} e_{j}(x) \, t_{j}^{1/2} \, e_{j} }_{\calH}^{2}
        = \sum_{j \in N} \frac{t_{j}}{t_{j} + \lambda} e_{j}^{2}(x)                                                                \\
         & \le \sum_{j \in N} t_{j}^{\alpha} e_{j}^{2}(x) \cdot \LE( \sup_{j \in N} \frac{t_{j}^{1-\alpha}}{t_{j} + \lambda} \RI).
    \end{align*}
    The result follows by applying \lemref{lem:embedding norm} to bound the sum and \lemref{lem:sup fraction} to control the supremum term.
\end{proof}

\begin{lemma}
    \label{lem:LKlam^-1/2 (LK - hLK) LKlam^-1/2}
    Suppose that \aspref{asp:ratio} holds with \( p \in [1, \infty] \), and \( \calH \) has embedding index \( \alpha_{0} < 1 \). Then for any \( \alpha \in (\alpha_{0}, 1] \) and \( \delta \in (0, 1) \), with probability at least \( 1 - \delta \),
    \[
        \norm{ \LKlam^{-1/2} \, (\LK - \hLK) \, \LKlam^{-1/2} }
        \le 4 \LE( \frac{\tilde{L}_{1}}{n} + \frac{\tilde{\sigma}_{1}}{\sqrt{n}} \RI) \log \frac{2}{\delta},
    \]
    where
    \begin{align*}
        \tilde{L}_{1}
         & = L M_{\alpha}^{2} \cdot \lambda^{-\alpha},                                                                         \\
        \tilde{\sigma}_{1}
         & = \sigma M_{\alpha}^{1 + \frac{1}{p}} \cdot \lambda^{-\frac{1 + 1/p}{2} \alpha} \calN^{\frac{1 - 1/p}{2}}(\lambda).
    \end{align*}
\end{lemma}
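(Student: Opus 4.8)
The plan is to realize $\LKlam^{-1/2}\,(\LK-\hLK)\,\LKlam^{-1/2}$ as a centred empirical average of Hilbert--Schmidt operators on $\calH$ and apply the Bernstein inequality (\lemref{lem:bernstein}) in that Hilbert space, exactly as in the companion estimates elsewhere in the paper. Concretely, I would set
\[
    \xi = \xi(x) = \LKlam^{-1/2} \circ \LE( w(x) \, \Kx \Kxa \RI) \circ \LKlam^{-1/2},
\]
with $\Kx,\Kxa$ as in \eqref{eq:Kx}, a self-adjoint Hilbert--Schmidt operator; since $\LK = \E{ w(x) \, \Kx \Kxa }$ we have $\E{ \xi } = \LKlam^{-1/2}\,\LK\,\LKlam^{-1/2}$ and $\frac{1}{n}\sum_{i=1}^{n}\xi_i = \LKlam^{-1/2}\,\hLK\,\LKlam^{-1/2}$. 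Because the operator norm is dominated by the Hilbert--Schmidt norm, it suffices to bound $\norm{\frac{1}{n}\sum_{i=1}^{n}\xi_i - \E{\xi}}_{\HS}$, and for that the only input needed is a moment estimate $\E{\norm{\xi}_{\HS}^{m}} \le \frac12 m!\,\tilde L_1^{m-2}\tilde\sigma_1^{2}$ for all $m\ge2$.

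For the moment bound I would use that $\Kx\Kxa$ is the rank-one operator $K(\cdot,x)\otimes K(\cdot,x)$, so $\norm{\xi(x)}_{\HS} = w(x)\,\norm{\LKlam^{-1/2}K(\cdot,x)}_{\calH}^{2}$, whence, rewriting the source expectation as a target integral via $w\,\dd\rhoSX = \dd\rhoTX$,
\[
    \E{\norm{\xi}_{\HS}^{m}} = \int_{\calX} w^{m-1}(x)\,\norm{\LKlam^{-1/2}K(\cdot,x)}_{\calH}^{2m}\,\dd\rhoTx .
\]
Then I would apply Hölder's inequality at exponents $p$ and $q=p/(p-1)$: the factor $\LE(\int_{\calX}w^{p(m-1)}\,\dd\rhoTx\RI)^{1/p}$ is at most $\frac12 m!\,L^{m-2}\sigma^{2}$ by \aspref{asp:ratio} (and equals $\norm{w^{m-1}}_{\infty}$, treated identically, when $p=\infty$), while in $\LE(\int_{\calX}\norm{\LKlam^{-1/2}K(\cdot,x)}_{\calH}^{2mq}\,\dd\rhoTx\RI)^{1/q}$ I would pull out the factor $\bigl(\norm{\LKlam^{-1/2}K(\cdot,x)}_{\calH}^{2}\bigr)^{mq-1}$, bound it uniformly by $(M_{\alpha}^{2}\lambda^{-\alpha})^{mq-1}$ using \lemref{lem:LKlam^-1/2 Kx}, and integrate the remaining factor via $\int_{\calX}\norm{\LKlam^{-1/2}K(\cdot,x)}_{\calH}^{2}\,\dd\rhoTx = \Tr{\LKlam^{-1}\LK} = \calN(\lambda)$. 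This yields $\LE((M_{\alpha}^{2}\lambda^{-\alpha})^{mq-1}\calN(\lambda)\RI)^{1/q} = M_{\alpha}^{2(m-1)+2/p}\,\lambda^{-\alpha(m-1+1/p)}\,\calN^{1-1/p}(\lambda)$; multiplying the two Hölder factors and collecting powers of $L$, $M_{\alpha}$, $\lambda$, $\calN(\lambda)$ gives precisely $\frac12 m!\,\tilde L_1^{m-2}\tilde\sigma_1^{2}$ with $\tilde L_1 = LM_{\alpha}^{2}\lambda^{-\alpha}$ and $\tilde\sigma_1 = \sigma M_{\alpha}^{1+1/p}\lambda^{-\frac{1+1/p}{2}\alpha}\calN^{\frac{1-1/p}{2}}(\lambda)$.

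With the moment bound established, the form of Bernstein's inequality recorded in the remark following \lemref{lem:bernstein} (which accepts a bound on $\E{\norm{\xi}_{\HS}^{m}}$ directly and produces the constant $4$) gives, with probability at least $1-\delta$,
\[
    \norm{\LKlam^{-1/2}\,(\LK-\hLK)\,\LKlam^{-1/2}} \;\le\; \norm{\frac{1}{n}\sum_{i=1}^{n}\xi_i - \E{\xi}}_{\HS} \;\le\; 4\LE(\frac{\tilde L_1}{n}+\frac{\tilde\sigma_1}{\sqrt n}\RI)\log\frac{2}{\delta},
\]
which is the claim. I expect the moment estimate to be the only genuine work: one has to partition the $2mq$-th power of $\norm{\LKlam^{-1/2}K(\cdot,x)}_{\calH}$ between the pointwise bound of \lemref{lem:LKlam^-1/2 Kx} and the effective-dimension identity in exactly the right proportion so that the exponents reassemble into $\tilde L_1^{m-2}\tilde\sigma_1^{2}$, and to verify that the degenerate case $p=\infty$ (where Hölder is vacuous and $1/p$ is read as $0$) is subsumed.
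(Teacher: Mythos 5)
Your proposal is correct and follows essentially the same route as the paper's proof: the same choice of $\xi(x) = \LKlam^{-1/2}\circ(w(x)\,\Kx\Kxa)\circ\LKlam^{-1/2}$, the identification $\norm{\xi}_{\HS}=w(x)\norm{\LKlam^{-1/2}K(\cdot,x)}_{\calH}^{2}$, the H\"older split at exponents $p,q$ combined with \lemref{lem:LKlam^-1/2 Kx} and the effective-dimension identity, and the remark following \lemref{lem:bernstein} to conclude. The exponent bookkeeping you describe matches the paper's $\tilde{L}_{1}$ and $\tilde{\sigma}_{1}$ exactly.
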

\begin{proof}
    Define \( \xi = \xi(x) = \LKlam^{-1/2} \circ (w(x) \, \Kx \Kxa) \circ \LKlam^{-1/2} \), where \( \Kx \) and \( \Kxa \) are defined in \eqref{eq:Kx}. Let \( q \) be the conjugate exponent of \( p \), satisfying \( 1/p + 1/q = 1 \). Applying H\"{o}lder's inequality and \aspref{asp:ratio}, we obtain:
    \begin{align*}
        \E{ \norm{ \xi }_{\HS}^{m} }
         & = \int_{\calX} \norm{ \LKlam^{-1/2} \circ (\Kx \Kxa) \circ \LKlam^{-1/2} }_{\HS}^{m} w^{m-1}(x) \, \dd \rhoTx        \\
         & \le \LE( \int_{\calX} w^{p(m-1)}(x) \, \dd \rhoTx \RI)^{1/p}
        \cdot \LE( \int_{\calX} \norm{ \LKlam^{-1/2} \circ (\Kx \Kxa) \circ \LKlam^{-1/2} }_{\HS}^{qm} \, \dd \rhoTx \RI)^{1/q} \\
         & \le \frac{1}{2} m! L^{m-2} \sigma^{2}
        \cdot \LE( \int_{\calX} \norm{ \LKlam^{-1/2} \circ (\Kx \Kxa) \circ \LKlam^{-1/2} }_{\HS}^{qm} \, \dd \rhoTx \RI)^{1/q},
    \end{align*}
    where \( \norm{ \cdot }_{\HS} \) represents the Hilbert-Schmidt norm. Let \( \family{ t_{j}^{1/2} \, e_{j} }_{j \in N} \) be an orthonormal basis of \( \calH \), then:
    \begin{align*}
        \norm{ \LKlam^{-1/2} \circ (\Kx \Kxa) \circ \LKlam^{-1/2} }_{\HS}^{2}
         & = \sum_{j \in N} \norm{ \LKlam^{-1/2} \circ (\Kx \Kxa) \circ \LKlam^{-1/2} \, (t_{j}^{1/2} \, e_{j}) }_{\calH}^{2} \\
         & = \sum_{j \in N} \norm{ \LKlam^{-1/2} \LE(
            \inner{ K(\cdot, x), \LKlam^{-1/2} \, (t_{j}^{1/2} \, e_{j})}_{\calH}
            \, K(\cdot, x)
        \RI) }_{\calH}^{2}                                                                                                 \\
         & = \norm{ \LKlam^{-1/2} \, K(\cdot, x) }_{\calH}^{2}
        \cdot \sum_{j \in N} \inner{ K(\cdot, x), \LKlam^{-1/2} \, (t_{j}^{1/2} \, e_{j})}_{\calH}^{2}.
    \end{align*}
    Using the self-adjointness of \( \LKlam \), we have:
    \[
        \sum_{j \in N} \inner{ K(\cdot, x), \LKlam^{-1/2} \, (t_{j}^{1/2} \, e_{j})}_{\calH}^{2}
        = \sum_{j \in N} \inner{ \LKlam^{-1/2} \, K(\cdot, x), t_{j}^{1/2} \, e_{j})}_{\calH}^{2}
        = \norm{ \LKlam^{-1/2} \, K(\cdot, x) }_{\calH}^{2}.
    \]
    Hence,
    \[
        \norm{ \LKlam^{-1/2} \circ (\Kx \Kxa) \circ \LKlam^{-1/2} }_{\HS} = \norm{ \LKlam^{-1/2} \, K(\cdot, x) }_{\calH}^{2}.
    \]
    By \lemref{lem:LKlam^-1/2 Kx}, this quantity is uniformly bounded. Moreover, its integral satisfies:
    \begin{equation}
        \label{eq:int N(lambda)}
        \begin{aligned}
             & \eqspace \int_{\calX} \norm{ \LKlam^{-1/2} \, K(\cdot, x) }_{\calH}^{2} \, \dd \rhoTx                     \\
             & = \int_{\calX} \inner{ \LKlam^{-1/2} \, K(\cdot, x), \LKlam^{-1/2} \, K(\cdot, x) }_{\calH} \, \dd \rhoTx
            = \int_{\calX} \inner{ \LKlam^{-1} \, K(\cdot, x), K(\cdot, x) }_{\calH} \, \dd \rhoTx                       \\
             & = \int_{\calX} \Kxa \, \LKlam^{-1} \, \Kx \, \dd \rhoTx
            = \int_{\calX} \Tr{ \LKlam^{-1} \, (\Kx \Kxa) } \, \dd \rhoTx                                                \\
             & = \Tr{ \LKlam^{-1} \, \LK }
            = \calN(\lambda).
        \end{aligned}
    \end{equation}
    Consequently,
    \begin{align*}
        \LE( \int_{\calX} \norm{ \LKlam^{-1/2} \circ (\Kx \Kxa) \circ \LKlam^{-1/2} }_{\HS}^{qm} \, \dd \rhoTx \RI)^{1/q}
         & = \LE( \int_{\calX} \norm{ \LKlam^{-1/2} \, K(\cdot, x) }_{\calH}^{2qm} \, \dd \rhoTx \RI)^{1/q}             \\
         & \le \LE(
        (M_{\alpha}^{2} \lambda^{-\alpha})^{qm-1}
        \int_{\calX} \norm{ \LKlam^{-1/2} \, K(\cdot, x) }_{\calH}^{2} \, \dd \rhoTx
        \RI)^{1/q}                                                                                                      \\
         & = M_{\alpha}^{2 \LE( m - \frac{1}{q} \RI)} \lambda^{-\alpha \LE( m - \frac{1}{q} \RI)} \calN^{1/q}(\lambda).
    \end{align*}
    Combining these bounds yields:
    \begin{align*}
        \E{ \norm{ \xi }_{\HS}^{m} }
         & \le \frac{1}{2} m! L^{m-2} \sigma^{2}
        \cdot M_{\alpha}^{2 \LE( m - \frac{1}{q} \RI)} \lambda^{-\alpha \LE( m - \frac{1}{q} \RI)} \calN^{1/q}(\lambda) \\
         & = \frac{1}{2} m!
        \cdot { \underbrace{\LE( L M_{\alpha}^{2} \lambda^{-\alpha} \RI)}_{\tilde{L}_{1}} }^{m-2}
        \cdot { \underbrace{\LE( \sigma M_{\alpha}^{2-\frac{1}{q}} \lambda^{-\alpha \LE( 1 - \frac{1}{2q} \RI)} \calN^{\frac{1}{2q}}(\lambda) \RI)}_{\tilde{\sigma}_{1}} }^{2}.
    \end{align*}
    Applying \lemref{lem:bernstein} with parameters \( \tilde{L}_{1} \) and \( \tilde{\sigma}_{1} \), and noting that \( \norm{ \cdot } \le \norm{ \cdot }_{\HS} \), we conclude the proof.
\end{proof}

The following bound for \( \norm{ \hLK - \LK } \) supports the proof of \proref{pro:hLKlam^1/2 psilam(hLKlam) flam}:
\begin{lemma}
    \label{lem:hLK - LK}
    Under \aspref{asp:ratio} with \( p \in [1, \infty] \), for any \( \delta \in (0, 1) \), with probability at least \( 1 - \delta \),
    \[
        \norm{ \hLK - \LK }
        \le 4 \kappa^{2} \LE( \frac{L}{n} + \frac{\sigma}{\sqrt{n}} \RI) \log \frac{2}{\delta}.
    \]
\end{lemma}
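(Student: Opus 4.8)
The plan is to write $\hLK$ as an empirical average of i.i.d. rank-one operators and apply the vector-valued Bernstein inequality of \lemref{lem:bernstein} in the separable Hilbert space of Hilbert--Schmidt operators on $\calH$. Concretely, set $\xi = \xi(x) = w(x) \, \Kx \Kxa$, where $\Kx, \Kxa$ are as in \eqref{eq:Kx}, so that $\hLK = \frac{1}{n} \sum_{i=1}^{n} \xi(x_{i})$ with $x_{i} \sim \rhoSX$ i.i.d. By the change of measure $\dd \rhoTx = w(x) \, \dd \rhoSx$,
\[
    \E{ \xi } = \int_{\calX} w(x) \, \Kx \Kxa \, \dd \rhoSx = \int_{\calX} \Kx \Kxa \, \dd \rhoTx = \LK,
\]
so it suffices to control $\norm{ \frac{1}{n} \sum_{i} \xi_{i} - \E{ \xi } }$, and since the operator norm is dominated by the Hilbert--Schmidt norm, it is enough to work with $\norm{ \cdot }_{\HS}$ throughout.

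Next I would bound the moments $\E{ \norm{ \xi }_{\HS}^{m} }$. Since $\Kx \Kxa$ is the rank-one operator $f \mapsto \inner{ K(\cdot, x), f }_{\calH} K(\cdot, x)$, its Hilbert--Schmidt norm equals $\norm{ K(\cdot, x) }_{\calH}^{2} = K(x, x) \le \kappa^{2}$, hence $\norm{ \xi }_{\HS} = w(x) \, K(x, x) \le \kappa^{2} w(x)$. Therefore
\[
    \E{ \norm{ \xi }_{\HS}^{m} } = \int_{\calX} w^{m}(x) \, K^{m}(x, x) \, \dd \rhoSx \le \kappa^{2m} \int_{\calX} w^{m}(x) \, \dd \rhoSx = \kappa^{2m} \int_{\calX} w^{m-1}(x) \, \dd \rhoTx,
\]
using again $\dd \rhoTx = w \, \dd \rhoSx$. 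Now \aspref{asp:ratio} controls the $L^{p}(\calX, \rhoTX)$-norm of $w^{m-1}$; since $\rhoTX$ is a probability measure, Jensen's inequality (or, for $p = \infty$, the trivial bound) gives $\int_{\calX} w^{m-1} \, \dd \rhoTx \le \LE( \int_{\calX} w^{p(m-1)} \, \dd \rhoTx \RI)^{1/p} \le \frac{1}{2} m! L^{m-2} \sigma^{2}$. Combining, and rewriting $\kappa^{2m} = (\kappa^{2})^{m-2} (\kappa^{2})^{2}$, we obtain
\[
    \E{ \norm{ \xi }_{\HS}^{m} } \le \frac{1}{2} m! \, (\kappa^{2} L)^{m-2} (\kappa^{2} \sigma)^{2}, \quad \forall m \ge 2.
\]

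Finally I would invoke \lemref{lem:bernstein} in the form stated in the remark following it (which replaces a bound on $\E{ \norm{ \xi }^{m} }$ by a factor-$4$ loss), with parameters $\tilde{L} = \kappa^{2} L$ and $\tilde{\sigma} = \kappa^{2} \sigma$: with probability at least $1 - \delta$,
\[
    \norm{ \hLK - \LK } \le \norm{ \frac{1}{n} \sum_{i=1}^{n} \xi_{i} - \E{ \xi } }_{\HS} \le 4 \LE( \frac{\kappa^{2} L}{n} + \frac{\kappa^{2} \sigma}{\sqrt{n}} \RI) \log \frac{2}{\delta} = 4 \kappa^{2} \LE( \frac{L}{n} + \frac{\sigma}{\sqrt{n}} \RI) \log \frac{2}{\delta},
\]
which is the claim. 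There is no serious obstacle here; the only points requiring a little care are the two change-of-measure identities (so that $\E{ \xi } = \LK$ and the $\rhoSX$-moments of $w$ become $\rhoTX$-moments of $w^{m-1}$) and the Jensen step that downgrades the order-$p(m-1)$ moment assumption to the order-$(m-1)$ moment actually needed.
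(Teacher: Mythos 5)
Your proposal is correct and follows essentially the same route as the paper: the same random operator $\xi(x) = w(x)\,\Kx\Kxa$, the same moment bound $\E{\norm{\xi}_{\HS}^{m}} \le \kappa^{2m}\int_{\calX} w^{m-1}\,\dd\rhoTx \le \frac{1}{2}m!(\kappa^{2}L)^{m-2}(\kappa^{2}\sigma)^{2}$ (your Jensen step is the paper's H\"older-against-the-constant-function step for a probability measure), and the same invocation of the remark after \lemref{lem:bernstein}. No issues.
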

\begin{proof}
    Define \( \xi(x) = w(x) \Kx \Kxa \), so that \( \hLK = \frac{1}{n} \sum_{i=1}^{n} \xi_{i} \) with \( \xi_{i} = \xi(x_{i}) \).  We estimate the moment bound:
    \[
        \E{ \norm{ \xi }_{\HS}^{m} }
        = \int_{\calX} \norm{ \Kx \Kxa }_{\HS}^{m} w^{m-1}(x) \, \dd \rhoTx
        \le \kappa^{2m} \int_{\calX} w^{m-1}(x) \, \dd \rhoTx.
    \]
    Applying H\"{o}lder's inequality and \aspref{asp:ratio} gives:
    \[
        \int_{\calX} w^{m-1}(x) \, \dd \rhoTx
        \le 1 \cdot \LE( \int_{\calX} w^{p(m-1)}(x) \, \dd \rhoTx \RI)^{1/p} \le \frac{1}{2} m! L^{m-2} \sigma^{2}.
    \]
    By \lemref{lem:bernstein}, we obtain:
    \[
        \norm{ \hLK - \LK }_{\HS}
        \le 4 \kappa^{2} \LE( \frac{L}{n} + \frac{\sigma}{\sqrt{n}} \RI) \log \frac{2}{\delta}
    \]
    with probability at least \( 1 - \delta \). The result follows since \( \norm{ \hLK - \LK } \le \norm{ \hLK - \LK }_{\HS} \).
\end{proof}

Recall that in \thmref{thm:truncated weighting}, we introduced a novel estimator based on the truncated density ratio \( \wD \). The following lemma quantifies the approximation error between \( \wD \) and the true density ratio \( w \):
\begin{lemma}
    \label{lem:(T) w - wd}
    Suppose the density ratio \( w \) satisfies \aspref{asp:ratio} with \( p \in [1, \infty) \). Then the following inequality holds:
    \[
        \int_{\calX} ( \, w(x) - \wD(x) \, )^{2} \, \dd \rhoSx
        \le \LE( D^{-(m-1)} \frac{1}{2} m! L^{m-2} \sigma^{2} \RI)^{p},
        \quad \forall m \ge 2.
    \]
\end{lemma}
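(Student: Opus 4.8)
The plan is to treat this as a purely deterministic estimate on the density ratio and reduce it to a moment bound that \aspref{asp:ratio} supplies directly. The starting observation is that the truncation error is a positive part: $w - \wD = (w - D)_{+}$, which vanishes on $\family{x \in \calX : w(x) \le D}$ and equals $w(x) - D \in (0, w(x)]$ on $\family{x : w(x) > D}$. Hence, $\rhoSX$-almost everywhere,
\[
    \bigl( w(x) - \wD(x) \bigr)^{2} \le w(x)^{2} \, \ind{w(x) > D}.
\]
Integrating against $\rhoSX$ and pushing the measure forward via $\dd \rhoTX = w \, \dd \rhoSX$ converts the right-hand side into $\int_{\family{w > D}} w \, \dd \rhoTX$, so everything is reduced to controlling a tail integral of $w$ under the target marginal.

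The next step is a Markov-type inflation: on $\family{w > D}$ we have $w/D > 1$, so $\ind{w > D} \le (w/D)^{p(m-1)-1}$, and the exponent $p(m-1) - 1$ is nonnegative because $p \ge 1$ and $m \ge 2$. This upgrades the pointwise bound to $(w - \wD)^{2} \le D^{-(p(m-1)-1)} \, w^{p(m-1)+1} \ind{w > D}$; integrating against $\rhoSX$ and applying the change of measure once more produces $D^{-(p(m-1)-1)} \int_{\calX} w^{p(m-1)} \, \dd \rhoTX$. Finally, \aspref{asp:ratio} gives $\bigl( \int_{\calX} w^{p(m-1)} \, \dd \rhoTX \bigr)^{1/p} \le \tfrac{1}{2} m! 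L^{m-2} \sigma^{2}$, i.e. $\int_{\calX} w^{p(m-1)} \, \dd \rhoTX \le \bigl( \tfrac{1}{2} m! L^{m-2} \sigma^{2} \bigr)^{p}$, which yields a bound of exactly the stated form.

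Since no concentration or operator-theoretic input enters, the lemma carries essentially no real obstacle — the only delicate point is the bookkeeping of exponents: one must choose the power of $w/D$ in the Markov step so that, after the forward measure change, the integrand is precisely the $p(m-1)$-th power of $w$ that \aspref{asp:ratio} controls, while keeping the accompanying power of $D$ equal to the claimed $D^{-(m-1)}$ raised to the $p$-th power. If the most naive choice of exponent leaves an extra factor of $D$ (respectively, leaves one with $\int_\calX w^{p(m-1)+1}\,\dd\rhoTX$ instead), it can be reabsorbed using the log-convexity of moments (Lyapunov's inequality), which interpolates $\int_\calX w^{p(m-1)+1}\,\dd\rhoTX$ between the $p(m-1)$-th and $pm$-th moments and hence bounds it by $\bigl(\tfrac12 m!L^{m-2}\sigma^2\bigr)^p$ up to a harmless constant; one then verifies this does not disturb the downstream use of the lemma in \proref{pro:(T) LKlamD^1/2 hLKlamD^-1/2}.
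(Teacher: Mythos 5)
Your deterministic reduction is fine as far as it goes, but it does not prove the stated inequality, and the two ``fixes'' you offer in the last paragraph are not interchangeable. With the Markov exponent $p(m-1)-1$ you arrive at $D^{-(p(m-1)-1)} \int_{\calX} w^{p(m-1)}(x) \, \dd\rhoTx$, which is the claimed bound multiplied by $D = n^{\nu} \to \infty$; no moment-interpolation argument can remove a growing factor of $D$, so that branch of your hedge is simply wrong. Lyapunov's inequality rescues only the other variant: taking exponent $p(m-1)$ leaves $D^{-p(m-1)} \int_{\calX} w^{p(m-1)+1}(x) \, \dd\rhoTx$, and interpolating the $\left(p(m-1)+1\right)$-th moment between the $p(m-1)$-th and the $pm$-th (the latter controlled by \aspref{asp:ratio} applied with $m+1$ in place of $m$) gives $\int_{\calX} w^{p(m-1)+1}(x)\,\dd\rhoTx \le (m+1) L \left( \frac{1}{2} m! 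L^{m-2} \sigma^{2} \right)^{p}$. So the best your route yields is the lemma with an extra constant $(m+1)L$. Since $m$ is fixed, this is harmless downstream --- it only perturbs the constants in \lemref{lem:(T) (LK - LKD) LKlam^-1}, \lemref{lem:(T) LK - hLKD} and condition \eqref{eq:(T) S2}, not the rates --- but you should state that honestly rather than claim the exact constant.

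For comparison, the paper takes a different route: it rewrites the integral as $\int_{\calX} ( 1 - \wD(x)/w(x) )^{2} \, \dd\rhoTx$, bounds the integrand by $1$ to reduce to the tail probability $\rhoTX\left( \family{ x : w(x) \ge D } \right)$, and applies Markov with exponent $p(m-1)$, which is what produces the exact stated constant. However, that first step is not an identity: since $\dd\rhoTX = w \, \dd\rhoSX$, one actually has $\int_{\calX} ( w - \wD )^{2} \, \dd\rhoSx = \int_{\calX} w(x) \left( 1 - \wD(x)/w(x) \right)^{2} \dd\rhoTx$, and on $\family{ w \ge D }$ with $D \ge 1$ the dropped factor $w$ is at least $1$, so the tail probability does not dominate the left-hand side. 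Handling that extra $w$ by Markov or by the layer-cake formula reproduces exactly the factor of $D$ (or, after interpolation, the constant $(m+1)L$) that you ran into. In short, the bookkeeping obstacle you flagged is a genuine feature of the statement, not of your method; the defensible conclusion is the bound with the additional factor $(m+1)L$, which your interpolation argument delivers and which suffices for every downstream use.
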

\begin{proof}
    By definition, \( \wD(x) = \min \{ w(x), D \} \). Direct computation yields:
    \begin{align*}
        \int_{\calX} ( \, w(x) - \wD(x) \, )^{2} \, \dd \rhoSx
         & = \int_{\calX} \LE( 1 - \frac{\wD(x)}{w(x)} \RI)^{2} \, \dd \rhoTx
        = \int_{\family{x: w(x) \ge D}} \LE( 1 - \frac{D}{w(x)} \RI)^{2} \, \dd \rhoTx \\
         & \le \int_{\family{x: w(x) \ge D}} 1 \, \dd \rhoTx
        = \rhoTX \LE( \family{x: w(x) \ge D} \RI).
    \end{align*}
    Applying Markov's inequality and invoking \aspref{asp:ratio} gives:
    \begin{align*}
        \rhoTX \LE( \family{x: w(x) \ge D} \RI)
         & \le D^{-p(m-1)} \int_{\calX} w^{p(m-1)}(x) \, \dd \rhoTx
        \le \LE( D^{-(m-1)} \frac{1}{2} m! L^{m-2} \sigma^{2} \RI)^{p}.
    \end{align*}
    This completes the proof.
\end{proof}

To establish \proref{pro:(T) LKlamD^1/2 hLKlamD^-1/2}, we apply \lemref{lem:(T) w - wd} to derive the following operator norm bound:
\begin{lemma}
    \label{lem:(T) (LK - LKD) LKlam^-1}
    Assume \aspref{asp:ratio} holds with \( p \in [1, \infty) \). Then:
    \[
        \norm{ (\LK - \LKD) \, \LKlam^{-1} }
        \le \kappa \lambda^{-1/2} \cdot \calN^{1/2}(\lambda) \cdot \LE( D^{-(m-1)} \frac{1}{2} m! L^{m-2} \sigma^{2} \RI)^{p/2},
        \quad \forall m \ge 2.
    \]
\end{lemma}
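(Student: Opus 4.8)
The plan is to reduce to the adjoint operator and then apply Cauchy--Schwarz with a split designed so that the \emph{tail probability} $\rhoTX(\{w \ge D\})$ is what gets extracted; this tail is then controlled exactly as in the proof of \lemref{lem:(T) w - wd}.

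First I would record the explicit form of the operator difference. Since $\LK f = \int_{\calX} f(x) K(\cdot, x) w(x) \, \dd \rhoSx$ and $\LKD f = \int_{\calX} f(x) K(\cdot, x) \wD(x) \, \dd \rhoSx$, the difference is $(\LK - \LKD) f = \int_{\calX} f(x) \, (w(x) - \wD(x)) \, K(\cdot, x) \, \dd \rhoSx$, with $w - \wD = (w - D)_{+} \ge 0$. Both $\LK - \LKD$ and $\LKlam^{-1}$ are self-adjoint on $\calH$, so $\norm{ (\LK - \LKD) \, \LKlam^{-1} } = \norm{ \LKlam^{-1} \, (\LK - \LKD) }$, and it suffices to bound the latter. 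Fix $f \in \calH$ with $\norm{f}_{\calH} \le 1$. Moving $\LKlam^{-1}$ inside the Bochner integral and using the triangle inequality gives $\norm{ \LKlam^{-1} (\LK - \LKD) f }_{\calH} \le \int_{\calX} (w - \wD)(x) \, |f(x)| \, \norm{ \LKlam^{-1} K(\cdot, x) }_{\calH} \, \dd \rhoSx$. I would then use $\norm{ \LKlam^{-1} K(\cdot, x) }_{\calH} \le \lambda^{-1/2} \norm{ \LKlam^{-1/2} K(\cdot, x) }_{\calH}$ (from $\norm{ \LKlam^{-1/2} } \le \lambda^{-1/2}$), split $w - \wD = (w - \wD)^{1/2} (w - \wD)^{1/2}$, and apply Cauchy--Schwarz with respect to $\rhoSX$.

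The two resulting factors are handled as follows. For the first, $\int_{\calX} (w - \wD) f^{2} \, \dd \rhoSx \le \kappa^{2} \norm{f}_{\calH}^{2} \int_{\calX} (w - \wD) \, \dd \rhoSx \le \kappa^{2} \rhoTX(\{ w \ge D \})$, using $\sup_{x} |f(x)| \le \kappa \norm{f}_{\calH}$ together with $\int_{\calX} (w - \wD) \, \dd \rhoSx = \int_{\{w \ge D\}} (1 - D/w) \, \dd \rhoTx \le \rhoTX(\{ w \ge D \})$. For the second, $\int_{\calX} (w - \wD) \norm{ \LKlam^{-1/2} K(\cdot, x) }_{\calH}^{2} \, \dd \rhoSx \le \int_{\calX} w \, \norm{ \LKlam^{-1/2} K(\cdot, x) }_{\calH}^{2} \, \dd \rhoSx = \int_{\calX} \norm{ \LKlam^{-1/2} K(\cdot, x) }_{\calH}^{2} \, \dd \rhoTx = \calN(\lambda)$, invoking \eqref{eq:int N(lambda)}. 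Combining yields $\norm{ (\LK - \LKD) \, \LKlam^{-1} } \le \kappa \lambda^{-1/2} \calN^{1/2}(\lambda) \, \rhoTX(\{ w \ge D \})^{1/2}$. Finally, exactly as in the proof of \lemref{lem:(T) w - wd}, Markov's inequality and \aspref{asp:ratio} give $\rhoTX(\{ w \ge D \}) \le D^{-p(m-1)} \int_{\calX} w^{p(m-1)}(x) \, \dd \rhoTx \le ( D^{-(m-1)} \tfrac{1}{2} m! L^{m-2} \sigma^{2} )^{p}$, and taking square roots completes the argument.

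The only subtlety, and what I expect to be the point one must get right, is the \emph{choice of split} in Cauchy--Schwarz: one must bound the $f$-factor through $\sup_{x} |f(x)|$ paired with $\int (w - \wD) \, \dd \rhoSx$, rather than through $\int w f^{2} \, \dd \rhoSx = \norm{f}_{\rhoTX}^{2} \le \kappa^{2}$, since only the former retains the smallness factor $\rhoTX(\{ w \ge D \})$; simultaneously, the other factor must be routed through $\int w \, \norm{ \LKlam^{-1/2} K(\cdot, x) }_{\calH}^{2} \, \dd \rhoSx$ so that it collapses to $\calN(\lambda)$. Everything else is routine operator-norm bookkeeping.
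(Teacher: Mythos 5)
Your proof is correct and reaches exactly the stated bound; the skeleton (write $\LK-\LKD$ as a Bochner integral against $(w-\wD)\,\dd\rhoSx$, pull out $\kappa\lambda^{-1/2}$, Cauchy--Schwarz, then Markov plus \aspref{asp:ratio}) is the same as the paper's. The one genuine difference is the Cauchy--Schwarz split. The paper keeps the full factor $w-\wD$ in one slot, producing $\LE(\int_{\calX}(w-\wD)^{2}\,\dd\rhoSx\RI)^{1/2}$, which it controls via \lemref{lem:(T) w - wd}, and leaves $\LE(\int_{\calX}\norm{\LKlam^{-1/2}K(\cdot,x)}_{\calH}^{2}\,\dd\rhoSx\RI)^{1/2}$ in the other slot, which it identifies with $\calN^{1/2}(\lambda)$. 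You instead split $(w-\wD)^{1/2}\cdot(w-\wD)^{1/2}$, so that one slot yields the tail probability $\rhoTX(\family{w\ge D})$ (via $\sup_{x}|f(x)|\le\kappa$ and $\int_{\calX}(w-\wD)\,\dd\rhoSx\le\rhoTX(\family{w\ge D})$) and the other yields $\int_{\calX}(w-\wD)\norm{\LKlam^{-1/2}K(\cdot,x)}_{\calH}^{2}\,\dd\rhoSx\le\int_{\calX}\norm{\LKlam^{-1/2}K(\cdot,x)}_{\calH}^{2}\,\dd\rhoTx=\calN(\lambda)$. Your routing is arguably the cleaner of the two: the weight $w-\wD\le w$ converts the $\rhoSX$-integral into exactly the $\rhoTX$-integral for which identity \eqref{eq:int N(lambda)} is proved (the paper's version invokes that identity for a $\rhoSX$-integral, which strictly speaking requires an extra step), and your tail-probability bound via Markov is self-contained, bypassing \lemref{lem:(T) w - wd} entirely. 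Both splits land on the same constant because $\rhoTX(\family{w\ge D})$ and $\int_{\calX}(w-\wD)^{2}\,\dd\rhoSx$ are bounded by the same quantity $\LE(D^{-(m-1)}\frac{1}{2}m!L^{m-2}\sigma^{2}\RI)^{p}$.
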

\begin{proof}
    Beginning with the operator norm definition:
    \begin{align*}
        \norm{ (\LK - \LKD) \, \LKlam^{-1} }
         & = \sup_{\norm{ f }_{\calH} \le 1} \norm{ (\LK - \LKD) \, \LKlam^{-1} \, f }_{\calH}                                                   \\
         & = \sup_{\norm{ f }_{\calH} \le 1} \norm{ \int_{\calX} ( \, w(x) - \wD(x) \, ) \, \Kx \Kxa \, \LKlam^{-1} \, f \, \dd \rhoSx }_{\calH} \\
         & \le \sup_{\norm{ f }_{\calH} \le 1} \int_{\calX} ( \, w(x) - \wD(x) \, ) \norm{ \Kx \Kxa \, \LKlam^{-1} \, f }_{\calH} \, \dd \rhoSx.
    \end{align*}
    For any \( f \) with \( \norm{ f }_{\calH} \le 1 \), we bound:
    \begin{align*}
        \norm{ \Kx \Kxa \, \LKlam^{-1} \, f }_{\calH}
         & = \norm{ \inner{ \LKlam^{-1} \, f, K(\cdot, x) }_{\calH} K(\cdot, x) }_{\calH}
        = \norm{ \inner{ f, \LKlam^{-1} \, K(\cdot, x) }_{\calH} K(\cdot, x) }_{\calH}                                 \\
         & \le \norm{ f }_{\calH} \cdot \norm{ \LKlam^{-1} \, K(\cdot, x) }_{\calH} \cdot \norm{ K(\cdot, x) }_{\calH}
        \le \kappa \norm{ \LKlam^{-1} \, K(\cdot, x) }_{\calH}.
    \end{align*}
    Furthermore,
    \[
        \norm{ \LKlam^{-1} \, K(\cdot, x) }_{\calH}
        \le \norm{ \LKlam^{-1/2} } \cdot \norm{ \LKlam^{-1/2} \, K(\cdot, x) }_{\calH}
        \le \lambda^{-1/2} \norm{ \LKlam^{-1/2} \, K(\cdot, x) }_{\calH}.
    \]
    Returning to the main bound and applying the Cauchy-Schwarz inequality:
    \begin{align*}
         & \eqspace \norm{ (\LK - \LKD) \, \LKlam^{-1} }                                                                                                                                                             \\
         & \le \kappa \lambda^{-1/2} \cdot \int_{\calX} ( \, w(x) - \wD(x) \, ) \, \norm{ \LKlam^{-1/2} \, K(\cdot, x) }_{\calH} \, \dd \rhoSx                                                                       \\
         & \le \kappa \lambda^{-1/2} \cdot \LE( \int_{\calX} \norm{ \LKlam^{-1/2} \, K(\cdot, x) }_{\calH}^{2} \, \dd \rhoSx \RI)^{1/2} \cdot \LE( \int_{\calX} ( \, w(x) - \wD(x) \, )^{2} \, \dd \rhoSx \RI)^{1/2} \\
         & \le \kappa \lambda^{-1/2} \cdot \calN^{1/2}(\lambda) \cdot \LE( D^{-(m-1)} \frac{1}{2} m! L^{m-2} \sigma^{2} \RI)^{p/2},
    \end{align*}
    where the last inequality uses \lemref{lem:(T) w - wd} and the identity \eqref{eq:int N(lambda)}:
    \[
        \int_{\calX} \norm{ \LKlam^{-1/2} \, K(\cdot, x) }_{\calH}^{2} \, \dd \rhoSx
        = \calN(\lambda).
    \]
    This completes the proof.
\end{proof}

Following the methodology of \proref{pro:hLKlam^1/2 psilam(hLKlam) flam}, we bound \( \norm{ \LK - \hLKD } \) to prove \proref{pro:(T) hLKlamD^1/2 psilam(hLKlamD) flam}:
\begin{lemma}
    \label{lem:(T) LK - hLKD}
    Suppose that the density ratio \( w \) satisfies \aspref{asp:ratio} with \( p \in [1, \infty) \). Then for any \( \delta \in (0, 1) \), with probability at least \( 1 - \delta \):
    \[
        \norm{ \LK - \hLKD }
        \le \kappa^{2} \LE( D^{-(m-1)} \frac{1}{2} m! L^{m-2} \sigma^{2} \RI)^{p} + 4 \kappa^{2} \LE( \frac{L}{n} + \frac{\sigma}{\sqrt{n}} \RI) \log \frac{6}{\delta},
        \quad \forall m \ge 2.
    \]
\end{lemma}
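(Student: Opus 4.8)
The plan is to split the target via the triangle inequality into a deterministic \emph{truncation‑bias} term and a random \emph{concentration} term,
\[
    \norm{ \LK - \hLKD } \le \norm{ \LK - \LKD } + \norm{ \LKD - \hLKD },
\]
and bound each piece separately. Here I use that, with all expectations taken over \( x \sim \rhoSX \) as in the rest of the Appendix, \( \LK = \E{ w(x) \, \Kx \Kxa } \), \( \LKD = \E{ \wD(x) \, \Kx \Kxa } \), and \( \hLKD = \frac{1}{n} \sum_{i=1}^{n} \xi(x_{i}) \) with the i.i.d.\ rank‑one operators \( \xi(x) = \wD(x) \, \Kx \Kxa \), so that \( \E{ \xi } = \LKD \).

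For the bias term, since \( w - \wD = ( w - D ) \, \ind{w(x) \ge D} \le w \, \ind{w(x) \ge D} \) pointwise and \( \norm{ \Kx \Kxa } \le \norm{ \Kx \Kxa }_{\HS} = K(x, x) \le \kappa^{2} \), I would estimate
\[
    \norm{ \LK - \LKD }
    \le \kappa^{2} \int_{\calX} ( \, w(x) - \wD(x) \, ) \, \dd \rhoSx
    \le \kappa^{2} \int_{\calX} w(x) \, \ind{w(x) \ge D} \, \dd \rhoSx
    = \kappa^{2} \, \rhoTX \LE( \family{ x : w(x) \ge D } \RI),
\]
using \( \dd \rhoTX = w \, \dd \rhoSX \). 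Markov's inequality applied to \( w^{p(m-1)} \), followed by \aspref{asp:ratio}, then yields for every \( m \ge 2 \)
\[
    \rhoTX \LE( \family{ x : w(x) \ge D } \RI)
    \le D^{-p(m-1)} \int_{\calX} w^{p(m-1)}(x) \, \dd \rhoTx
    \le \LE( D^{-(m-1)} \frac{1}{2} m! L^{m-2} \sigma^{2} \RI)^{p},
\]
which is exactly the first term of the claim (this mirrors the tail computation in \lemref{lem:(T) w - wd}).

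For the concentration term I would apply \lemref{lem:bernstein} and the remark following it to \( \xi \). The only mildly delicate point is that the moment bound must \emph{not} acquire a factor of \( D \): using \( \wD \le w \) so that \( ( \wD(x) )^{m} \le w^{m}(x) \), together with \( \norm{ \Kx \Kxa }_{\HS} \le \kappa^{2} \) and the change of measure,
\[
    \E{ \norm{ \xi }_{\HS}^{m} }
    \le \kappa^{2m} \int_{\calX} ( \, \wD(x) \, )^{m} \, \dd \rhoSx
    \le \kappa^{2m} \int_{\calX} w^{m}(x) \, \dd \rhoSx
    = \kappa^{2m} \int_{\calX} w^{m-1}(x) \, \dd \rhoTx,
\]
and then Hölder's inequality with \( 1/p + 1/q = 1 \) (whence \( \int_{\calX} 1 \, \dd \rhoTx = 1 \)) plus \aspref{asp:ratio} give \( \int_{\calX} w^{m-1} \, \dd \rhoTx \le ( \int_{\calX} w^{p(m-1)} \, \dd \rhoTx )^{1/p} \le \frac{1}{2} m! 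L^{m-2} \sigma^{2} \). Hence \( \E{ \norm{ \xi }_{\HS}^{m} } \le \frac{1}{2} m! ( \kappa^{2} L )^{m-2} ( \kappa^{2} \sigma )^{2} \), so \lemref{lem:bernstein} applies with \( \tilde{L} = \kappa^{2} L \), \( \tilde{\sigma} = \kappa^{2} \sigma \), giving with probability at least \( 1 - \delta \)
\[
    \norm{ \LKD - \hLKD } \le \norm{ \LKD - \hLKD }_{\HS}
    \le 4 \kappa^{2} \LE( \frac{L}{n} + \frac{\sigma}{\sqrt{n}} \RI) \log \frac{2}{\delta}
    \le 4 \kappa^{2} \LE( \frac{L}{n} + \frac{\sigma}{\sqrt{n}} \RI) \log \frac{6}{\delta},
\]
since \( \log \frac{6}{\delta} \ge \log \frac{2}{\delta} \) for \( \delta \in (0, 1) \). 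Adding the two bounds on the same event completes the proof. The main obstacle is precisely this moment estimate: a naive bound \( \wD \le D \) would leave a spurious \( D \) in the variance proxy, whereas keeping \( \wD \le w \) and absorbing one power of \( w \) into \( \dd \rhoSX \to \dd \rhoTX \) recovers the truncation‑free rate of \lemref{lem:hLK - LK}; the remainder is routine.
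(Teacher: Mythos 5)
Your proof is correct and follows the same overall decomposition as the paper: split off the deterministic truncation bias $\norm{ \LK - \LKD }$ from the concentration term $\norm{ \LKD - \hLKD }$, and handle the latter by adapting \lemref{lem:hLK - LK} using $\wD \le w$ (your explicit moment computation is exactly the adaptation the paper invokes, and your care to avoid a spurious factor of $D$ is the right point to emphasize). The one place you genuinely diverge is the bias term: the paper applies Cauchy--Schwarz together with the $L^{2}$ bound of \lemref{lem:(T) w - wd}, which yields the exponent $p/2$, whereas you bound $w - \wD \le w \, \ind{w \ge D}$ directly and convert to the tail mass $\rhoTX \LE( \family{ x : w(x) \ge D } \RI)$, which yields the exponent $p$. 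Since the truncation quantity $D^{-(m-1)} \tfrac{1}{2} m! L^{m-2} \sigma^{2}$ is small in the relevant regime, your bound is the sharper one, and it is in fact the only one of the two arguments that matches the exponent $p$ appearing in the lemma statement (the paper's own derivation, as written, only establishes the weaker $p/2$ version). So your proposal not only works but repairs a small inconsistency between the paper's statement and its proof.
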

\begin{proof}
    Decompose the norm as \( \norm{ \LK - \hLKD } \le \norm{ \LK - \LKD } + \norm{ \LKD - \hLKD } \). For the first term:
    \[
        \begin{aligned}
            \norm{ \LK - \LKD }
             & = \norm{ \int_{\calX} ( \, w(x) - \wD(x) \, ) \, \Kx \Kxa \, \dd \rhoSx }                                                     \\
             & \le \int_{\calX} ( \, w(x) - \wD(x) \, ) \norm{ \Kx \Kxa } \, \dd \rhoSx                                                      \\
             & \le \LE( \int_{\calX} \norm{ \Kx \Kxa }^{2} \RI)^{1/2} \LE( \int_{\calX} ( \, w(x) - \wD(x) \, )^{2} \, \dd \rhoSx \RI)^{1/2} \\
             & \le \kappa^{2} \LE( D^{-(m-1)} \frac{1}{2} m! L^{m-2} \sigma^{2} \RI)^{p/2}.
        \end{aligned}
    \]
    The second inequality follows from Cauchy-Schwarz, and the final bound uses \lemref{lem:(T) w - wd} and the fact that
    \[
        \norm{ \Kx \Kxa } \le \Tr{ \Kx \Kxa } = \Tr{ \Kxa \Kx } = K(x, x) \le \kappa^{2}.
    \]
    For the second term, since \( \wD(x) \le w(x) \), we adapt the proof of \lemref{lem:hLK - LK} to obtain:
    \[
        \norm{ \LKD - \hLKD }
        \le 4 \kappa^{2} \LE( \frac{L}{n} + \frac{\sigma}{\sqrt{n}} \RI) \log \frac{6}{\delta}
    \]
    with probability \( 1 - \delta \). Combining both bounds yields the result.
\end{proof}

\end{document}